\newcommand{\ra}[1]{\renewcommand{\arraystretch}{#1}}
\renewcommand{\arraystretch}{1.25}
\newtheorem{theorem}{Theorem}
\newtheorem{definition}[theorem]{Definition}
\newtheorem{assumption}[theorem]{Assumption}
\newtheorem{corollary}[theorem]{Corollary}
\newtheorem{lemma}[theorem]{Lemma}
\newtheorem{proposition}[theorem]{Proposition}
\theoremstyle{remark}
\newtheorem{remark}[theorem]{Remark}
\theoremstyle{definition}
\newenvironment{example}
  {\pushQED{\qed}\examplex}
  {\popQED\endexamplex}
 \newcommand{\eps}{\varepsilon}
 \renewcommand{\phi}{\varphi}
\newcommand{\pp}{\mathcal P}
\newcommand{\N}{\mathbb{N}}
\newcommand{\R}{\mathbb{R}}
\newcommand{\Z}{\mathbb{Z}}
\newcommand{\bes}{\begin{subequations}}
\newcommand{\ees}{\end{subequations}}
\newcommand{\eea}{\end{eqnarray}}
\newcommand{\RR}{{\mathbb R}}
\renewcommand{\eps}{\varepsilon}
\renewcommand{\epsilon}{\varepsilon}
\newcommand{\fourIdx}[5]{%
\setbox1=\hbox{\ensuremath{^{#1}}}%
 \setbox2=\hbox{\ensuremath{_{#2}}}%
 \setbox5=\hbox{\ensuremath{#5}}%
 \hspace{\ifnum\wd1>\wd2\wd1\else\wd2\fi}%
 \ensuremath{\copy5^{\hspace{-\wd1}\hspace{-\wd5}#1\hspace{\wd5}#3}%
 _{\hspace{-\wd2}\hspace{-\wd5}#2\hspace{\wd5}#4}%
 }}
\numberwithin{equation}{section}
\numberwithin{theorem}{section}
\renewcommand{\subset}{\subseteq}
\renewcommand{\mathrm}{}
\newcommand{\mylabel}[2]{#2\def\@currentlabel{#2}\label{#1}}
\def\fcmp{\mathbin{\raise 0.6ex\hbox{\oalign{\hfil$\scriptscriptstyle \mathrm{o}$\hfil\cr\hfil$\scriptscriptstyle\mathrm{9}$\hfil}}}}
\newcommand{\Law}{\mathscr L}
\newcommand{\nn}{\mathbb{N}}
\newcommand{\rr}{\mathbb{R}}
\newcommand{\rrd}{\mathbb{R}^d}
\newcommand{\rrflex}[1]{{\ensuremath{\mathbb{R}^{#1}}}}
\newcommand{\zz}{\mathbb{Z}}
\newcommand{\kkk}{\mathscr{K}}
\newcommand{\xxx}{\mathscr{X}}
\newcommand{\yyy}{\mathscr{Y}}
\NewDocumentCommand{\NN}{oo}{
    \ensuremath{
        \mathcal{NN}
        \IfValueT{#1}{_{#1}}\IfValueF{#1}{_{[d]}}
        \IfValueT{#2}{^{#2}}\IfValueF{#2}{^{\sigma}}
    }
}
\NewDocumentCommand{\GT}{oo}{
    \ensuremath{
        \mathcal{GT}
        \IfValueT{#1}{_{#1}}\IfValueF{#1}{_{[d],N,q}}
        \IfValueT{#2}{^{#2}}\IfValueF{#2}{^{\sigma}}
        \left(
        \rr^d
            ,
        \yyy
        \right)
    }
}
\newcommand{\xb}{\boldsymbol{x}}
\definecolor{darkcyan}{rgb}{0.0, 0.55, 0.55}
\definecolor{MidnightBlue}{RGB}{25,25,112}
\definecolor{MidnightBlueComplementingGreen}{RGB}{25,112,25}
\definecolor{MidnightBlueComplementingPurple}{RGB}{112,25,112}
\definecolor{MidnightBlueComplementingRed}{RGB}{112,25,69}
\definecolor{WowColor}{rgb}{.75,0,.75}
\definecolor{MildlyAlarming}{rgb}{0.85,0.25,0.1}
\definecolor{SubtleColor}{rgb}{0,0,.50}
\definecolor{antiquefuchsia}{rgb}{0.57, 0.36, 0.51}
\definecolor{fashionfuchsia}{rgb}{0.96, 0.0, 0.63}
\definecolor{jade}{rgb}{0.0, 0.66, 0.42}
\definecolor{caribbeangreen}{rgb}{0.0, 0.8, 0.6}
\definecolor{aquamarine}{rgb}{0.5, 0.8, 0.85}
\definecolor{attentioncolor}{RGB}{152,90,81}
\definecolor{burgred}{RGB}{40,3,22}
\definecolor{AnnieGreen}{RGB}{17,123,92}
\definecolor{Turquoise}{RGB}{64,224,208}
\definecolor{darkjade}{RGB}{0,122,84}
\definecolor{Window1}{RGB}{92,150,31}%
    \definecolor{Window1dark}{RGB}{41,67,13}%
\definecolor{Window2}{RGB}{255,168,28}
    \definecolor{Window2dark}{RGB}{114,75,12}
\definecolor{Window3}{RGB}{255,96,33}
    \definecolor{Window3dark}{RGB}{97,36,12}
\definecolor{InputColor}{RGB}{20,255,177}
    \definecolor{InputColorlight}{RGB}{222,237,229}
\NewDocumentCommand{\Bea}{mo}{
    \IfValueF{#2}{
    %%%%% NO Margin Note
                        {{
                        %\scriptsize
                            \textcolor{magenta}{ 
                            \textbf{B:}
                            \textit{{#1}}
                            }
                        }}
        }
    %%%% Margin Note
    \IfValueT{#2}{
                        \marginnote{{\scriptsize
                            \textcolor{blue}{ 
                            \textbf{B:}
                            \textit{{#1}}
                            }
                        }}
        }
                    }
\NewDocumentCommand{\Gudi}{mo}{
    \IfValueF{#2}{
    %%%%% NO Margin Note
                        {{
                        %\scriptsize
                            \textcolor{caribbeangreen}{ 
                            \textbf{G:}
                            \textit{{#1}}
                            }
                        }}
        }
    %%%% Margin Note
    \IfValueT{#2}{
                        \marginnote{{\scriptsize
                            \textcolor{caribbeangreen}{ 
                            \textbf{G:}
                            \textit{{#1}}
                            }
                        }}
        }
                    }
\NewDocumentCommand{\Annie}{mo}{
    \IfValueF{#2}{
    %%%%% NO Margin Note
                        {{
                        %\scriptsize
                            \textcolor{jade}{ 
                            \textbf{A:}
                            \textit{{#1}}
                            }
                        }}
        }
    %%%% Margin Note
    \IfValueT{#2}{
                        \marginnote{{\scriptsize
                            \textcolor{jade}{ 
                            \textbf{A:}
                            \textit{{#1}}
                            }
                        }}
        }
                    }
            \NewDocumentCommand{\AnnieSuggestion}{mo}{
    \IfValueF{#2}{
    %%%%% NO Margin Note
                        {{
                        %\scriptsize
                            \textcolor{jade}{ 
                            % \textbf{A:}
                            \textit{{#1}}
                            }
                        }}
        }
    %%%% Margin Note
    \IfValueT{#2}{
                        \marginnote{{\scriptsize
                            \textcolor{jade}{ 
                            % \textbf{A:}
                            \textit{{#1}}
                            }
                        }}
        }
                    }
\NewDocumentCommand{\AtoBpG}{mo}{
    \IfValueF{#2}{
    %%%%% NO Margin Note
                        {{\scriptsize
                            \textcolor{blue}{ 
                            \textbf{$A\rightsquigarrow G+B$:}
                            \textit{{#1}}
                            }
                        }}
        }
            
    %%%% Margin Note
    \IfValueT{#2}{
                        \marginnote{{\scriptsize
                            \textcolor{blue}{ 
                            \textbf{$A\rightsquigarrow G+B$:}
                            \textit{{#1}}
                            }
                        }}
        }
                    }
\definecolor{deepjunglegreen}{rgb}{0.0, 0.29, 0.29}
\NewDocumentCommand{\AtoA}{mo}{
    \IfValueF{#2}{
    %%%%% NO Margin Note
                        {{\scriptsize
                            \textcolor{deepjunglegreen}{ 
                            \textbf{$A\rightsquigarrow A$:}
                            \textit{{#1}}
                            }
                        }}
        }
        
    %%%% Margin Note
    \IfValueT{#2}{
                        \marginnote{{\scriptsize
                            \textcolor{deepjunglegreen}{ 
                            \textbf{$A\rightsquigarrow A$:}
                            \textit{{#1}}
                            }
                        }}
        }
                    }
\NewDocumentCommand{\AtoBpGpA}{mo}{
    \IfValueF{#2}{
    %%%%% NO Margin Note
                        {{\scriptsize
                            \textcolor{purple}{ 
                            \textbf{$A\rightsquigarrow G+B+A$:}
                            \textit{{#1}}
                            }
                        }}
        }
        
    %%%% Margin Note
    \IfValueT{#2}{
                        \marginnote{{\scriptsize
                            \textcolor{purple}{ 
                            \textbf{$A\rightsquigarrow G+B+A$:}
                            \textit{{#1}}
                            }
                        }}
        }
                    }
\newcommand{\eqdef}{\ensuremath{
        \overset{
                \scalebox{.5}{\mbox{def.}}
            }{=}
}}
\newcommand{\Ball}{\operatorname{Ball}}
\newcommand{\doesnothaveto}{{does not have to}}
\newcommand{\doesnotneedto}{{does not need to}}
\newcommand{\Holderseminorm}{{$\alpha$-H\"older constant}}
\title{Designing Universal Causal Deep Learning Models: \hfill\\The Geometric (Hyper)Transformer}
\author{Beatrice Acciaio}
\author{Anastasis Kratsios}
\author{Gudmund Pammer}
\begin{document}
\maketitle

\begin{abstract}
Several problems in stochastic analysis are defined through their geometry, and preserving that geometric structure is essential to generating meaningful predictions.  Nevertheless, how to design principled deep learning (DL) models capable of encoding these geometric structures remains largely unknown.  We address this open problem by introducing a universal causal geometric DL framework in which the user specifies a suitable pair of metric spaces $\mathscr{X}$ and $\mathscr{Y}$ and our framework returns a DL model capable of causally approximating any ``regular'' map sending time series in $\mathscr{X}^{\mathbb{Z}}$ to time series in $\mathscr{Y}^{\mathbb{Z}}$ while respecting their forward flow of information throughout time.  Suitable geometries on $\mathscr{Y}$ include various (adapted) Wasserstein spaces arising in optimal stopping problems, a variety of statistical manifolds describing the conditional distribution of continuous-time finite state Markov chains, and all Fr\'{e}chet spaces admitting a Schauder basis, e.g.\ as in classical finance. Suitable spaces $\mathscr{X}$ are compact subsets of any Euclidean space.  Our results all quantitatively express the number of parameters needed for our DL model to achieve a given approximation error as a function of the target map's regularity and the geometric structure both of $\mathscr{X}$ and of $\mathscr{Y}$.  Even when omitting any temporal structure, our universal approximation theorems are the first guarantees that H\"{o}lder functions, defined between such $\mathscr{X}$ and $\mathscr{Y}$ can be approximated by DL models.  
\end{abstract}

\noindent
\textbf{Keywords:} Geometric Deep Learning, Universal Approximation, Transformer Networks, Hypernetworks, %
Adapted Optimal Transport, Stochastic Processes, Random projection, Metric Geometry. 

\textbf{MSC:} 
Artificial Neural Networks and Deep Learning (68T07), Optimal Transportation (49Q22), Abstract Approximation Theory (41A65), Analysis on Metric Spaces (30L99), Prediction Theory (60G25), Computational Methods for Stochastic Equations (60H35).

\section{Introduction}
\label{s_Introduction}
Due to breakthroughs in machine learning, optimization, and computing hardware, the last half-decade has seen a paradigm shift in many areas of applied mathematics, moving away from model-based approaches to model-free methods.  This is most apparent in computational stochastic analysis and mathematical finance, where deep learning has unlocked previously intractable problems.  Examples include computation of optimal hedges under market frictions and possibly rough volatility \cite{DeepHedging_2019,DeepHedging_Follow_Ups_QF,gier20,DeepHedging_Blanka_2021}, numerical implementation of complicated local stochastic volatility models \cite{CKT20}, numerical solutions to previously intractable principal-agent problems  \cite{PrincipalAgentProblems_Jaimungal_2021}, pricing of derivatives relying on optimal stopping rules written on high-dimensional portfolios \cite{Deep_Optimal_Stopping_OG_2019,Deep_Optimal_Stopping_New_Cheriditor_Jentzen_Becker_2021,Deep_Optimal_Stopping_Randomized_TeichmanKrachHerrera}, data-driven prediction of price formation using ultra-high dimensional limit orderbook data \cite{Deep_LOB_Sirignano_Cont_2019,Deep_LOB_New_2019}.

These deep learning-based methods are appealing not only because of their empirical success, but they are equally theoretically founded and are known to be able to approximately implement any ``reasonable'' function.  This latter feature of deep neural networks is known as their \textit{universal approximation property} and is the focal topic of this paper, in the context of  non-anticipative functions between discrete-time path spaces.  The universal approximation capabilities of classical neural network models are well-understood  \cite{hornik1990universal,LESHNO1993861,KidgerLyons2020,Kratsios_NEUTA_2020}.  Nevertheless, little is known about whether or not neural network-based models can be used to approximate general stochastic processes or how to design a deep neural model which could.  

The approximate implementation of ``any'' stochastic process' evolution, conditioned on its realized trajectory, is, of course, central to various areas of applied probability since this would help bridge the gap between abstract theoretical models and algorithmically deployed models.  Though our examples are framed in the context of adapted optimal transport and mathematical finance, various other intersectional areas of machine learning and applied probability theory can utilize the results obtained in the present paper, such as computational signal processing, numerical weather predictions, and many others.  

At this point, we make the leap towards abstraction and observe that the universal approximation of a stochastic process' evolution, conditioned on its realized trajectory, is only a special case of a broader phenomenon which we call a \textit{causal map}.  Briefly, given two metric spaces $\xxx$ and $\yyy$, a causal map $F:\xxx^{\zz}\rightarrow \yyy^{\zz}$ is a function which maps discrete-time paths in $\xxx$ to discrete-time paths in $\yyy$ while respecting the \textit{causal forward-flow of information} in time.
We refer to this as the \textit{dynamic case}
which incorporates a temporal flow into the \textit{static case} $f:\xxx\rightarrow \yyy$.  In the case of stochastic processes, $\xxx=\rr^d$ and $\yyy$ can be thought of as a space of laws of a process on a prespecified number of future steps, such as the Adapted Wasserstein space of \cite{Ruschendor_1985_AdaptedOG}, or a Fr\'{e}chet space of random-vectors such as a local $L^p$-space.  Nevertheless, the analysis we develop here is general enough to cover a broad range of discrete-time paths on suitable metric spaces $\xxx$ and $\yyy$, where $\xxx$ is a subspace of an Euclidean space and $\yyy$ is approximately representable by Euclidean information.

Concisely, our paper's main objective is to approximate any causal map between suitable discrete-time path spaces, which can have an arbitrarily long memory but which isn't overly reliant on the infinite past.  We do so by proposing a new geometric deep learning model, called \textit{geometric hypertransformers (GHTs)}, which naturally adapts to $\yyy$'s (non-Euclidean) geometry, and our main quantitative result can be informally summarized as:
\[
\mbox{\textit{GHTs can approximate any causal map $F:\xxx^\zz\rightarrow \yyy^\zz$ over any time-horizon.}}
\]
We intentionally leave the description of what ``any causal map'' is and in which sense they are ``approximated'' vague at this point.  This is to reflect the \textit{modular nature} of our modelling framework, which can accommodate a broad range of different processes and modes of approximation.  Examples include the approximation of any ``integrable'' stochastic process in a way which respects its adapted flow of information in the sense of \cite{Ruschendor_1985_AdaptedOG,BackhoffBartleBeiglebrockEder_2020_AlladatedTopEqual,backhoff2020adapted},
any square-integrable martingale in the Martingale-Hardy sense \citep{Weisz_1994_MartingaleHardySpaceFourier}, 
a broad range of minimal parametric models in the sense of information geometry \cite{amari2016information,ay2015information,AyJost_2017_InformationGeometry} but cast in the stochastic analysis setting, and the extreme but classical case of  discrete-time dynamical systems (which we understand as deterministic processes).

Our neural network model, illustrated in Figure~\ref{fig_metric_hypertransformer_TIKZ}, emulates 
$F$ by only flowing information forward in time.  The vertically-placed \emph{black boxes} in Figure~\ref{fig_metric_hypertransformer_TIKZ} are $\yyy$-valued counterparts of the \textit{transformer networks} of \cite{vaswani2017attention}, and variants of the probabilistic transformer networks of \cite{AB_2021,kratsios2021universal,kratsios2022small}.  Transformer networks are particularly appealing since, unlike recurrent neural networks, they can automatically process any input sequence without any recursion, which makes them much faster and more stable to train.  
Two advantages that transformers have over their recurrent neural network (RNN) \cite{RumelhardIntonWilliams_1986_RNNs_Introduction_OG} predecessors are that they avoid recursion and they learn how to encode any inputs before decoding them as predictions.   This allows transformers to avoid lengthy and unstable training, and it gives them the flexibility to focus on different features of a path without overemphasizing its most recent movements.  Thus, transformers have redefined the state of the art in sequential prediction tasks by effectively replacing LSTMs \cite{HochreiterSchmidhuber_1997_LSTM_OG}.
 
\begin{figure}[ht]%
%
%
%
%%% PDF Version
\centering
\includegraphics[width=1\textwidth]{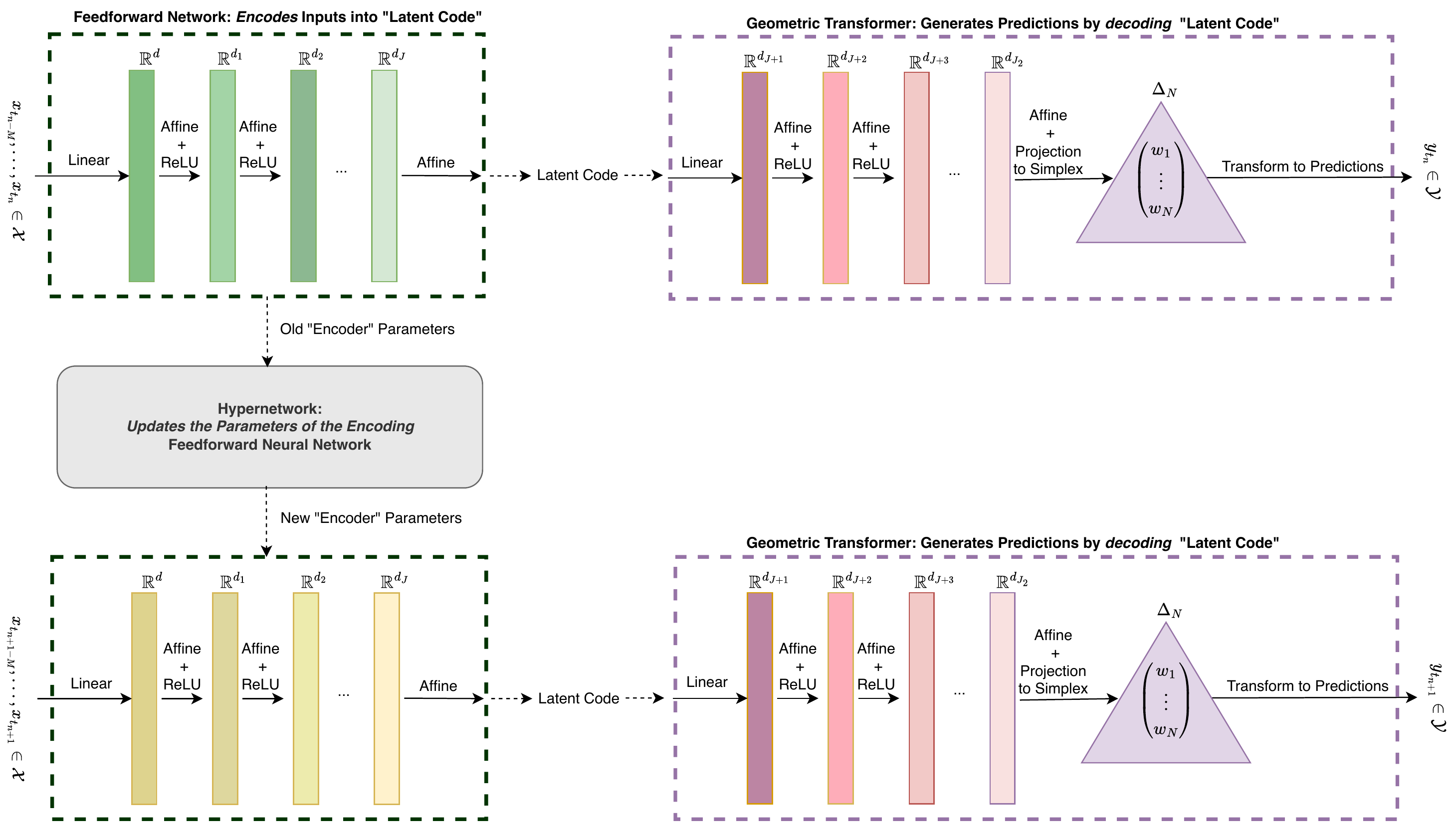}
\caption{
Illustration of the Geometric Hypertransformer (GHT): At every time-step, a feedforward neural network, illustrated by the green and yellow boxes, maps the current time-series segment in $\xxx$ to a ``latent code'' in some Euclidean space $\mathbb{R}^{d_J}$.  The geometric transformer, illustrated by the purple box, then transforms that ``latent code'' into the next prediction on $\yyy$.  Between each prediction the hypernetwork, illustrated by the gray box, updates the feedforward network's hidden weights so that the GHT architecture can adapt to changes in the time-series.}
\label{fig_metric_hypertransformer_TIKZ}
\end{figure}

Since we are not focusing on time-invariant $F$, that is $F$ which commute with time-shifts (i.e. $F(x_{\cdot +1})_{\cdot} = F(x_{\cdot})_{\cdot+1}$), then the parameters defining an GHT  \textit{efficiently} approximating $F$ must progressively update to accommodate changes in $F$. These updates, to our model are depicted in Figure~\ref{fig_metric_hypertransformer_TIKZ} by horizontal arrows are implemented by a very small neural network known as a \textit{hypernetwork} by \cite{ha2016hypernetworks}, acting on our model's parameter space.  Its role is to interpolate the parameters of each of our \textit{geometric transformer networks}, each acting on sequential segments of any input path from time $t_n$ to time $t_{n+1}$ across all $n\in \zz$ (depicted as vertical networks in Figure~\ref{fig_metric_hypertransformer_TIKZ}).  The point here is a quantitative one, namely, it is known that neural networks require far fewer parameters to memorize (or interpolate) a finite set of input and output pairs \citep{YunSraJadbabaie2019GoodMemoryCapacity__Memorization,Memory_Capacity__Memorization,MemoryCapacitySublinear2021SejunParkLeeJaehoChulheeShin__Memorization__Really_Approximation_inthis_paper} than what is required to uniformly approximate a function implementing this memorization.

In the context of stochastic processes, our model can be visualized using Figures~\ref{fig_metric_hypertransformer_TIKZ} and~\ref{fig_adapted_map_Example_visualization}.  Briefly, given a %
(possibly infinite) $d$-dimensional path $\xb$, our GHT model sequentially assimilates long segments of $\xb$ before forecasting the process' distribution over a fixed number\footnote{Here $\nn$ denotes the non-negative integers and $\nn_+$ denotes the positive integers.} $N_F\in \nn_+$ of future steps.  Then, once a new portion of the path is observed, our GHT model's internal parameters are updated, and the next process' law over the next $N_F$ increments, conditioned on the observed path, is predicted.  We illustrate this sequential procedure in Figures~\ref{fig_metric_hypertransformer_TIKZ} and~\ref{fig_adapted_map_Example_visualization} where the future set of steps of the process are colour coded to indicate the network's corresponding weights.

\begin{figure}[ht]%
    \centering
    \includegraphics[width=.40\linewidth]{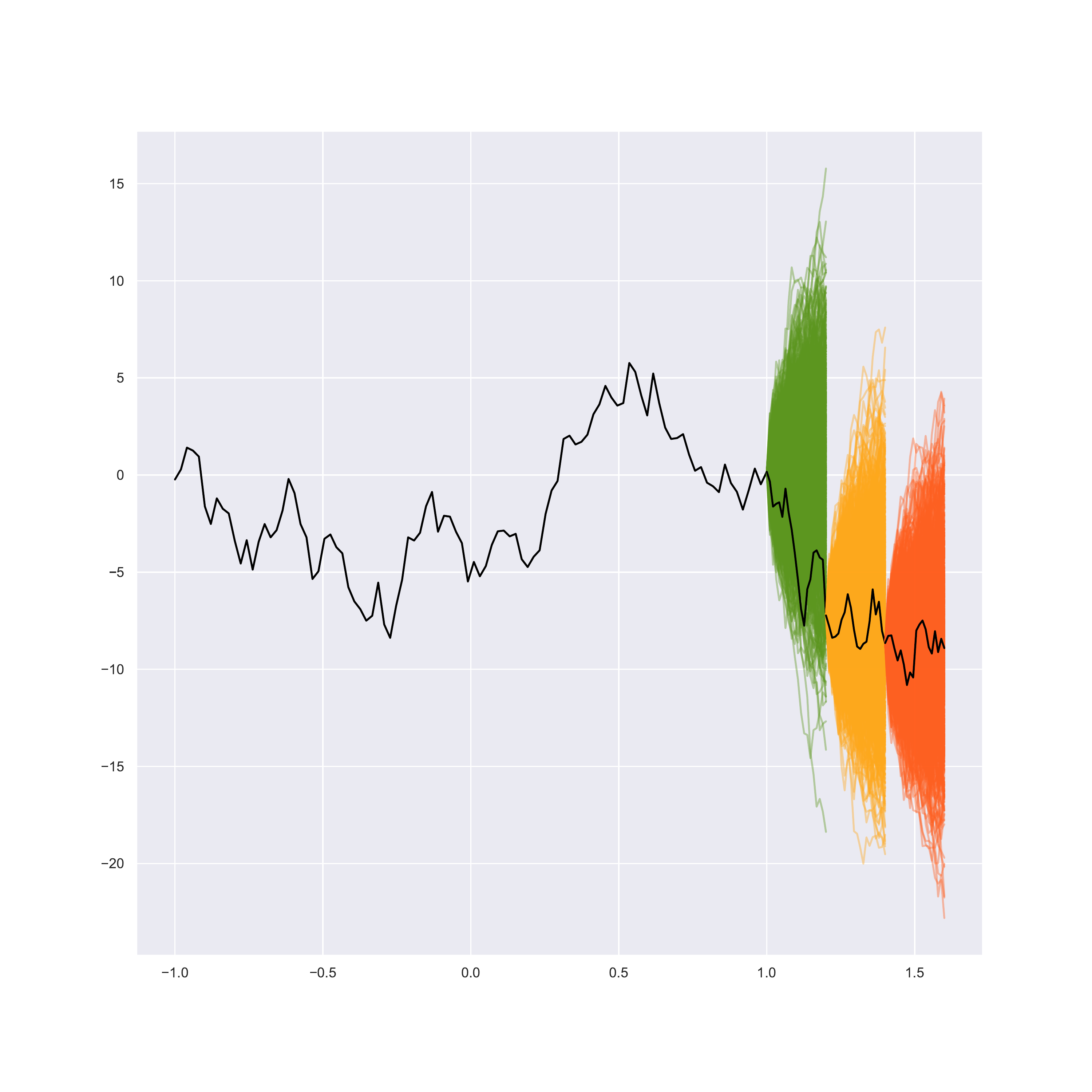}
    \caption{Illustration of a causal map which sends the sequence of realizations of a time-series to the law of its next few steps.  In this depiction, each of the colour represents the output of this causal map evaluated at different instances in time.}
    \label{fig_adapted_map_Example_visualization}
  \end{figure}

From the deep learning side of this interdisciplinary story, and to the best of authors' knowledge, the results presented here are the first approximation theoretic result advocating for the effectiveness of \textit{hypernetworks}.  Thus, we complement the extensive emerging empirical studies on hypernetworks (see \cite{ha2016hypernetworks,zhang2018graph,Oswald2020Continual}).

The hypernetwork defining the GHT weaves together instances of our main model for the static case, which we call \textit{geometric transformers (GTs)}. Even in the static case, we obtain a novel universal approximation theorem which can be summarized as
\[
\mbox{\textit{GTs can approximate any H\"{o}lder function $f:\xxx\rightarrow \yyy$,}}
\]
where $\yyy$ is as above, and $\xxx$ is a subspace of the Euclidean space $\rr^d$.  We emphasize that the output spaces covered by our results are much broader than the theoretically backed neural network architectures available in the literature.  The closest principled sequential deep learning results to ours are the theory of reservoir computers \cite{JPandLyydmila2019,Lukas2020a}, especially  echo-state networks \cite{GRIGORYEVA2018495,GONON202110}, as well as single-layer recurrent neural networks \cite{HUTTERRecepHelmut2021_Ent_Optimal_RNNs_LDSs}, each of which is known to be able to approximate specific classes of generalized discrete-time dynamical systems evolving in Euclidean spaces.

To fully appreciate geometric hypertransformers, let us overview the current state of the art in theoretically-founded modelling of stochastic processes.  
When approximating jump diffusion processes, \cite{gonon2021deepNeuralSDE} showed that solutions to stochastic differential equations whose coefficients are parameterized by neural networks, called \textit{neural SDEs} \cite{ChetRubanovaBettencourtDuvenaud_2018_NeuralODEs_OG,Neural_Jump_SDE_2019,Neural_Jump_SDE_2019,Lyons_2021_NeuralSDEsGANs,Lyons_2021_Neural_SDE}, can approximate any \textit{single} expected path-functional of a Markovian SDE with uniformly bounded Lipschitz coefficients.  Nevertheless, the approximation of the law of any regular SDE remains an open problem.   Alternatively, reservoir computers of \cite{JaegerHass_2004_ESN_OG,MaassNatschlagerMarkram_2002_LSM_OG} have been demonstrated to approximate time-invariant random dynamical systems with possibly infinitely long memory but \textit{fading memory} \cite{Lukas2020,JPandLyydmila2019} and in \cite{CuchieroGononGrigoryevaOrtegaTeichmann_2021_DiscTimeSigReservoir_UAT} using \textit{random signature-based approaches} drawing from rough path theory \cite{LyonsRoughPath_1994}.  

Circling back to the origins of recurrent models, we arrive at the extreme case of deterministic, stochastic processes, by which we mean discrete-time dynamical systems with an arbitrarily long memory. These are typically approximated by using RNNs with the LSTM architecture of \cite{HochreiterSchmidhuber_1997_LSTM_OG}.  Classical RNNs have long been known to have the capacity to approximate any \textit{computable function} \cite{siegelmann1995computational}, and preliminary results indicating that they are universal approximators of any regular dynamical systems on $\rr^d$ were suggested by \cite{schafer2006recurrent_Original_RNNfinite_timehorizon_finite_memory_dynamical_systems_Euclidean__NonRigerous__but_first}.  Nevertheless, it has only recently been shown in \cite{HUTTERRecepHelmut2021_Ent_Optimal_RNNs_LDSs} that classical RNNs, which aren't reservoir computers, can approximate any time-inhomogeneous linear dynamical system on $\rr^d$ with rapidly decaying memory.

Recently, various deep learning models mapping into infinite dimensional Banach spaces have been proposed such as the DeepONets of \cite{LuJinPanZhangKarniadakis_DeepONetNature_2021,LiuYandChenZhaoLiao_DeepOReLUNets_2022New}, the Fourier Neural Operators of \cite{KovachkiLanthalierMishra_FouerierNeuralOperator_OG_2021JMLR}, and the generalized feedforward model of \cite{benth2021neural} generalized to Fr\'{e}chet spaces (in the special case where the space of processes is a Martingale-Hardy space or a similar linear space of semi-Martingales, see \citep[Part IV]{CohenElliot_StochCalAppls_2015}).  Our models can cover approximation of functions taking values in Banach spaces, and, more generally in Fr\'{e}chet spaces.  Most notably our results are not limited to linear spaces and they cover a broad range of non-vectorial metric spaces: Wasserstein spaces, adapted Wasserstein spaces, many spaces arising from information geometry, and many others.  

It is worth noting that simultaneous vertical and horizontal connections have very recently and successfully been proposed in the few-shot image classification literature in the hypertransformer model of \cite{HypertransformerCNN_2021}.  In analogy with Figure~\ref{fig_metric_hypertransformer_TIKZ}, the vertical networks are the standard transformers of \cite{vaswani2017attention} and the horizontal networks are convolutional neural networks \cite{CNNs_LeCun_1089}.

\subsection*{Deep Learning in Mathematical Finance}
To fully gauge the potential impact of our work in mathematical finance, we briefly highlight some of the recent advances made possible by the deep learning methods in finance, in relation to sequential learning.  Recently, many novel neural network-based frameworks have been proposed for market simulation and data-driven model selection.  The ability to accurately model time-series data is critical for numerous applications in the finance industry, and neural networks offer different ways to tackle these problems as alternatives to classical approaches.
In particular, synthetic generation of time-series can be performed in a completely data-based fashion without imposing assumptions on the underlying stochastic dynamics.  The generated data can be used, for example, to facilitate training and validation of models.  Time-series generation has been successfully approached via autoencoders as e.g. in \cite{buehler2020data,wiese2021multi}, and via adversarial generation, as in \cite{kosh2019gen} with classical GANs, in \cite{AK22} with newly developed transport-theoretic techniques, and in \cite{ni2021sig,ni2020conditional} in conjunction with signature method.
Other than direct path generation, neural networks have been employed as function approximators for drift and diffusions of the modelled SDE system, for robust and data-driven model selection mechanisms, as e.g. done in \cite{arribas2020sig,CKT20,gier20,wiese2020quant,cohen2021arbitrage,kidger2021neural,van2021monte}.
\subsection*{Organization of Paper}
Our paper is organized as follows.  Section~\ref{s_Preliminaries} introduces the relevant mathematical and machine learning tools to formulate our geometric deep learning problem.  Section~\ref{s_Static_Case} covers our results in the static case, where we introduce a broad class of non-Euclidean metric spaces $\yyy$ as well as our geometric deep learning model, before showing that our model can approximate any H\"{o}lder function defined on a compact subset $\xxx$ and taking values in $\yyy$.   
Section~\ref{s_Dynamic_Case} then addresses the dynamic case by first introducing causal maps between the discrete-time path spaces $\xxx^{\zz}$ and $\yyy^{\zz}$, and then describing how the recurrent extension of our static model can approximate any causal map whose memory asymptotically vanishes.  Section~\ref{s_Examples} provides examples of spaces and causal maps covered by our framework, in particular relating to stochastic analysis and mathematical finance.  
Section~\ref{s_Proofs} contains all the paper's proofs, auxiliary metric-theoretic definitions, and needed technical analytic lemmas.

\section{Preliminaries}
\label{s_Preliminaries}
\subsection{Background}
\label{s_Preliminaries__ss_Background}
We recognize the interdisciplinary nature of this paper, and therefore, this section covers the necessary background in the areas of optimal transport, deep learning, and metric geometry.  The reader familiar with any of these areas is encouraged to skip the corresponding introductory section. 

\subsubsection{Optimal Transport}
\label{s_Preliminaries__ss_AdaptedOT}
A central feature of optimal transport is its ability to lift the geometry of a base space onto the set of probabilities.
We refer to \cite{OldandNew2010Villani} for a detailed introduction to optimal transport.
Typically for the field,  we write, when $(\xxx, d_\xxx)$ is a metric Polish space, $\mathcal P(\xxx)$ for the set of all Borel probability measures on $\xxx$ which is then equipped with the topology of weak convergence of measures.
When $1 \le p < \infty$, we denote by $\mathcal P_p(\xxx)$ the subset of probabilities that finitely integrate $x \mapsto d_\xxx(x,x_0)^p$ for some (and thus for any) $x_0 \in \xxx$.
Similarly, we equip $\mathcal P_p(\xxx)$ with the Wasserstein $p$-distance $\mathcal W_p$, that is, for $\mu,\nu \in \mathcal P_p(\xxx)$, the metric defined by
\[
    \mathcal W_p(\mu,\nu) 
     \eqdef  
    \inf_{\pi \in \operatorname{Cpl}(\mu,\nu)} 
    \left\{
        \int 
            d_\xxx(x,y)^p
            \,
            \pi(dx,dy)
    \right\}^{1 / p},
\]
where $\operatorname{Cpl}(\mu,\nu)  \eqdef  \{ \pi \in \mathcal P(\xxx \times \xxx) \colon \pi \text{ has first marginal } \mu, \text{ second marginal }\nu \}$.

In the context of mathematical finance, the \emph{adapted Wasserstein distance}, that is a variation of the Wasserstein distance, can be used to obtain sharp quantitative results for sequential decision making problems and in robust finance, see \cite{BackhoffBartlBeiglbrockEder_AWinMF_2020, WassersteinSpaceofStochasticProcesses} among others.
For $\mu,\nu\in\mathcal{P}_p(\mathbb{R}^{d T})$, the adapted Wasserstein-$p$ distance is defined by
\[
\mathcal{AW}_p(\mu,\nu) \eqdef \inf_{\pi\in\text{Cpl}_{\text{bc}}(\mu,\nu)}\left\{\int \sum_{t=1}^T|x_t-y_t|^p \pi(dx,dy)\right\}^{1/p},
\]
where $\text{Cpl}_{\text{bc}}(\mu,\nu)$ is the set of probability measures $\pi \in \operatorname{Cpl}(\mu,\nu)$ satisfying the bi-causality constraint: for all $t=1,\dots,T$,
\[
\pi(dy_1,\dots,dy_t|x_{1:T})=\pi(dy_1,\dots,dy_t|x_{1:t})\quad \text{and}\quad \pi(dx_1,\dots,dx_t|y_{1:T})=\pi(dx_1,\dots,dx_t|y_{1:t}),
\]
where we use $x_{1:T}$ to denote the vector $(x_1, x_2, \ldots, x_T)$.
We note that if $T=1$, $(\mathcal{P}_p(\mathbb{R}^{dT}),\mathcal{AW}_p)$ coincides precisely with the Wasserstein space $(\mathcal P_p(\rrd), \mathcal W_p)$.
\subsubsection{Feedforward Neural Networks with Parametric Activation Functions}
\label{s_Preliminaries__ss_ffNNs}
In this section we recall the definition of  \textit{feedforward neural networks}.  Introduced in the landmark paper \cite{NeuralNetworksOG_1944} as a model for the cognition, a feedforward neural network is a function between Euclidean spaces which processes an input by iteratively applying affine maps with a fixed component-wise non-linear function $\sigma:\rr\rightarrow \rr$ called an \textit{activation function}.  

Feedforward neural networks (often abbreviated simply as neural networks) are often interpreted as \emph{black boxes} and frequently visualized pictorially as in Figure~\ref{fig_ffNN}, where each green circle represents an application of the activation function $\sigma:\rr\rightarrow \rr$ sometimes called a \emph{neuron}, the sequentially black lines called \emph{connections} represent univariate affine transformations of the preceding neuron's output until the network's outputs are produced.  Note that the analogy with neurocomputing arises when considering $\sigma(x)=I_{\{x\geq 0\}}$ where the state $1$ represents the ``activation/firing'' of a neuron upon receiving an input signal.
\begin{figure}[ht]
\centering
%%% PDF Version
\centering
\includegraphics[width=.4\linewidth]{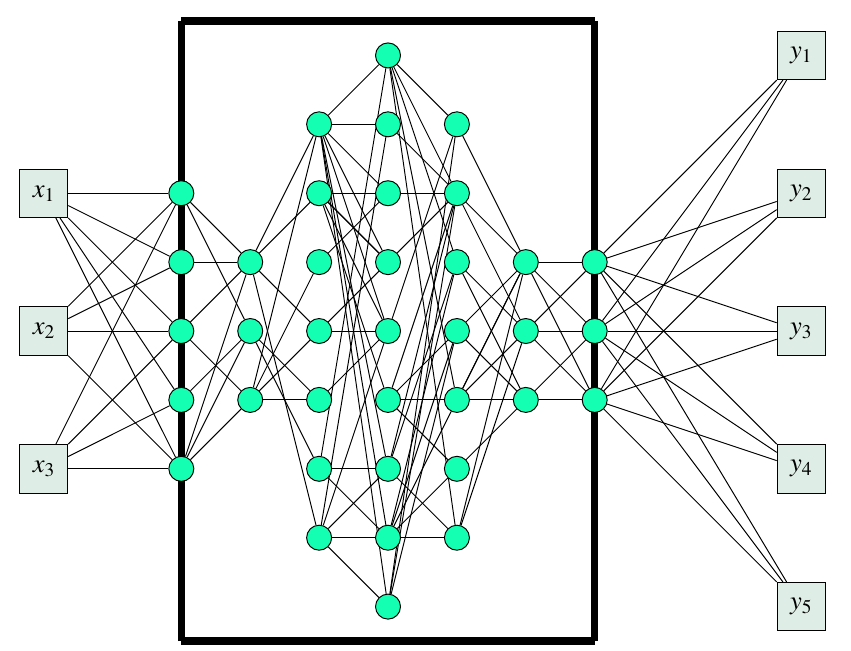}
\caption{A feedforward neural networks is a \textbf{black box} deep learning model which, given enough {\color{InputColor}{neurons}}, can be trained to approximate any function which continuously maps inputs in $\rr^3$ to outputs in $\rr^5$, uniformly on compact sets.}
\label{fig_ffNN}
\end{figure}

Recently, there has been significant theoretical work on activation functions which can adapt to their inputs, meaning that $\sigma$ is not a single function but rather a family of functions \citep{Swishramachandran2018searching,jiao2021deep,Yarotsky2020NEuripsPhaseDiagram,beknazaryan2021neural,yarotsky2021elementary}.  The advantage here is that an aptly chosen family of activation functions can ``collaborate'' to achieve superior approximation rates or help the original network exhibit desirable properties, such as being able to easily implement the identity and thus overcome overfitting \cite{hardt2016identity} and allow to represent the same function with fewer parameters \cite{FlorianHighDimensional2021}.  
Accordingly, our neural network models will depend on activation functions.
\begin{definition}[Trainable Activation Functions]
\label{defn_TrainableActivationFunction}
A \emph{trainable activation function} is any map $\sigma:\rr^2\times \rr\ni (\alpha,x)\mapsto \sigma_{\alpha}(x)\in \rr$.
\end{definition}
We will consider neural network models built using either of three different types of activation's functions, ``singular and trainable'', ``smooth non-polynomial and trainable'', and classical continuous activation functions  which are non-trainable and exhibit some basic regularity.  Our quantitative approximation results depend on the choice of activation function.  

Our interest in trainable activation functions, with discontinuities, stems from and adds to the emerging body of literature on the improved expressibility \cite{Yarotsky2021SuperPressiveICML,ShenYangZhang2021,park2021minimum,Shen2021ThreeLayersSuperExpressive,2022ZamanlooyPCNNs}.  Briefly, such activation functions allow us to build neural networks achieving exponential approximation rates.  Thus, they provide an upper-bound for the trainable \textit{singular} activation approximation rate which can be achieved by a practically deployed deep learning model.

In what follows, for any $x\in\R$, we denote $\lfloor x \rfloor \eqdef \max\{n\in\Z : n\leq x\}$.
\begin{definition}[Trainable Activation Function: Singular-ReLU Type]
\label{defn_TrainableActivation_Singular}
A trainable activation function $\sigma$ is of \emph{ReLU+Step type} if
\[
    \sigma_{\alpha}:
\rr
    \ni x
\mapsto 
        \alpha_1
            \max\{
                x
                    ,
                \alpha_2 x
                \}
            +
        (1-\alpha_1)
\lfloor x \rfloor\in\R.
\]
\end{definition}
For most $\alpha\in \rr^2$, the singular-ReLU type activation function $\sigma_{\alpha}$ is discontinuous.  We juxtapose the rates we derive with these singular types of trainable activation functions, with neural networks built using non-singular (i.e.\ smooth) and trainable activation functions; popular are the Swish activation function \citep{Swishramachandran2018searching}, the analytic and periodic activation functions used in SIRENs \cite{SIRENEs2000Neurips} and studied in \cite{SIEGEL2020313}, and GeLU \cite{hendrycks2016gaussian}, and the classical sigmoid activation function.  
By the quantitative results of \cite{KidgerLyons2020}, we know that polynomial activation functions do generate universal approximators in the deep regime, unlike the shallow regime of \cite{pinkus_1999}.  However, the quantitative results of \cite{kratsios2021universal} show that such activation functions are strictly less expressive than their non-polynomial counterparts; thus we do not consider such activation functions as a distinguished special case.  
\begin{definition}[Trainable Activation Function: Smooth-ReLU-Type]
\label{defn_TrainableActivation_Smooth}
A trainable activation function $\sigma$ is of \emph{smooth non-polynomial type} if there is a non-polynomial $\sigma^{\star}\in C^{\infty}(\rr)$, for which
\[
    \sigma_{\alpha}:
\rr
    \ni x
\mapsto 
        \alpha_1
            \max\{
                x
                    ,
                \alpha_2 x
                \}
            +
        (1-\alpha_1)
    \sigma^{\star}(x)\in\R
.
\]
\end{definition}
For completeness, we also consider the general case consisting of activation functions which do not fall into either of these two classes.  The most common example of such an activation function is the ReLU unit of \cite{fukushima1982neocognitron_ReLUOG}. We only require the regularity condition of \cite{KidgerLyons2020}.
\begin{definition}[{Classical Activation Function}]
\label{defn_TrainableActivation_ClassicalNonTrainable}
Let $\sigma^{\star}\in C(\rr)$ be non-affine and such that there is some $x\in \rr$ at which $\sigma$ is differentiable and has non-zero derivative.  Then, $\sigma$ is a \emph{classical regular activation function} if, for every $\alpha \in \rr^2$, $\sigma_{\alpha}=\sigma^{\star}$.
\end{definition}

In what follows we will often be applying our trainable activation functions component-wise. For positive integers $n,m$, we denote the set of $n\times m$ matrices by $\rr^{n\times m}$.  More precisely, we mean the following operation defined for any $N\in \nn_+$, $\bar{\alpha} \in \rr^{N \times 2}$ with $i^{th}$ row denoted as $\bar{\alpha}_i$, and $x\in \rr^{N}$, by
\[
\sigma_{\bar{\alpha}}\bullet x
 \eqdef 
\left(
\sigma_{\bar{\alpha}_i}(x_i)
\right)_{i=1}^N
.
\]
We may now formally define feedforward neural networks with trainable activation functions. 

\textbf{Feedforward Neural Networks with Trainable Activation Functions:}
Fix $J\in\nn_+$ and a multi-index $[d] \eqdef 
{(d_0,\dots,d_{J+1})}
% (d_0,\dots,d_J)
$, with $d_0,\dots,d_{J+1}\in\nn_+$,
and let 
$P([d])
=
\sum_{j=0}^{J}d_{j+1}(d_{j}+3) 
-
2d_{J+1}
% \sum_{j=0}^{J-1}d_{j}(d_{j+1}+3) 
% +
% d_J
$.
We identify any vector $\theta \in \rr^{P([d])}$ with
\begin{equation}
\label{eq_hypernetwork_association}
\begin{aligned}
    % \rr^{P([d])} \ni 
    \theta & \leftrightarrow
\left(
(A^{(j)},b^{(j)},\bar{\alpha}^{(j)})_{j=0}^{J-1}
,(A,c)
\right)
,
\\
(A^{(j)},b^{(j)},\bar{\alpha}^{(j)}) & \in 
\rr^{d_{j+1}\times d_{j}}\times \rr^{d_{j+1}} \times \rr^{d_{j+1}\times 2}
,\,{A \in \rr^{d_{J+1} \times\, d_J},\,}
c \in \rr^{d_{J+1} }
.
\end{aligned}
\end{equation}
With the identification in~\eqref{eq_hypernetwork_association}, and similarly to \cite{gribonval2021approximation}, we recursively define the representation function of a $[d]$-dimensional deep feedforward network by
\begin{equation}
    \begin{aligned}
   \rr^{P([d])}\times \rr^{d_0} \ni (\theta,x)
    & \mapsto 
    \hat{f}_{\theta}(x) 
         \eqdef  
    {A\,}
    x^{(J)}
   +c,
    \\
    x^{(j+1)} & \eqdef  
        \sigma_{\bar{\alpha}^{(j)}}\bullet( A^{(j)}
    % A^{(j)} \sigma_{\bar{\alpha}^{(j)}}\bullet(
        x^{(j)}
            +
        b^{(j)})\qquad 
            \mbox{for }  
        j=0,\dots,J-1,
    \\
    x^{(0)} & \eqdef  x.
    \end{aligned}
    \label{eq_definition_ffNNrepresentation_function}
\end{equation}

We denote by $\NN[[d]]$ the family of $[d]$-dimensional deep feedforward networks $\{\hat{f}_{\theta}\}_{\theta \in \rr^{P([d])}}$ described by \eqref{eq_definition_ffNNrepresentation_function}.  The subset of $\NN[[d]]$ consisting of networks $\hat{f}_{\theta}$  with each $\bar\alpha^{(j)}_i=(1,0)$  in~\eqref{eq_definition_ffNNrepresentation_function} is denoted by $\NN[[d]][ReLU]$ and consists of the familiar deep ReLU networks.  
The value $\max_{0\le j\le J+1}\,d_{j}$ is $\hat{f}$'s \textit{width}, $J$ is called $\hat{f}$'s \textit{depth}\footnote{Some authors, e.g.\ \cite{BartlettHarveyLiaw_JMLR_2019_VCPseudodimension_DNNs}, refer to $J$ as the number of \textit{hidden layers} in $\hat{f}$.}, and $J+1$ is the number of $\hat{f}$'s \textit{layers}.

\subsubsection{Metric Capacity}
\label{s_Preliminaries__ss_CompactSetsGeometry}
Many quantitative universal approximation results for classical feedforward networks  exhibit the same approximation rate for any compact subset of the input space of the same \textit{diameter}, defined for any $K\subseteq \xxx$ by $\operatorname{diam}(K) \eqdef \sup_{x,y\in K}\, d(x,y)$; see \cite{Yarotsky2020NEuripsPhaseDiagram}.  However, one would expect that for ``simpler compacts'' approximation requires less complicated networks.  Indeed, this is a feature of our main quantitative approximation result, which also relates the approximation quality to the complexity of the compact set on which the approximation holds.  This complexity  is expressed in two ways, first in terms of the regularity of each path $\xb \in \xxx^{\zz}$, and second in terms of the size of the compact set in which each $x_t$ is to be approximated.

For the latter, we use a refinement of the usual notion of dimension, for smooth manifolds, which can distinguish between compact subsets of different intermediate ``fractal dimensions''.  More specifically, we consider the \textit{covering dimension} of a metric spaces, in the sense of \citep[Section 10]{HeinonenClassicBook_AnalysisinMet2001}, which we control by the metric space's \textit{metric capacity}, as defined in \citep[Definition 1.6]{BRUE2021LipExtRP}.  Following \cite{BRUE2021LipExtRP}, we define the metric capacity of a \textit{subset} $K\subseteq \rr^d$ as the map $\kappa_K:(0,1]\rightarrow \nn\cup \{\infty\}$ which sends any $\delta>0$ to
\begin{equation}\label{eq.kk}
\kappa_K(\delta)
     \eqdef  
            \sup
        \left\{
            k\in \nn 
                :
            \exists x_0,\dots,x_k,\ \exists r>0\ \text{s.t.}\
            \sqcup_{i=1}^k \Ball_{K}(x_i,\delta r)
                \subset
            \Ball_{K}(x_0,r)
        \right\},
\end{equation}
where, as usual, for any $x \in K$ and any $r>0$, we set  $\Ball_K(x,r) \eqdef  \{
z\in K:\, \|z-x\|<r
\}$, $\| \cdot \|$ the Euclidean norm, and $\sqcup$ denotes the union of disjoint sets.  We make things concrete with the following two examples.    

 In what follows, we say that a function $f:\nn\rightarrow \rr$ \textit{asymptotically grows at a linear rate}, written $f(n)\in \Theta(n)$, if there are constants $c,C>0$ and $N\in \nn$ such that $cn\leq f(n)\leq Cn$, for every $n\geq N$.
\begin{example}[Euclidean Spaces]
\label{ex_Euclidean}%
For the $d$-dimensional Euclidean space, the discussions on \citep[page 82]{HeinonenClassicBook_AnalysisinMet2001} and \citep[Proposition 1.7]{BRUE2021LipExtRP} imply that
$\log_2(\kappa_{{\rr^d}}(5^{-1}))
\in \Theta(d)
$. 
\end{example}
Similarly, when $K$ is a Riemannian submanifold of $\rr^n$, the metric capacity is proportional to the intrinsic dimension of $K$, as defined in differential geometry.  Thus, our approximation rates automatically adapt to paths which lie in low-dimensional submanifolds of our input space.
\begin{example}[Riemannian Submanifolds]%
\label{prop_compact_Riemannian_example}
Together, \citep[Proposition 2.7]{UrsLandSchlichermaier2005AssouadDimensionandLipschitzExtensions2005} and \citep[Proposition 1.7]{BRUE2021LipExtRP} imply that, if $K$ is a $d$-dimensional compact Riemannian submanifold of $\rr^n$, then
$
\log_2(\kappa_K(5^{-1})) 
\in \Theta(d)
.
$
\end{example}
Note that any subset $K\subseteq \rr^d$ has finite metric capacity%
\footnote{%
By \citep[Theorem 12.1]{HeinonenClassicBook_AnalysisinMet2001} every subset of $\rr^d$ has the doubling property (see Section~\ref{s_Proofs} for the definition and \citep[Section 10.13]{HeinonenClassicBook_AnalysisinMet2001} for details), and following the discussion on \citep[page 3]{BRUE2021LipExtRP} a metric space is doubling if and only if its metric capacity is finite for all $\delta \in (0,1]$. 
}%
, meaning that $\kappa_K(t)<\infty$ for all $t\in (0,1]$.
\section{Static Case: Universal Approximation into QAS Spaces}
\label{s_Static_Case}
We begin by presenting our results in the static case, where we find that a continuous function between suitable metric spaces can always be approximated by our geometric transformer model (Theorem~\ref{thrm_main_StaticCase}).  
Our driving example is the approximation, in the (adapted) Wasserstein sense, of the transition kernels of higher order Markov processes; see Examples~\ref{ex_adapted_wasserstein},~\ref{ex_SDEs},~\ref{ex_SDEs_are_FC_adaptedMaps}, and~\ref{ex_SDEs_AW_simple}.
\subsection{The Metric Geometry of {\texorpdfstring{$\yyy$}{Y}}}
\label{s_Static_Case__ss_YMetGeo}
One cannot expect to have a universal approximation result for any metric space due to topological obstructions; see 
\citep[Theorem 7]{kratsiosuniversal2021}.  However, there is a rich class of output metric spaces $(\yyy,d_{\yyy})$ (abbreviated by $\yyy$), encompassing most spaces encountered in stochastic analysis, and specified by exactly two simple conditions for which the approximation results presented in this paper apply.  

The first condition on $\yyy$, generalizes the existence of a \textit{geodesic bicombing}, as studied by \cite{lang2001bilipschitz,UrsDescombes,basso2018fixed,miesch2018cartan,basso2019conical}, and is analogous in spirit to the idea of a \textit{simplicial topological space} from homotopy theory \citep[Chapter 83.2]{stacksproject} and from fuzzy set theory \citep{BarrClassicFuzzySetTopos1986}, and the peaked partitions of unity in metric space theory \cite{semadeni2006schauder}.  Essentially, we require that any \emph{inscribed simplex} in $\yyy$ formed by joining any number of points in $\yyy$ (illustrated by Figure~\ref{fig_generalized_simplices_Y}) is nothing else but a deformation of the standard Euclidean simplex with the same number of vertices (as illustrated by Figure~\ref{fig_simplex}).  In particular, if $\yyy$ is a geodesic space, then the edges of the inscribed simplex in Figure~\ref{fig_generalized_simplices_Y} can be geodesics.  However, this is by no means a requirement.   
\begin{figure}[H]
\centering
     \begin{subfigure}[b]{0.375\textwidth}
  \centering
    \scalebox{0.5}{\includegraphics[width=1.25\linewidth]{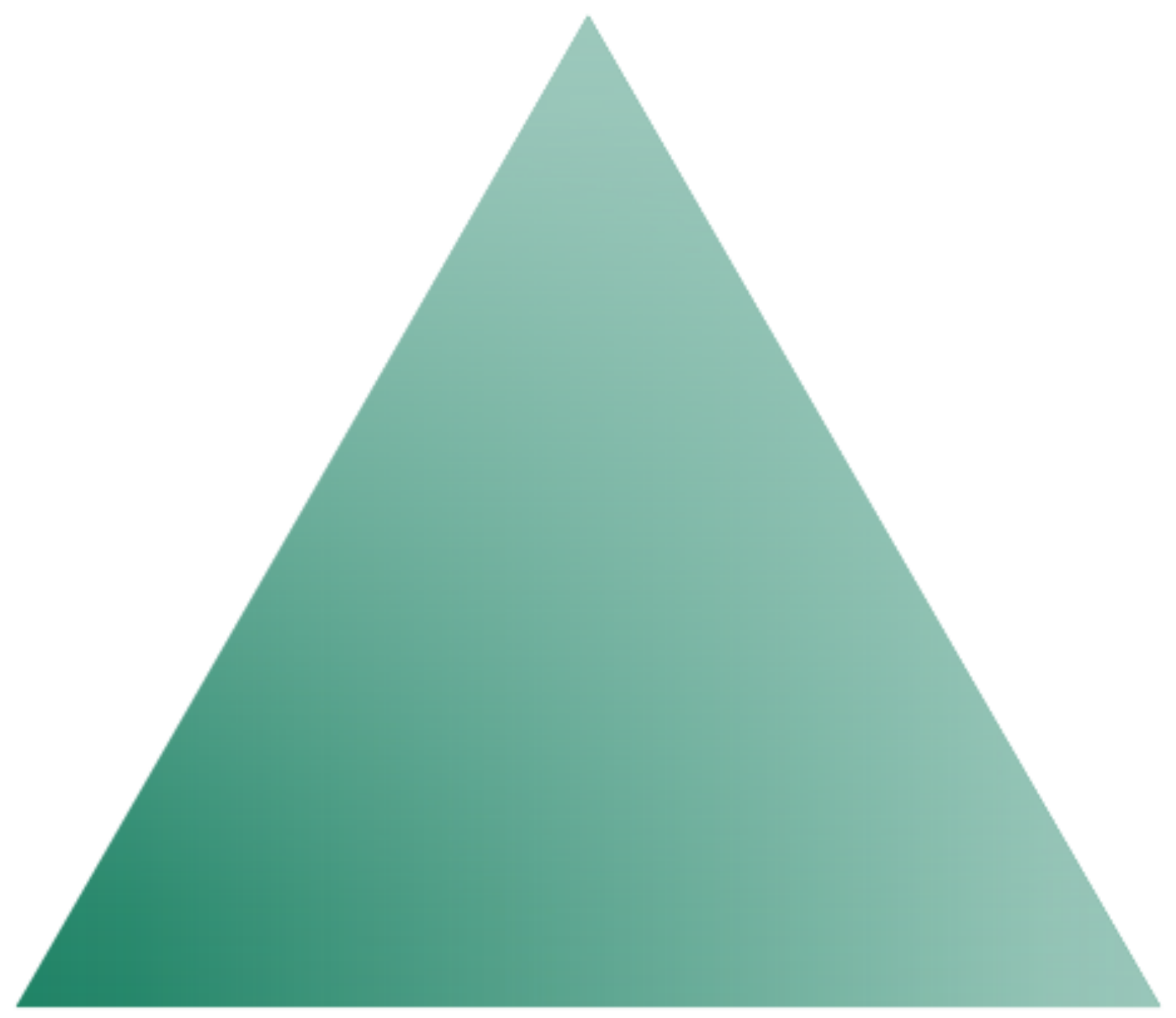}}
  \caption{{Inputs: $N$-Simplex}}
  \label{fig_simplex}
    \end{subfigure}
    \hfill
     \begin{subfigure}[b]{0.6\textwidth}
    \centering
    \scalebox{0.5}{\includegraphics[width=1.25\linewidth]{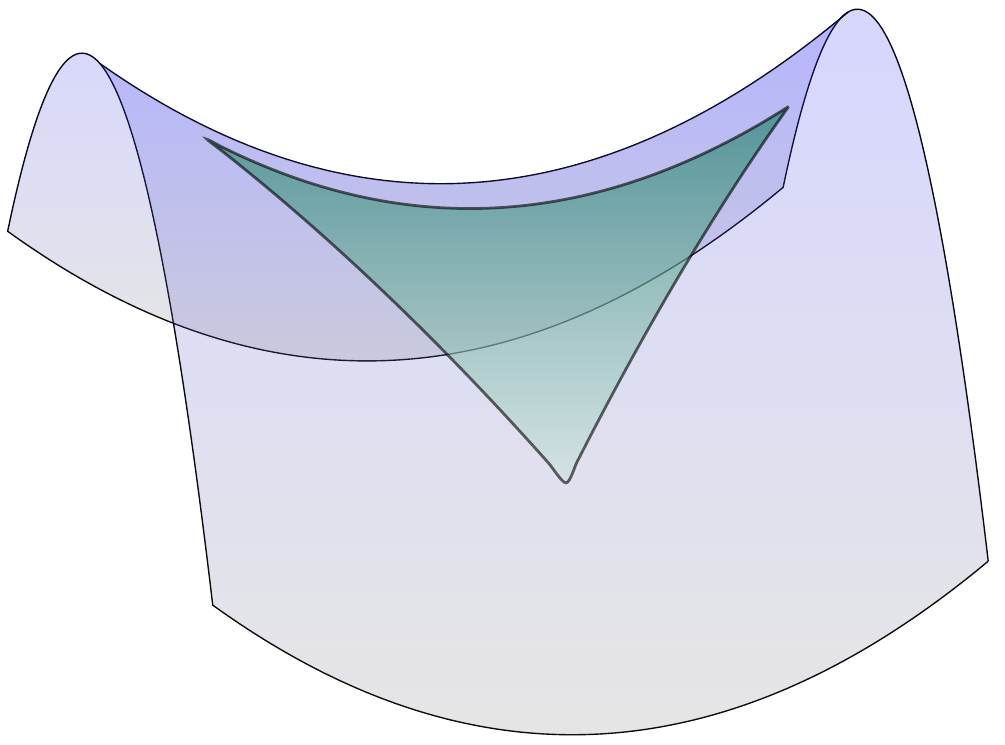}}
    \caption{Outputs: Inscribed $N$-simplex in $\yyy$.}
  \label{fig_generalized_simplices_Y}
  \end{subfigure}
\caption{The mixing function $\eta$ inscribes any $N$-simplex into the approximately simplicial space $\yyy$, sending any ``weight'' $w$ in the $N$-simplex $\Delta_N$ to the point $\eta(w,\mathcal{Y})$ in $\yyy$ whose vertices are the set of $N$ points $\mathcal{Y}$ in $\yyy$.}
\label{fig_inscriptionOfSimplexIntoQAS_Space}
\end{figure}
We denote the $N$-simplex by $\Delta_N \eqdef \left\{w\in [0,1]^N:\,\sum_{n=1}^N w_n=1 \right\}$, and we set $\widehat\yyy \eqdef \bigcup_{N\in\mathbb{N}_+}\left(\Delta_N\times \yyy^N\right)$.
\begin{definition}[Approximately Simplicial]
\label{ass_approximately_simplicial}
A metric space $(\yyy,d_{\yyy})$ is said to be \emph{approximately simplicial} if there is
a function $\eta:\widehat\yyy\to\yyy$, called a mixing function, and constants $C_\eta\geq 1$ and $p\in\mathbb{N}_+$, such that
for every $N\in \mathbb{N}_+$, $w=(w_1,\dots,w_N)\in\Delta_N$ and $\mathcal{Y}=(y_1,\dots,y_N) \in \yyy^N$, one has
\[
    d_\yyy \left( 
    \eta\left( w, \mathcal{Y} \right), y_i \right) 
    \le 
C_\eta
\left(
    \sum_{j = 1}^N d_\yyy(y_i,y_j)^p w_j\right)^{1 / p}.
    \]
Note that, in particular, the function $\eta$ satisfies $\eta(e_i,\mathcal Y)=y_i$, for all $i=1,\dots,N$, where $\{e_i\}_{i=1}^N$ denotes the standard basis of $\rr^N$.
\end{definition}

Only assuming that $\yyy$ is approximately simplicial is not enough, since $\yyy$ can be ``too large'' and universal approximation of any continuous function to $\yyy$ by a ``reasonable'' model may be impossible.  For example, if $\yyy$ is non-separable, such as the family of c\`{a}dl\`{a}g paths maps from $[0,1]$ to $\rr$ with the uniform topology, then no sequence of models depending continuously on a finite number of real parameters can even approximate constant functions therein.

Our second condition on the output space's geometry requires that we can parameterize a dense subset of $\yyy$.
Moreover, we organized these parameterized subsets into a nested sequence of $\epsilon$-nets which can be implemented by any idealized computer, capable of processing finite-dimensional real-vectors.  In order for these $\epsilon$-nets to be implementable, we require that each $\epsilon$-net is explicitly parameterized by some Euclidean space. 
The sequence of the parameterizations is called a \textit{quantization of $\yyy$}; with the name drawing from the fact that our notion of quantization generalizes the quantization of probability measures (see \cite{graf2007foundations,Pages2020_Quantization}).

\begin{definition}[Quantizability]
\label{def_disc_modulus}
Let $(\yyy,d_{\yyy})$ be a metric space.  If there is a sequence $Q \eqdef  (Q_q)_{q\in \nn_+}$ of functions $Q_q:\rr^{D_q}\rightarrow \yyy$, with $D_q\in \nn_+$, such that
\begin{enumerate}
    \item
    for any $q\in \nn_+$ and $z\in \rr^{D_q}$, there exists $\tilde{z}\in \rr^{D_{q+1}}$ such that $Q_q(z)=Q_{q+1}(\tilde{z})$;
    \item for every $y\in \yyy$ and $\epsilon>0$, there exist $q\in \nn_+$ and $z\in \rr^{D_q}$ satisfying
\[
d_{\yyy}\left(
    y
        ,
    Q_q(z)
\right)
    <
\epsilon
,
\]
\end{enumerate}
then $(\yyy,d_{\yyy})$ is said to be \emph{quantizable}.  Furthermore, $Q$ is said to quantize $(\yyy,d_{\yyy})$.  
\end{definition}

We remark that it is advantageous not to require that the vectorial parameterization of these $\epsilon$-nets be continuous, especially when $\yyy$ is a discrete metric space% \footnote{An example where discontinuity is advantageous:\hfill\\
%     $\yyy=\{y_1,\dots,y_S\}$ is the state-space on which some finite-state space Markov chain lives; 
%     \[
%     \mathcal{Q}_1:\mathbb{R} \ni z \mapsto y_{\min\{S,\max\{0,\lfloor z\rfloor\}\}}
%     ,
%     \]
%     and $\mathcal{Q}_{\mathcal{K}}(\epsilon)=1$ for all $\epsilon>0$ and we set $\mathcal{Q}_q \eqdef \mathcal{Q}_1$ for all $q\in \mathbb{N}_+$.  Thus, for countable discrete spaces the quantization functions can be interpreted as indexing $\yyy$.  NB, the discontinuity of $\mathcal{Q}_q$ is advantageous in such cases, since if $\yyy$ is discrete with at-least two points then there cannot be any continuous surjection of a Euclidean space onto $\yyy$}
.  However, under the additional assumption that our parameterizations of these $\epsilon$-nets by Euclidean inputs is continuous, then the quantizability of $\yyy$ implies separability of its topology. 
Therefore, quantizability can be interpreted as a \textit{quantitative metric analogue} of separability, which itself is an otherwise \textit{qualitative topological property}.  

Since quantizability is a quantitative property of $\yyy$, then we measure the complexity required to parameterize an $\epsilon$-net collection in any compact subset of $\yyy$ by simply counting the minimum dimension of a Euclidean space required to parameterize that $\epsilon$-net.  The map sending an $\epsilon$ and a compact subset of $\yyy$ to this minimum dimension is called a ``quantization modulus''.  The role of the quantization modulus is therefore analogous to other moduli appearing in quantitative metric theories; examples include, moduli of continuity which quantifying metric distortion; moduli of smoothness in \citep{BesovExtension2016GeneralMetricSpacesHeikkinenIhnatsyevaTuominen,DraganovIvanov2014ModulusofSmoothness} quantifying higher-order distortions, or the modulus of padded decomposability quantifying the complexity of a random partition \citep{krauthgamer2004measuredNaor,krauthgamer2005measuredNaor}.  

\begin{definition}[Modulus of Quantizability]
\label{def_disc_modulus_quantification}
In the notation of Definition~\ref{def_disc_modulus}.  Let $(\yyy,d_{\yyy})$ be a metric space quantized by $Q = (Q_q)_{q\in \nn_+}$.   For any given compact $\kkk\subseteq \yyy$, the \emph{quantization modulus} of $Q$ on $\kkk$ is the map $\mathcal{Q}_{\kkk}:\RR_+\to\mathbb{N}$ that sends any $\epsilon>0$ to $\mathcal{Q}_{\kkk}(\epsilon) \eqdef  D_{q_{\mathcal{K},\epsilon}} $ where
\[
    q_{\mathcal{K},\epsilon}
    \eqdef  
    \inf\,
    \left\{
        q  \in \mathbb{N}_+
    :
    \,
    \forall y\in \kkk,\,
    \exists z\in \rr^{D_q}\,\, \text{s.t.}\
    d_{\yyy}(
    y
        ,
   Q_q(z)
    )<\epsilon
    \right\}.
\]
The map $Q$ is called \emph{regular} if $\mathcal{Q}_{\kkk}(\epsilon)$ is finite for all $\epsilon>0$ and all compact subsets $\kkk\subseteq \yyy$.
\end{definition}
As we will see in Section~\ref{s_Examples}, most metric spaces encountered when working with stochastic processes are both quantizable and approximately simplicial.  Since we only consider output spaces with precisely these two properties, we name metric spaces carrying this additional structure.  
\begin{definition}[QAS Space]
\label{defn_QAS_space}
A metric space $(\yyy,d_{\yyy})$ together with a function $\eta$ satisfying Definition~\ref{ass_approximately_simplicial} and a sequence $Q = (Q_q)_{q\in \nn_+}$ satisfying Definition~\ref{def_disc_modulus} will be called a \emph{Quantizable and Approximately Simplicial space} (QAS space).  We denote QAS spaces by the tuple $(\yyy,d_{\yyy},\eta,Q)$.
\end{definition}
The additional structure carried by QAS spaces allows us to approximately parameterized the inscribed simplices (as in Figure~\ref{fig_generalized_simplices_Y}) using Euclidean data projected onto the corresponding standard simplex (as in Figure~\ref{fig_simplex}).  This is because every QAS space naturally defines a neural network layer which approximately encodes $\yyy$'s geometry, by mixing the attention payed to $N$ particles in $\yyy$ (represented by the vertices of the inscribed simplex in Figure~\ref{fig_generalized_simplices_Y}) via  the mixing function $\eta$.  

\begin{definition}[Geometric Attention Mechanism]
\label{defn_attention}
Let $(\yyy,d_{\yyy},\eta,Q)$ be a QAS space.  Then the \emph{geometric attention mechanism} on $\yyy$  is the family of functions $(\operatorname{attention}_{N,q})_{N,q\in \nn_+}$ defined by
\[
\operatorname{attention}_{N,q}: 
\rr^N\times \rr^{N\times D_q}\ni (u,(z_n)_{n=1}^N)
    \rightarrow 
\eta\left(
    \Pi_{\Delta_N}(u)
        ,
    (Q_q(z_n))_{n=1}^N
\right)\in\yyy
,
\]
where $\Pi_{\Delta_N}$ is the projection of  $\rr^N$ onto the $N$-simplex $\Delta_N$. 
\end{definition}
The Wasserstein space $(\mathcal{P}_1(\rr^2),\mathcal{W}_1)$ can be made into a QAS space (see Examples~\ref{ex_wasserstein} and~\ref{ex_Wasserstein_barycenter} below for details); in which case, a variant of the probabilistic attention mechanism of \cite{AB_2021} is a special case of our geometric attention mechanism; illustrated visually in Figure~\ref{fig_KratsiosZamanlooyetal_example}%
\footnote{For the relationship between probabilistic attention and the (classical) attention mechanism of \cite{vaswani2017attention} see the discussion following \citep[Equation 4]{AB_2021}.}.  For instance, given
three points $z_1,z_2,z_3$ in $\rr^2$ which we wish to charge with mass, Figure~\ref{fig_KratsiosZamanlooyetal_example} illustrates the map 
\[
    \operatorname{attention}_{3,1}\left(
      u,(z_1,z_2,z_3)
    \right)
     \eqdef 
        \sum_{n=1}^3
w_{n} \delta_{z_n}
,
\]
which sends any vector $u\in \rr^3$ to the nearest normalized weight $w \eqdef \Pi_{\Delta_3}(u)$ in the $3$-simplex $\Delta_{3}$, as visualized by the $\bullet$.
Then it distributes the weights in $\bullet$ amongst the three points $z_1,z_2,z_3$ in $\rr^2$, with the amount of mass illustrated by the radius of the their respective violet bubbles.
\begin{figure}[H]%
\centering
\includegraphics[width=0.3\linewidth]{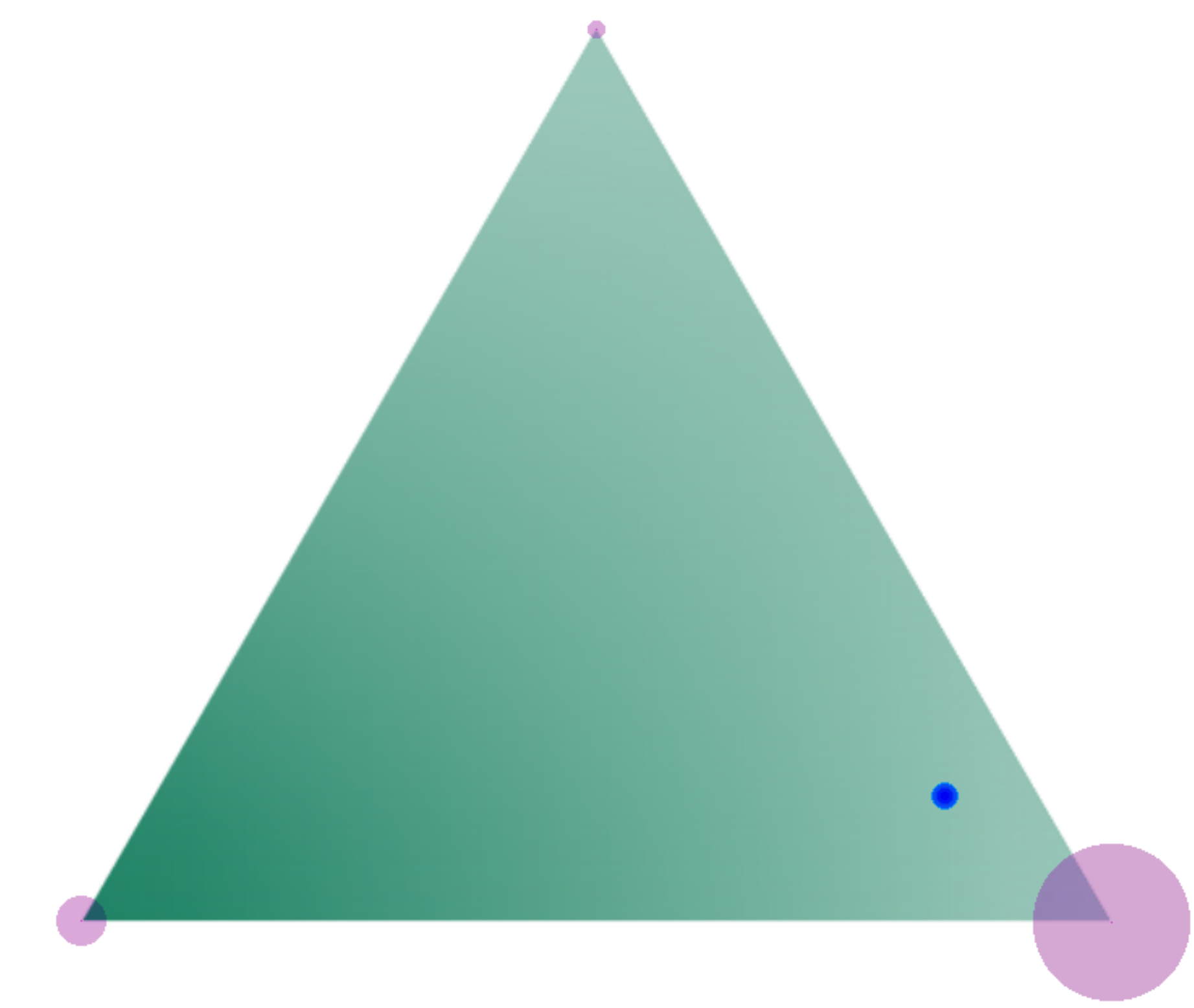}
\caption{Simplex inscribed in the Wasserstein Space $\mathcal{P}_1(\rr^2)$.  Here, the {\color{MidnightBlue}{blue dot}} represents the normalized weight {\color{MidnightBlue}{$w$}} in the {\color{AnnieGreen}{$3$-simplex $\Delta_3$}}, the vertices of the simplex represent the points ${\color{violet}{z_1,z_2,z_3}}\in \rr^2$ to be charged with mass, and the size of the {\color{violet}{violet bubble}} around each vertex represents the measure $\sum_{n=1}^3 {\color{MidnightBlue}{w_n}}{\color{violet}{\delta_{z_n}}}$ charging the points $z_1,z_2,z_3$.  }
\label{fig_KratsiosZamanlooyetal_example}
\end{figure}
More generally, given $N$ probability measures $y_1,\dots,y_N \in \mathcal{P}_1(\rr^d)$, one can find empirical probability measures
$\hat{y}_n \eqdef \frac1{D_q}\sum_{q=1}^{D_q} \delta_{z_{n,q}}$ with $\{z_{n,q}\}_{n,q=1}^{N,D_q}$ in $\rr^d$ which quantize $\{y_n\}_{n=1}^N$, i.e.\ $\mathcal{W}_1(y_n,\hat{y}_n) \approx 0$; see \citep[Corollary 3]{Chevallier_2018_UnifDecompositionProbMEasures_JAP}.  In this case, the geometric attention becomes
\begin{equation}
    \operatorname{attention}_{N,q}(u,z)
     \eqdef         \sum_{n=1}^N
        \frac{w_{n}}{D_q}
        \sum_{q=1}^{D_q}
                \delta_{z_{n,q}}
    =
         \sum_{n=1}^N
        w_{n}\hat{y}_n
    ,
\label{eq_attention_W1}
\end{equation}
where $w \eqdef \Pi_{\Delta_N}(u)$ and $z
             \eqdef 
        (z_{n,q})_{n,q=1}^{N,D_q}$.  
In the next example, we extend this framework to the case of distributions on path spaces, and replace the Wasserstein distance with its \textit{adapted counterpart}  \cite{Ruschendor_1985_AdaptedOG}, which respects the flow of information.
\begin{example}[Adapted Wasserstein Space with Convex Combinations]
\label{ex_adapted_wasserstein}
We consider $(\yyy,d_{\yyy}) \eqdef (\mathcal{P}_p([0,1]^{dT}),\mathcal{AW}_p)$.   In difference to the Wasserstein distance case, empirical distributions are not consistent estimators with respect to the adapted Wasserstein distance.
Thus, to quantize $(\mathcal{P}_p([0,1]^{dT}),\mathcal{AW}_p)$, we can not take functions as in \eqref{eq_attention_W1}.
Instead, \cite{BBBW2020estimating} suggests the use of an \emph{adapted empirical distribution}. Let $r=(T+1)^{-1}$ for $d=1$, and $r=(dT)^{-1}$ for $d\geq 2$.
For all $q\geq 1$, partition the cube $[0,1]^d$ into the disjoint union of $q^{rd}$ cubes with edges of length $q^{-r}$, and let $\varphi^q:[0,1]^d\to[0,1]^d$ map each small cube to its center.
By \cite[Theorem~1.3]{BBBW2020estimating}, the following family $(Q_q)_{q\in \nn_+}$ of functions quantizes $(\mathcal{P}_p([0,1]^{dT}),\mathcal{AW}_p)$:
\[
Q_q: \rr^{{dT}\times q}\ni 
    z=(z^1,\dots,z^q)
        \mapsto 
    \frac1{q}
    \sum_{s=1}^{q}
        \,\delta_{(\varphi^q(z^s_1),\dots,\varphi^q(z^s_T))}
\in \mathcal{P}_p([0,1]^{dT}).
\]
Thus,
$(\mathcal{P}_p([0,1]^{dT}),\mathcal{AW}_p,\eta,Q)$ 
is a QAS space associated to the geometric attention mechanism
\[
\operatorname{attention}_{N,q}\left(
        u
        ,
        (z^n)_{n=1}^N
        \right)
     \eqdef 
        \sum_{n=1}^N
        \frac{\Pi_{\Delta_N}(u)_n}
{q}
        \sum_{s=1}^{q}
                \,\delta_{(\varphi^q((z^n)^s_1),\dots,\varphi^q((z^n)^s_T))}
.
\]
\end{example}
Below, we introduce our static models, which we call \textit{geometric transformers}.  These work like the standard transformers of~\cite{vaswani2017attention}, by decomposing the approximation of a suitable function $f:\rr^d\rightarrow \yyy$ into two steps.  In the first step, a deep feedforward network learns how to encode any input $x\in \xxx\subseteq \rr^d$ into a vector in a deep feature space $\rr^N$.  
This step is illustrated in Figure~\ref{fig_natural_transformer} by the feedforward network sending the input $(x_1,x_2,x_3)$ to the deep features $(u_1,\dots,u_5)$.  In the second step, illustrated by the rectangular purple node in Figure~\ref{fig_natural_transformer},
the deep features in $\rr^N$ are decoded into $\yyy$-valued predictions (illustrated in Figure~\ref{fig_generalized_simplices_Y}).  

    The decoding step is implemented by our geometric reinterpretation of the attention mechanism of \cite{bahdanau2014neural}, designed for Natural Language Processing tasks, and of its probabilistic counterpart of \cite{kratsios2021universal, AB_2021}.  The critical difference between our \textit{geometric attention} and the above attention mechanisms is that the geometric attention is customized for $\yyy$'s geometry, whereas the others are suited to their respective spaces.

\begin{figure}[ht]%[H]
%
%%% PDF Version
\centering
% \hspace{2em}
\includegraphics[width =1\linewidth]{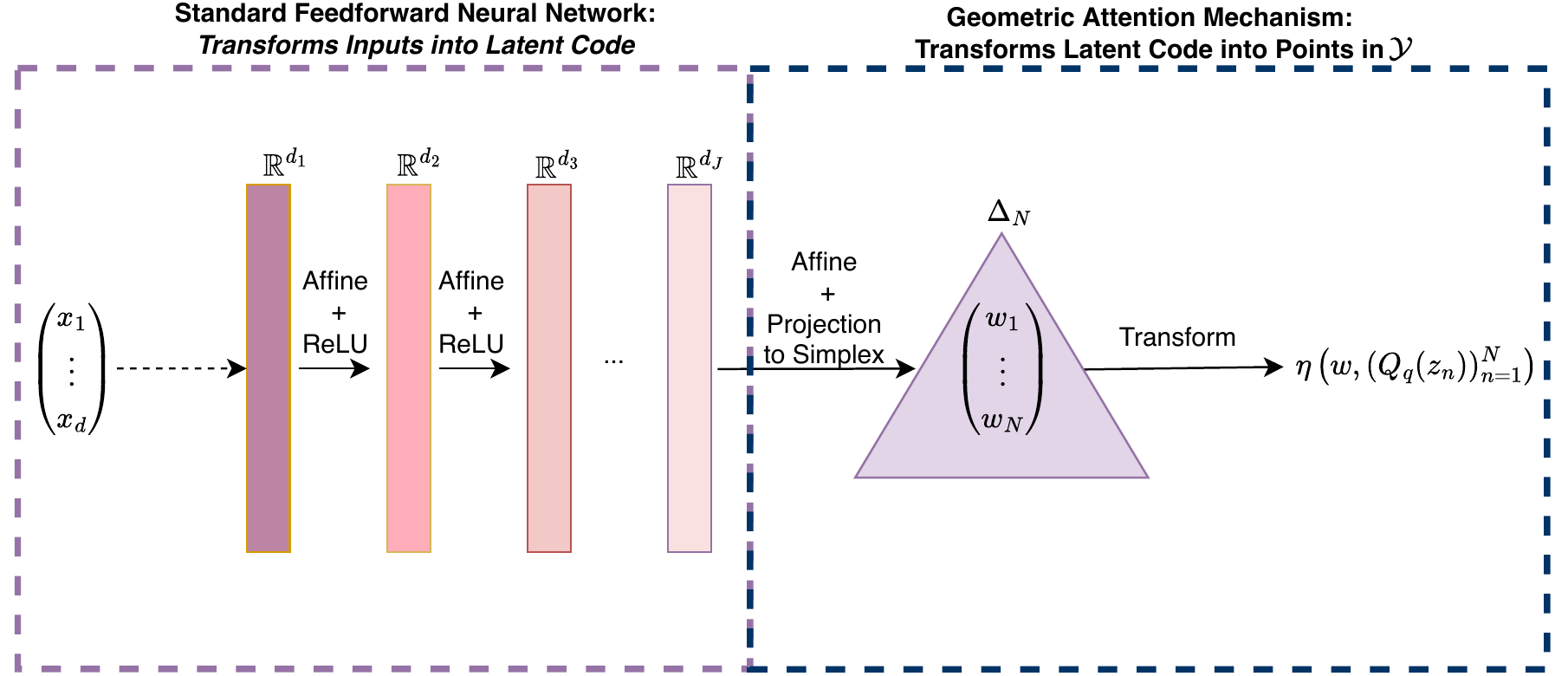}
\caption{An illustration of the geometric transformer architecture {(see Definition~\ref{def_natural_transformer})}.  Euclidean inputs are first mapped to latent vectorial outputs by a feedforward neural network, these latent vectors are then transformed into predictions on $\mathcal{Y}$ by the geometric attention mechanism.}
\label{fig_natural_transformer}
\end{figure}
\begin{definition}[Geometric Transformer]
\label{def_natural_transformer}
Let $(\yyy,d_{\yyy},\eta,Q)$ be a QAS space and $d\in \nn$.  Fix a trainable activation function $\sigma$, constants $N,q \in \nn_+$, and a multi-index $[d]$ with $d_0=d$ and $d_J=N$.  A \emph{geometric transformer} (GT) from $\rr^d$ to $\yyy$ is a function $\hat{\rho}:\rr^d\rightarrow \yyy$ with representation
\begin{equation}
    \hat{\rho} 
    = 
\operatorname{attention}_{N,q}\left(
    \hat{f}_{\theta}(\cdot)
        ,
    Y
\right),
\label{eq___def_natural_transformer}
\end{equation}
where $\hat{f}_\theta\in \NN$ and $Y\in \rr^{N\times D_q}$.  The set $\GT$ of geometric transformers from $\rr^d$ to $\yyy$ with activation function $\sigma$ of complexity $([d],N,q)$ consists of all $\hat\rho$ with representation~\eqref{eq___def_natural_transformer}.  
\end{definition}
In what follows, if already clear from the context, we will omit the specification ``from $\rr^d$ to $\yyy$ with activation function $\sigma$ of complexity $([d],N,q)$'' when talking of a geometric transformer.

\subsection{Static Case -- Universal Approximation into QAS Spaces}
\label{s_Static_Case__ss_Main_Results}
We now present our first \textit{universal approximation theorem} (Theorem~\ref{thrm_main_StaticCase}).  This result is both a refinement and a generalization of the classical universal approximation theorem, where we use geometric transformers in place of feedforward networks.  

Following \cite{KovachkiLanthalierMishra_FouerierNeuralOperator_OG_2021JMLR, 2021arXiv210606682L}, we decompose the total approximation error incurred by approximating a target function $f:\xxx\rightarrow \yyy$ by a geometric transformer into three components: $\epsilon_A,\epsilon_Q,\epsilon_{NN}>0$.  
The first term $\epsilon_A$ represents the \textit{intrinsic error} incurred by approximately representing $f$ by a map defined on a specific simplex inscribed in $\yyy$ (as in Figure~\ref{fig_generalized_simplices_Y}).
Since this inscribed simplex' endpoints may not be representable by Euclidean features, we need to perturb them using the quantization maps $Q_q$, and the \textit{quantization error} $\epsilon_Q$ captures this. Then, $\epsilon_{NN}$ represents the \textit{encoding error} originated from the approximation of $f$ by a feedforward network, which approximately encodes the elements of $\xxx$ into deep features which are then passed to the geometric attention layer to be decoded into predictions on $\yyy$.  
For $0<\alpha \le 1$, we denote the set of $\alpha$-H\"{o}lder functions from $\xxx$ to $\yyy$ by $C^{\alpha}(\xxx,\yyy)$.
\begin{theorem}[Metric Transformers are Universal Approximators of QAS-Space-Valued Functions]
\label{thrm_main_StaticCase}
Let $0<\alpha \leq 1$, $\xxx\subseteq \rr^d$ be compact, $(\yyy,d_{\yyy},\eta,Q)$ be a QAS space, and let $\sigma$ be a trainable activation function as in Definitions~\ref{defn_TrainableActivation_Singular},~\ref{defn_TrainableActivation_Smooth}, or~\ref{defn_TrainableActivation_ClassicalNonTrainable}. 
Then, for every $f\in C^{\alpha}(\xxx,\yyy)$, every ``intrinsic error'' $\epsilon_A>0$, ``quantization error'' $\epsilon_Q>0$, and ``encoding error'' $\epsilon_{\mathcal{NN}}>0$, there exist positive integers $N,q\in \nn_+$, a matrix $Y\in \rr^{N\times D_q}$, and a 
geometric transformer $\hat{\rho} \in \GT$ satisfying
\[
\sup_{x\in \xxx}
    \,
d_{\yyy}\left(
    f(x)
        ,
       \hat{\rho}(x)
\right)
    \leq 
\epsilon_A + \epsilon_Q + \epsilon_{\mathcal{NN}},
\]
with $\hat{\rho}$ as in \eqref{eq___def_natural_transformer}, and
where the number of parameters determining $\hat{\rho}$ are recorded in Table~\ref{tab_spacecomplexity_universalmetrictransformer}. 
\end{theorem}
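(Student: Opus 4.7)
The plan is to approximate $f$ by a geometric transformer through a triangle-inequality chain $f \rightsquigarrow \tilde f \rightsquigarrow \hat{\tilde f} \rightsquigarrow \hat\rho$, in which each link absorbs one of the three prescribed error budgets. By compactness of $\xxx$ and $\alpha$-Hölder continuity of $f$, for any $\delta>0$ we fix a finite $\delta$-net $\{x_1,\dots,x_N\}\subseteq\xxx$ (with $N$ governed by the metric capacity $\kappa_\xxx$ as in Examples~\ref{ex_Euclidean}--\ref{prop_compact_Riemannian_example}), set $y_n:=f(x_n)$, and pick a continuous partition of unity $w:\xxx\to\Delta_N$ subordinate to $\{\Ball(x_n,2\delta)\}_{n=1}^N$ with $w_n(x_n)=1$. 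Define the \emph{ideal simplicial reconstruction}
\[
\tilde f(x) \eqdef \eta\bigl(w(x),(y_1,\dots,y_N)\bigr).
\]
Applying Definition~\ref{ass_approximately_simplicial} at a vertex $y_i$ with $x\in\Ball(x_i,\delta)$, and using Hölder continuity to bound $d_{\yyy}(y_i,y_j)\le L(4\delta)^\alpha$ for every $j$ with $w_j(x)\ne 0$, yields $d_{\yyy}(f(x),\tilde f(x))\le (1+C_\eta)L(4\delta)^\alpha$, which is rendered $\le\epsilon_A$ by shrinking $\delta$.

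Next, $\kkk\eqdef f(\xxx)\subseteq\yyy$ is compact, so Definitions~\ref{def_disc_modulus}--\ref{def_disc_modulus_quantification} deliver $q\in\nn_+$ with $D_q\le\mathcal{Q}_{\kkk}(\epsilon')$ and vectors $z_n\in\rr^{D_q}$ such that $\hat y_n\eqdef Q_q(z_n)$ satisfies $d_{\yyy}(y_n,\hat y_n)<\epsilon'$. Setting $\hat{\tilde f}(x)\eqdef\eta(w(x),(\hat y_1,\dots,\hat y_N))$ and bounding $d_{\yyy}(\tilde f(x),\hat{\tilde f}(x))$ by the triangle inequality through $y_i$ and $\hat y_i$ (with two applications of the simplicial inequality) produces an estimate of the form $(1+2C_\eta)\epsilon' + O(\delta^\alpha)$; further shrinking $\delta$ and taking $\epsilon':=\epsilon_Q/(1+2C_\eta)$ then places this below $\epsilon_Q$.

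For the final link, the weight map $w:\xxx\to\rr^N$ is continuous on the compact $\xxx\subseteq\rr^d$, so a quantitative feedforward universal approximation theorem keyed to the activation class at hand (cf.\ \cite{KidgerLyons2020} for classical activations, \cite{kratsios2021universal} for smooth trainable ones, and the singular-activation rates discussed in Section~\ref{s_Preliminaries__ss_ffNNs}) supplies a multi-index $[d]$ and $\hat f_\theta\in\NN$ with $\sup_{x\in\xxx}\|w(x)-\hat f_\theta(x)\|$ arbitrarily small. Since $\Pi_{\Delta_N}$ is $1$-Lipschitz and leaves $w(x)\in\Delta_N$ fixed, the candidate $\hat\rho\eqdef\operatorname{attention}_{N,q}(\hat f_\theta(\cdot),Y)$, where $Y\in\rr^{N\times D_q}$ has rows $z_n$, satisfies $d_{\yyy}(\hat{\tilde f}(x),\hat\rho(x))\le\epsilon_{NN}$ provided the neural error is calibrated against a uniform modulus for $w'\mapsto\eta(w',\hat y)$.

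The key obstacle is precisely this last calibration: Definition~\ref{ass_approximately_simplicial} only controls distances from $\eta$ to its vertices, not between $\eta$-values associated with perturbed weight vectors. A dedicated stability lemma is therefore needed, showing that $w'\mapsto\eta(w',\hat y)$ is uniformly continuous on $\Delta_N$ with modulus depending only on $\max_{i,j}d_{\yyy}(\hat y_i,\hat y_j)$, $C_\eta$, and $p$; the natural route is to pick a dominant index $i$ for both $w$ and $w'$ and compare $\eta(w,\hat y)$ and $\eta(w',\hat y)$ to the common vertex $\hat y_i$ via two applications of the simplicial inequality. Careful bookkeeping of the resulting dependencies then pins down the entries of Table~\ref{tab_spacecomplexity_universalmetrictransformer}: $N$ is dictated by the $\delta$-net (hence by $\kappa_{\xxx}$ and $\alpha$), $q$ by $\mathcal{Q}_{\kkk}$, and the architecture $[d]$ by the rate of the chosen universal approximation theorem for $\sigma$.
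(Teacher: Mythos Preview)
Your three-stage decomposition is the right skeleton, and the first two links (simplicial reconstruction, then quantization) are essentially what the paper does. The genuine gap is in the last link, and it is exactly the obstacle you flag: Definition~\ref{ass_approximately_simplicial} gives no control on $d_\yyy(\eta(w,\hat y),\eta(w',\hat y))$ for nearby weights. Your proposed fix---compare both $\eta(w,\hat y)$ and $\eta(w',\hat y)$ to a common vertex $\hat y_i$---does not yield a modulus of continuity: the resulting bound $C_\eta(\sum_j w_j d_\yyy(\hat y_i,\hat y_j)^p)^{1/p}+C_\eta(\sum_j w'_j d_\yyy(\hat y_i,\hat y_j)^p)^{1/p}$ does not vanish as $w'\to w$ unless both weights are already near $e_i$. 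So the ``dedicated stability lemma'' as stated cannot be proved from the approximately simplicial axiom alone, and without it the calibration of the neural-network error is unjustified.

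The paper sidesteps this entirely, and the mechanism is worth knowing. Rather than a generic partition of unity, it takes as weight map a \emph{Lipschitz random projection} $\Pi:(\xxx,d_\xxx^\alpha)\to\mathcal P_1(\mathcal X_N)$ from \cite{BRUE2021LipExtRP}, followed by the identification $\iota_N:\mathcal P_1(\mathcal X_N)\to\Delta_N$. This makes $f_N:=\iota_N\circ\Pi$ quantitatively H\"older, so Proposition~\ref{prop_universal_approximation_improved_rates} applies directly and yields the table entries. More importantly, the paper never compares $\eta(w,\hat y)$ to $\eta(w',\hat y)$. It compares $f(x)$ \emph{directly} to $\tilde\eta_N(\Pi_{\Delta_N}\hat f_\theta(x))$ through the single vertex $\tilde y_{i(x)}$, and then uses the key identity
\[
\Bigl(\sum_{j} w_j\, d_\yyy(y_i,y_j)^p\Bigr)^{1/p}\le L_f\,\mathcal W_p\bigl(\delta_{x_i},\iota_N^{-1}(w)\bigr),
\]
which follows from the H\"older bound $d_\yyy(y_i,y_j)\le L_f d_\xxx^\alpha(x_i,x_j)$. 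This converts the simplicial estimate into a Wasserstein distance on the net $\mathcal X_N$, which in turn is controlled (via the $\mathcal W_1$--TV--$\ell^2$ equivalences on a finite set with minimum separation $\delta/2$) by $C_\Pi\delta$ plus the Euclidean neural-network error. Thus the $\epsilon_A$ and $\epsilon_{\mathcal{NN}}$ contributions are handled in a single stroke, and no continuity of $\eta$ in $w$ is ever invoked. Your argument can be repaired along these lines, but not via the stability lemma you outline.
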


Table~\ref{tab_extrapolation_function} reports the ``space complexity'' of the geometric transformer built in Theorem~\ref{thrm_main_StaticCase}, with each column describing a different aspect of the universal approximation capabilities of our neural network model.  Moving from right to left, the third column in Table~\ref{tab_spacecomplexity_universalmetrictransformer} confirms that ``narrow'' geometric transformers with classical (non-trainable) activation functions, such as ReLU, sigmoid, or the swish activation function of \cite{Swishramachandran2018searching}, have the capacity to approximate any function in $C^{\alpha}(\xxx,\yyy)$.  

The second column gives precise quantitative approximation rates for geometric transformers with trainable activation functions and smooth-ReLU-type.  The deep feedforward used in building these models thus has the capacity to precisely implement the identity and can be trained using stochastic gradient descent methods, of course, depending on $\yyy$.  For details see \cite{backhoff2022stochastic} for the case where $\yyy$ is the Wasserstein space, \cite{gallego2015convergenceBanachUniformlyConvex} when $\yyy$ is a suitable Banach space, and \cite{SGD_Riemannian_Bonnabel_2013,2018_MJ_SGD_Riemannian,Lucci_2021_Momentum_Riemannian} for the case where $\yyy$ is a complete Riemannian manifold with bounded sectional curvature.  

The first column of Table~\ref{tab_extrapolation_function} covers the efficiency of transformer networks if one relaxes the need to have continuous activation functions.  Though such neural network models cannot be trained using conventional stochastic gradient descent-type algorithms, they can be implemented using randomized approaches such as \textit{random neural network} (or \textit{extreme learning machine}); see \cite{louart2018random,Lukas2020,2022ZamanlooyPCNNs}. Not only is this column most pertinent to transformer networks trained with randomized methods, but it also highlights the potential of transformer approaches to geometric deep learning, applications to stochastic analysis, and mathematical finance.   

In what follows, for any $x\in \rr$, we denote $\lceil x \rceil \eqdef \min\{n\in\Z : n\geq x\}$ and $x_{++} \eqdef \max\{1,x\}$. 
\begin{table}[ht]%[H]
\centering
\caption{Upper bounds on the model complexity of the geometric transformer network\, $\hat{\rho} \eqdef \operatorname{attention}_{N,q}\left(\hat{f}_{\theta}(\cdot),Y\right)$ of Theorem~\ref{thrm_main_StaticCase}.
$N$ is the same for all activation functions.  
}
\label{tab_spacecomplexity_universalmetrictransformer}
\ra{1.3}
\begin{adjustbox}{width=\columnwidth,center}
\begin{tabular}{@{}llll@{}}
% \cmidrule[0.3ex](){1-r}\\
\toprule
\textbf{Activation} ($\sigma$) & \textbf{Singular}~(\ref{defn_TrainableActivation_Singular}) & \textbf{Smooth}~(\ref{defn_TrainableActivation_Smooth}) & \textbf{Classical}~(\ref{defn_TrainableActivation_ClassicalNonTrainable})
\\
\midrule
% \\
Depth ($J$) & 
$
    (N-1)(1+(2^6n D + 3))
    $
    &
    $
    \mathcal{O}\left(
    (N-1)\left(1+
              \tilde{\epsilon}^{-2n/\alpha}
              L_f^{2n/\alpha}
              (1+n/4)^{2n/\alpha}
            \right)
    \right)
    $
    &
    -
\\
\arrayrulecolor{lightgray}\hline
Width  
& 
    $
     n(N-2) 
        + 
    \max\{n,5W+13\}
    $ 
    &
    $
        n(N-1)+3
    $
    &
    n + N + 1
\\
\arrayrulecolor{lightgray}\hline
\# Parameters P([d])
& 
$
\left(
        \frac{11}{\, 4\,}n^2(N-1)^2 - 1
    \right) 
    (N-1)
    \max\{n+3,5W+16\}^2
    (2^6n D 
    % + 3
     +4
    )
$
&
$
    \mathcal{O}\left(
    \left( \frac{11}{ \, 4 \, } n^2(N-1)^2 - 1\right) 
    (N-1)
    (n + 6)^2
        \big(
            \tilde{\epsilon}^{-4n/\alpha}
            L_f^{4n/\alpha}
            (1+n/4)^{4n/\alpha}
            + 1
        \big)
    \right)
$
& 
$(N+n +1)^2\,(\operatorname{Depth}+1)$
\\
% \arrayrulecolor{black}
\midrule
Implicit Parameter (D) &
$
\epsilon_A =
\sqrt{
N
}
n^{\frac{\alpha}{2}} W^{-\sqrt{D}}
(W^{(1-\alpha) \sqrt{D}} +2)
$ & - & -\\
%
% \arrayrulecolor{gray}
\midrule
$\ln(N)$ & 
\multicolumn{3}{c}{
$
\ln\left(\kappa_{\xxx}\left(5^{-1}\right)\right)
    {
    \left\lceil
    \alpha^{-1}
    \left(
        \log_2(\operatorname{diam}(\xxx))
            -
        \log_2\left(
            \epsilon_A/3L_f
        \right)
        +
        \log_2\left(   \left( 
                    C_{\eta}
                    2 c \lceil 1 / \alpha \rceil \cdot \log_2\left(\kappa_{\xxx}\left(5^{-1}\right)\right)
                \right)_{++}    \right)
    \right)
    \right\rceil
    }
$
}
\\
\arrayrulecolor{lightgray}
\midrule
q & 
\multicolumn{3}{c}{
$\mathcal \mathcal{Q}_{f(\xxx)}(\epsilon_Q)$
}
\\
\bottomrule
\\
\end{tabular}
\end{adjustbox}
\end{table}

\begin{remark}[The Width Parameter in Tables~\ref{tab_spacecomplexity_universalmetrictransformer} and~\ref{tab_spacecomplexity_universalFFNN}]
\label{remark_W_width_parameter}
The width parameter $W$ in Proposition~\ref{prop_universal_approximation_improved_rates} only concerns the case where the approximating feedforward networks are wide and utilizes a trainable activation function of singular-ReLU-type (first column of Table~\ref{tab_spacecomplexity_universalFFNN}).  For example, one can take $W=\lceil \epsilon^{-1}\rceil$.  
\end{remark}

A key step in the derivation of  Theorem~\ref{thrm_main_StaticCase}, is the following refinement of the central universal approximation theorem for deep feedforward networks with (possibly) trainable activation function.  
Briefly, the following result is a universal approximation theorem, which reflects not only the complexity of the target function being approximated and the size of the compact subset $K\subseteq \rr^n$ on which the approximation holds, but also $K$'s \textit{fractal dimension}.  This refines many universal approximation theorems in the literature.  For instance, it refines \cite{pmlr-v75-yarotsky18a} which concerns functions defined on the unit cube, it provides a quantitative version of \cite{kidger2021neural}, and it parallels the findings of \cite{SHAHAM2018537} beyond the case where $K$ is a differentiable sub-manifold of $\rr^n$; all while allowing for trainable activation functions.  Note that in the special case where $\rr^m=\rr$, $K=[0,1]^n$, and $\sigma$ is of singular-ReLU type, we recover \citep[Theorem 1]{Shen2021ThreeLayersSuperExpressive}.  
\begin{proposition}[{$\NN[\cdot]$-Networks are Efficient Universal Approximators}]
\label{prop_universal_approximation_improved_rates}
Let $n,m\in \nn_+$, $K\subseteq \rr^n$ be a compact set with at least two points, $0<\alpha\leq 1$, $f\in C^{\alpha}(K,\rr^m)$, and let $\sigma$ be an activation function as in Definitions~\ref{defn_TrainableActivation_Singular},~\ref{defn_TrainableActivation_Smooth} or~\ref{defn_TrainableActivation_ClassicalNonTrainable}.  For every approximation error $\epsilon>0$ and any width parameter $W\in \nn_+$, there is a feedforward neural network $\hat{f}_{\theta}$ satisfying
\[
\sup_{x \in K}\, 
    \|
        f(x) - \hat{f}_{\theta}(x)
    \|
\leq
C_K
\epsilon,
\]
where the constant $C_K>0$ encodes the ``complexity'' of the input space $K$ and is defined by
\begin{equation}\label{def:ck}
C_K \eqdef  
c \,
\sqrt{m}
\lceil \alpha^{-1} \rceil
\underbrace{
\log_2\left( \kappa_K(5^{-1}) \right)
}_{\mbox{Dimension of $K$}}
\underbrace{\operatorname{diam}(K)^\alpha}_{\mbox{Size of $K$}},
\end{equation}
where $c>0$ is an absolute constant\footnote{By absolute constant, we mean that $c$ is independent of $W$,$\epsilon$, $n$, $m$, $\alpha$, and of $K$.} and $\kappa_K$ is defined in \eqref{eq.kk}. 
Furthermore, the space complexity of $\hat{f}_{\theta}$ is recorded in Table~\ref{tab_spacecomplexity_universalFFNN}.  
\end{proposition}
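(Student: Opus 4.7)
The plan is to prove Proposition~\ref{prop_universal_approximation_improved_rates} in three stages. First, I extend $f$ from $K$ to all of $\rr^n$ via the componentwise McShane--Whitney extension, producing $\tilde f \in C^{\alpha}(\rr^n, \rr^m)$ with $\|\tilde f\|_{C^{\alpha}} \lesssim \sqrt{m}\, L_f$; the $\sqrt m$ accounts for passing from coordinatewise to Euclidean norm and is the source of this factor in $C_K$. Second, I exploit the metric capacity by iterating the defining inequality of $\kappa_K$ at scale $1/5$: starting with $K \subseteq \Ball(x_0, \operatorname{diam}(K))$ and splitting into at most $\kappa_K(1/5)$ sub-balls of radius $\operatorname{diam}(K)/5$, then recursing, I obtain after $\lceil \log_5(\operatorname{diam}(K)/\delta) \rceil$ levels a $\delta$-cover of $K$ of cardinality $N \leq \kappa_K(1/5)^{\lceil \log_5(\operatorname{diam}(K)/\delta)\rceil}$. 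Choosing $\delta \sim (\epsilon/L_f)^{1/\alpha}$ makes piecewise-constant approximation on this cover incur $\alpha$-Hölder error of order $\epsilon$, and taking $\log_2$ produces the product $\lceil \alpha^{-1}\rceil \log_2 \kappa_K(5^{-1}) \operatorname{diam}(K)^{\alpha}$ appearing in the definition~\eqref{def:ck} of $C_K$.

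Third, I construct the approximating network case by case. For singular-ReLU activations (Definition~\ref{defn_TrainableActivation_Singular}), I adapt the three-layer super-expressive construction of \citep[Theorem~1]{Shen2021ThreeLayersSuperExpressive}: the combined ReLU${}+{}\lfloor\cdot\rfloor$ unit encodes the $N$-point cover index via the floor component and then decodes the prescribed values of $\tilde f$ at the centres via ReLU slabs, with the width parameter $W$ governing the $W^{-\sqrt D}$ trade-off in the intrinsic error $\epsilon_A$. For smooth trainable activations (Definition~\ref{defn_TrainableActivation_Smooth}), I combine Jackson-type polynomial approximation of $\tilde f$ with the Kidger--Lyons quantitative emulation of smooth non-polynomial activations by neural networks, using the ReLU branch of the trainable activation to realize identity-shortcut layers; this is what produces the depth factor $\tilde\epsilon^{-2n/\alpha} L_f^{2n/\alpha}$. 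For classical activations (Definition~\ref{defn_TrainableActivation_ClassicalNonTrainable}), I apply the qualitative Kidger--Lyons theorem directly to $\tilde f$ on a ball containing $K$, combined with the cover above; the absence of a quantitative depth is consistent with the ``$-$'' entry in Table~\ref{tab_spacecomplexity_universalFFNN}.

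The principal obstacle is the singular-ReLU case, where the bookkeeping among the width $W$, the dimension parameter $D$, and the cover cardinality $N$ is delicate. The factor $2^6 n D$ in the depth comes from decoding a $D$-bit index through a width-$W$ ReLU block, and the quadratic term $(N-1)^2$ in the parameter count reflects the interaction between the cover size and the per-ball interpolation network. The smooth and classical cases are then extracted from the singular-ReLU blueprint by replacing the $\lfloor\cdot\rfloor$ unit with a smooth or classical surrogate, with the loss of sharpness predicted by the respective expressivity results. Finally, summing the discretization error from stage two with the activation-specific approximation error produces the uniform bound $\sup_{x\in K}\|f(x)-\hat f_\theta(x)\| \leq C_K \epsilon$ with the stated $C_K$.
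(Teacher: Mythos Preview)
Your stage~2 contains a genuine gap: you claim that ``taking $\log_2$'' of the cover cardinality produces the product $\lceil \alpha^{-1}\rceil \log_2 \kappa_K(5^{-1})\,\operatorname{diam}(K)^{\alpha}$, but it does not. With $\delta\sim(\epsilon/L_f)^{1/\alpha}$ you get
\[
\log_2 N \;\lesssim\; \big(\log_5\operatorname{diam}(K)+\alpha^{-1}\log_5(L_f/\epsilon)\big)\,\log_2\kappa_K(5^{-1}),
\]
which is logarithmic in $\operatorname{diam}(K)$ and depends on $\epsilon$, whereas $C_K$ carries $\operatorname{diam}(K)^{\alpha}$ as a power and is $\epsilon$-free. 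So your covering number cannot be the source of $C_K$. More fundamentally, $C_K$ in the statement is a multiplicative error constant, not a combinatorial count; your piecewise-constant approximation already achieves error $\le L_f\delta^\alpha\le\epsilon$ with no extra factor, and nothing in your outline explains why the final bound should be $C_K\epsilon$ rather than simply $\epsilon$. You also appear to have imported the covering/simplex machinery (including the $(N-1)^2$ parameter term) from the proof of Theorem~\ref{thrm_main_StaticCase}; that $N$ is the simplex dimension and lives in Table~\ref{tab_spacecomplexity_universalmetrictransformer}, not in Table~\ref{tab_spacecomplexity_universalFFNN}.

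In the paper the constant $C_K$ arises multiplicatively from two operations that you bypass. First, $f$ is extended from $K$ to $\rr^n$ not by McShane--Whitney but by the random-projection Lipschitz extension of \cite{BRUE2021LipExtRP} applied to the $\alpha$-snowflake $(K,d_n^\alpha)$; the doubling number of the snowflake is bounded by $\kappa_K(5^{-1})^{\lceil 1/\alpha\rceil}$ (Lemma~\ref{lem:DoublingOfSnowFlakesAndCoveringNumber}), and this is what injects $\lceil\alpha^{-1}\rceil\log_2\kappa_K(5^{-1})$ into the extended H\"older constant (Lemma~\ref{lem:HolderExtensionRnRm}). Second, $K$ is enclosed in a hypercube of side $\sim\operatorname{diam}(K)$ via Jung's theorem and affinely rescaled to $[0,1]^n$; the H\"older norm scales by $(2r_K)^\alpha\sim\operatorname{diam}(K)^\alpha$. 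After normalising so that each component lies in the unit ball of $C^\alpha([0,1]^n,\rr)$, the paper invokes the unit-cube results of \cite{ShenYangZhang2021}, \cite[Proposition~59]{kratsiosuniversal2021}, and \cite{KidgerLyons2020} componentwise, then parallelises the $m$ components via \cite[Proposition~5]{FlorianHighDimensional2021}; undoing the normalisation multiplies the unit-cube error $\epsilon$ by $C_0\approx C_K$, with the $\sqrt m$ coming from the $\ell^\infty\!\to\!\ell^2$ passage in~\eqref{PROOF_prop_universal_tree__main_estimate_Part_A}. No cover of $K$ is built in this proposition.
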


\begin{table}[H]%
    \centering
    \caption{Upper bounds on the model complexity of the feedforward network $\hat{f}_{\theta}$ in Proposition~\ref{prop_universal_approximation_improved_rates}.}
    \label{tab_spacecomplexity_universalFFNN}
    \begin{adjustbox}{width=\columnwidth,center}
    \begin{tabular}{@{}llll@{}}
    \toprule
    % \cmidrule[0.3ex](){1-r}\\
    \textbf{Activation} $\sigma$ & \textbf{Singular}~(\ref{defn_TrainableActivation_Singular})
    &
    \textbf{Smooth}~(\ref{defn_TrainableActivation_Smooth})
    & 
    \textbf{Classical}~(\ref{defn_TrainableActivation_ClassicalNonTrainable})
    \citep{kidger2021neural}
    \\
    \midrule
    Depth $(J)$ &  
    $
    m(1+(2^6n D + 3))
    $
    &
    $
    \mathcal{O}\left(
    m\left(1+
              \tilde{\epsilon}^{-2n/\alpha}
              L_f^{2n/\alpha}
              (1+n/4)^{2n/\alpha}
            \right)
    \right)
    $
    &
    -
    \\
    \arrayrulecolor{lightgray}\hline
    Width  & 
    $
     n(m-1) 
        + 
    \max\{n,5W+13\}
    $ 
    &
    $
        nm+3
    $
    &
    n + m + 2
    \\
    \hline
    $\#$ Parameters P([d])
    & 
    $
    \left(
            \frac{11}{\,4 \,}n^2m^2 - 1
        \right) 
        m
        \max\{n+3,5W+16\}^2
        (2^6n D 
        % + 3
         +4 
        )
    $
    &
    $
        \mathcal{O}\left(
        \left( \frac{11}{ \, 4 \, } n^2m^2 - 1\right) 
        m
        (n + 6)^2
            \big(
                \tilde{\epsilon}^{-4n/\alpha}
                L_f^{4n/\alpha}
                (1+n/4)^{4n/\alpha}
                + 1
            \big)
        \right)
    $
    & 
    $(n+m +2)^2\,(\operatorname{Depth}+1)$
    \\
    \hline
    Implicit Parameter $(D)$ & 
    $
\epsilon
    =
n^{\frac{\alpha}{2}} W^{-\sqrt{D}}
(W^{(1-\alpha) \sqrt{D}} +2)
$
&
-
&
-
    \\
    \bottomrule
    \\
    \end{tabular}
    \end{adjustbox}
\end{table}

\section{Dynamic Case -- Universal Approximation of Causal Maps}
\label{s_Dynamic_Case}
This section's main result (Theorem~\ref{thrm_main_DynamicCase}) states that any function between discrete-time path spaces which flow information forward can be approximated by the dynamic extension of our geometric transformer neural network architecture. 
\label{s_Dynamic_Case__ss_Preliminaries}
\subsection{Compact Subsets in Discrete-time Path Spaces}
\label{s_Dynamic_Case__ss_Preliminaries___sss_Compact_in_Pathspace}
We are interested in compact subsets $\kkk$ of the path space $\xxx^{\zz}$. In order to detail some different classes of paths which  are relevant to our analysis, we need to fix the rate at which time flows, by specifying a time-grid $\{t_n\}_{n\in \zz}\subset \rr$.  Any $t_n<0$ represents the past, $t_0=0$ is the present, and any $t_n$ are future times $t_n>0$.  

On the time-grid we only assume that it spans all time, and that it is not overly sparse nor overly clustered at any instance in time.  These requirements are formalized by the following assumption.  Given any $\xb\in \xxx^{\zz}$, we define $\Delta_n\xb \eqdef  x_{t_n}-x_{t_{n-1}}$, and set $\Delta t_n \eqdef  t_{n+1}-t_n$.  

\begin{assumption}[Non-Degenerate Time-Grid]
\label{assumption_of_regular_gridmesh}
The time-grid $\mathbb{T} \eqdef \{t_n\}_{n\in \zz}\subset \rr$ satisfies: $t_0=0$ and
\begin{enumerate}
    \item[(i)] \textbf{Spans all time:} $\inf_{n\in \zz} \, t_n=-\infty$ and $\sup_{n\in \zz}\, t_n =\infty$,
    \item[(ii)] \textbf{Not overly sparse nor overly clustered:} 
        \[
            0<
        \delta_-
         \eqdef 
        \inf_{n\in \zz}\,  \Delta t_n 
            \leq 
        \sup_{n \in \zz}\,  \Delta t_n 
         \eqdef  {\delta_+}
        <\infty
        .
        \]
\end{enumerate}
\end{assumption}
Our dynamic universal approximation theorem will hold on compact subsets $\kkk$ of the path-space $\xxx^{\zz}$, which, by Tychonoff's Product Theorem \citep[Theorem 37.3]{munkres2000topology}, are of the form $\kkk= \prod_{n\in \zz} \, K_n$, where each $K_n\subseteq \xxx$ is compact.  Our worst-case model complexity result concerns compact subsets of the path space at this level of generality.

Although we take inspiration for considering compact path spaces of this form from the machine learning for dynamical systems and reservoir computing literature \cite{JPandLyydmila2019}, we are equally motivated by stochastic analysis; specifically from the high-probability behaviour of paths of solutions of stochastic differential equations (SDEs).  The next result gives a precise statement motivating our first compact class of paths.  We begin by considering the link between general stochastic differential equations, and the following compact path space
\begin{equation}
\label{eq:Exponential_Growth_PathSpace}
        K_{\mathbb{C},C^{\star},\varepsilon}^{\operatorname{exp}}
     \eqdef  
        \left\{\xb\in \xxx^{\zz}:\,
                \|x_0\|\le C_0 
            \mbox{ and }
                    (\forall n\in \mathbb{N}_+)\,
                    \|x_{t_n}\|
                \le 
                    \frac{
                            C^{\star}
                        }{
                            \varepsilon^{1/2}
                        }
                    \,
                        C_n^{1/2}
                    \, 
                        e^{-n \,C_n \,\delta_- /2 }
        \right\}
,
\end{equation}
which in addition to the time-grid $\mathbb{T}$ is defined by the following hyperparameters: a sequence of positive constants $\mathbb{C} \eqdef (C_n)_{n\in \mathbb{Z}}$, $C^{\star}>0$, and a fixed $0< \varepsilon\le 1$.  
We illustrate this phenomenon for SDEs with deterministic initial condition and a broad range of SDEs with random initial condition.
\begin{proposition}[{Common Paths of SDEs Satisfy our Compactness Conditions}]
\label{prop:HighProbConstainement}
Fix a time grid $\mathbb{T} \eqdef \{t_n\}_{n\in \mathbb{Z}}$ satisfying Assumption~\ref{assumption_of_regular_gridmesh} and a sequence of positive constants $\mathbb{L} \eqdef (L_n)_{n\in \mathbb{Z}}$.  
Let $(\Omega,\mathcal{F},\mathbb{F} \eqdef (\mathcal{F}_t)_{t\ge 0},\mathbb{P})$ be any filtered probability space supporting an $n$-dimensional Brownian motion $(W_t)_{t\ge 0}$, and any pair of $\mathbb{F}$-progressively measurable processes $(\alpha_t)_{t\ge 0}$ and $(\beta_t)_{t\ge 0}$ from $[0,\infty)\times \Omega\times \mathbb{R}^d$ to $\mathbb{R}^d$ and to the set of $d\times n$-matrices respectively, satisfying
\begin{enumerate}
    \item[(i)] \textbf{Integrability:} For every $n\in \mathbb{N}_+$, $
            \mathbb{E}\left[
                 \int_0^{t_n}\|\alpha_{t_n}(0)\|^2
                +
                \|\beta_{t_n}(0)\|^2dt
            \right]
        <
            \infty$;
    \item[(ii)] \textbf{Local Lipschitz Regularity:} 
        For every integer $n\in \mathbb{N}$, every $t\in [0,t_n]$, and every pair of points $x,y\in \mathbb{R}^d$, it holds that  $\|\beta_t(x)-\beta_t(y)\|+\|\alpha_t(x)-\alpha_t(y)\| \le L_n \|x-y\|$;
    \item[(iii)] \textbf{Initial Condition}: Assume either that $X_0$ is a constant $x\in \mathbb{R}^d$ or $X_0$ is a $\mathcal{F}_0$-measurable sub-Gaussian random vector and is independent of $W_0$.
\end{enumerate}
Let $(X_t)_{t\ge 0}$ be a solution to the SDE
\[
X_t  = X_0  + \int_0^t\, \alpha_s(X_s)ds  + \int_0^t\,\beta_s(X_s)\, dW_s, \qquad t\geq 0.
\]
Then, there exists a sequence of positive constants $(C_n)_{n\in \mathbb{Z}}$, depending only on the time-grid $\mathbb{T}$ and on the local Lipschitz constants $\mathbb{L}$, such that, for every $0<\varepsilon \le 1$, the discrete-time stochastic process $(X_{t_n})_{n \in \mathbb{N}}$ belongs to the compact subset $K_{\mathbb{C},C_{(\delta_-,\mathbb{L})},\varepsilon}^{\operatorname{exp}}\subseteq(\mathbb{R}^d)^{\mathbb{N}}$ with high probability:
\[
\mathbb{P}\Big(
(X_{t_n})_{n\in \mathbb{Z}}\, 
\in \,
K_{\mathbb{C},C_{(\delta_-,\mathbb{L})},\varepsilon}^{\operatorname{exp}}
\Big)
\gtrsim
(1-\varepsilon),
\]
where $C_{(\delta_-,\mathbb{L})}>0$ depends only on $\delta_-$ and on $\mathbb{L}$, and $\gtrsim$ hides a constant depending only on $X_0$'s law.
\end{proposition}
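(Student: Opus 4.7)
The plan is to reduce the claim to an $L^2$--moment estimate for the SDE on each finite horizon $[0,t_n]$ followed by Chebyshev/Markov tail bounds and a union bound across the time-grid. The starting point is that, although (ii) is phrased as a \emph{local} Lipschitz condition, for each $n$ it is actually a global Lipschitz bound (uniform in $t\in[0,t_n]$) with constant $L_n$. Combined with (i) and the trivial linear growth $\|\alpha_t(x)\|\le \|\alpha_t(0)\|+L_n\|x\|$ (and analogously for $\beta$), the standard It\^o isometry + Burkholder--Davis--Gundy + Gronwall argument produces a finite horizon estimate
\[
\EE\!\left[\sup_{s\le t_n}\|X_s\|^2\right]
\;\le\;
K_n\bigl(1+\EE\|X_0\|^2\bigr),
\]
where $K_n$ is an explicit function of $t_n\le n\delta_+$, $L_n$, and $\int_0^{t_n}(\|\alpha_s(0)\|^2+\|\beta_s(0)\|^2)\,ds$. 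In particular, $K_n$ depends only on the time-grid $\mathbb{T}$ and the Lipschitz sequence $\mathbb{L}$ (plus the data in (i)).

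Next, I would define the sequence $(C_n)_{n\in\ZZ}$ by letting $C_n$ slightly dominate $K_n(1+\EE\|X_0\|^2)$ for $n\in\nn_+$ (so that $\EE\|X_{t_n}\|^2\le C_n$), taking $C_n\eqdef C_0$ for $n\le 0$, and absorbing the $(\delta_-,\mathbb{L})$-dependence into a single constant $C_{(\delta_-,\mathbb{L})}$. With the radii
\[
R_n \eqdef \frac{C^\star}{\varepsilon^{1/2}}\,C_n^{1/2}\,e^{-nC_n\delta_-/2},
\]
Chebyshev's inequality yields $\PP(\|X_{t_n}\|>R_n)\le C_n/R_n^2=\varepsilon(C^\star)^{-2}e^{nC_n\delta_-}$ for $n\ge 1$; choosing the growth rate of $C_n$ and the constant $C^\star = C_{(\delta_-,\mathbb{L})}$ appropriately makes $\sum_{n\ge 1}\PP(\|X_{t_n}\|>R_n)$ uniformly summable and controllable by $\varepsilon$ times a constant depending only on $\delta_-$ and $\mathbb{L}$. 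A union bound over $n\in\nn_+$ then gives the desired $\gtrsim 1-\varepsilon$ probability that $\|X_{t_n}\|\le R_n$ for all $n\ge 1$.

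It remains to handle the initial datum and the times $n\le 0$. Under (iii), if $X_0=x$ is deterministic one simply picks $C_0\ge \|x\|$, making the event $\{\|X_0\|\le C_0\}$ sure and contributing $0$ to the tail bound; if $X_0$ is sub-Gaussian, standard sub-Gaussian concentration gives $\PP(\|X_0\|>C_0)\le 2e^{-c C_0^2/\sigma_0^2}$, which is absorbed by taking $C_0$ proportional to $\sigma_0\sqrt{1+\log(1/\varepsilon)}$, with the proportionality constant being the $\gtrsim$-hidden constant in the statement (it depends only on $\mathrm{Law}(X_0)$). For $n<0$, extending $X_{t_n}\eqdef X_0$ places these values in the $\{\|x_0\|\le C_0\}$ clause of~\eqref{eq:Exponential_Growth_PathSpace}, so they are covered by the same estimate.

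The principal technical obstacle is the bookkeeping, not the probabilistic ingredient: one must track carefully how the Gronwall constant $K_n$ depends \emph{exponentially} on $L_n$ and on $t_n$, how $C_n$ must grow at least fast enough to make $C_n/R_n^2$ a summable sequence, and how to show that the resulting constant $C_{(\delta_-,\mathbb{L})}$ and the implicit constant in $\gtrsim$ can indeed be separated and made to depend only on $(\delta_-,\mathbb{L})$ and on $\mathrm{Law}(X_0)$, respectively. Once the calibration of $(C_n)_n$ and $C^\star$ is fixed, Chebyshev and the union bound close the argument.
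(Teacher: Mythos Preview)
Your overall plan---finite-horizon $L^2$ moment bound plus a Markov/Chebyshev tail estimate---matches the paper's. The first paragraph is exactly right: conditions (i)--(ii) give, via BDG and Gronwall, a bound of the form $\EE\bigl[\sup_{s\le t_n}\|X_s\|^2\bigr]\le C_n e^{C_n t_n}$ on each horizon; the paper cites \citep[Theorem 2.2]{Nizar_OSC_STP_BSDE_2013} for this but your sketch is the content of that theorem.

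Where your proposal and the paper diverge is in passing from the moment bound to the uniform-in-$n$ conclusion. You use Chebyshev on each $\|X_{t_n}\|$ separately and then a union bound. But look at what you actually computed: with $C_n\ge \EE\|X_{t_n}\|^2$ and the radius $R_n$ from the statement, your tail bound is
\[
\PP(\|X_{t_n}\|>R_n)\le C_n/R_n^2=\varepsilon(C^\star)^{-2}e^{\,nC_n\delta_-},
\]
which \emph{grows} exponentially in $n$. No choice of $C_n$ or $C^\star$ can make $\sum_n e^{nC_n\delta_-}$ converge, because enlarging $C_n$ only makes the exponent worse, and you cannot take $C_n$ below the second-moment bound. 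So the union bound is vacuous as written, and the sentence ``choosing the growth rate of $C_n$ \dots makes the sum summable'' is where the argument breaks.

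The paper avoids the union bound entirely. Instead it picks auxiliary constants $\tilde C_n\ge C_n$ with $\sum_n e^{(C_n-\tilde C_n)n\delta_-}<\infty$, applies Tonelli to get
\[
\EE\Bigl[\sum_{n\ge1}\tfrac{e^{-\tilde C_n t_n}}{C_n}\sup_{s\le t_n}\|X_s\|^2\Bigr]\le \sum_{n\ge1}e^{(C_n-\tilde C_n)t_n}<\infty,
\]
and then applies a \emph{single} Markov inequality to this weighted sum. On the high-probability event $\{\text{sum}<\gamma\}$ every individual term is automatically $<\gamma$, which immediately gives $\|X_{t_n}\|\le \gamma^{1/2}C_n^{1/2}e^{\tilde C_n t_n/2}$ for all $n$ simultaneously---no union bound, no summability issue. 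Setting $\gamma=\varepsilon^{-1}\sum_n e^{(C_n-\tilde C_n)n\delta_-}$ produces the constant $C_{(\delta_-,\mathbb L)}$. The treatment of the initial condition (deterministic vs.\ sub-Gaussian) is then handled separately, essentially as you describe, with the sub-Gaussian concentration contributing the hidden constant in $\gtrsim$.
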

\begin{remark}
\label{remark:constantinSDEApplication}
The constant suppressed by $\gtrsim$ in Proposition~\ref{prop:HighProbConstainement} is $1$ if $X_0$ is deterministic, and when $X_0$ is sub-Gaussian, it is positive for large enough $L_0$.
\end{remark}

Proposition~\ref{prop:HighProbConstainement} guarantees that, with high probability, the paths of an SDE with a random sub-Gaussian initial state belongs to a path space of the form~\eqref{eq:Exponential_Growth_PathSpace}.  However, one can easily construct other types of stochastic processes whose typical paths lie in more general compact subsets of the product space $(\mathbb{R}^d)^{\mathbb{Z}}$.  Similarly to the weighted approximation literature \cite{prolla1971bishop,schmocker2022universal}, a broad family of path-spaces can be defined by a compact subset $K\subset \mathbb{R}^d$ and a monotone increasing map $w:\mathbb{Z}\rightarrow [0,\infty)$, which in analogy with the reservoir computing \cite{OrtegraGrigoryeva,JPandLyydmila2019} and approximation theory \cite{prolla1971bishop,schmocker2022universal} literature, we call a \textit{weighting function}.  Together,  the pair $(K,w)$ define the following path-space comprised of all paths in $(\mathbb{R}^d)^{\mathbb{Z}}$ that are $w(i)$-close to the compact set $K$ at time $t_i\in \mathbb{Z}$:
\[
        K^{w} 
    \eqdef  
        \big\{
        \xb\in (\rr^d)^{\mathbb{Z}}:\, 
        \forall i\,
        \exists y \in K \mbox{ s.t.\ }
        \|x_{t_i}-y\|\le w(|i|)
        \big\}
.
\]

More broadly, regular classes of paths will yield smaller networks.  In particular, we consider three classes of paths, $K^{\mathbb{Z}}$, $K_{C,p}^{\infty}$, and $K_{C,p}^{\alpha}$, induced by a fixed compact $K\subseteq \mathbb{R}^d$, and defined as follows.  
The first notable case considers uniformly bounded paths for all time.  This case is typical within the reservoir computing literature, see e.g. \cite{OrtegraGrigoryeva}, and is formalized by setting $\kkk=K^{\zz}$.  

The next case consists of paths that pass through $K$ at the present time, and whose time-increments uniformly control the $p$-variation.  For any $C,p>0$, we define
\[
K_{C,p}^{\infty}
     \eqdef  
\left\{\xb\in (\rr^d)^{\zz}:\,
x_0\in K  \mbox{ and } (\forall n\in \zz)\,
\|\Delta_n \xb\|^p \leq C 
    |\Delta t_n|
\right\}.
\]

Our last distinguished compact class of paths in $(\rr^d)^{\zz}$ mimics the approximation spaces of \citet{DeVoreLorentz1993ClassicConstructiveApproxBook}, recently studied in the neural network context in \cite{gribonval2021approximation}, and describes paths whose $p$-variation rapidly converges to $0$.  This set consists of paths that pass through $K$ at the present time, and whose weighted $p$-variation converges when re-weighted by a factor of $|n|_{++}^{\alpha}$.  
Formally, for any fixed $C>0$, $p\geq 1$, and any $0<\alpha<1-p$, we define
\[
K_{C,p}^{\alpha}
     \eqdef  
\left\{\xb\in (\rr^d)^{\zz}:\,
x_0\in K  \mbox{ and }
\sum_{n\in \zz}
\,
\frac{
\|\Delta_n \xb\|^p}
{
    |\Delta t_n||n|_{++}^{\alpha}
} \leq C
\right\}.
\]
Morally, $|n|_{++}$ is $|n|$ with the added technical point being that we avoid division by $0$ when $n=0$.

Next, we describe the functions between suitable discrete-time path spaces.  
\subsection{Causal Map of Approximable Complexity}
\label{s_Preliminaries__ss_Causal_Systems}
We build on the ideas of the reservoir computing literature \citep{boyd1985fading,jaeger2001echo,ManjunathJaeger_2013_ESP,jaeger2001echo,JPandLyydmila2019}, and on that of non-anticipative functionals of \cite{MR2586744,MR3059194}.  In our setting this translates to maps $F:\xxx^{\zz}\rightarrow \yyy^{\zz}$ between discrete-time path spaces which are \emph{causal} in the following sense. 

\begin{definition}[Causal Map]\label{defn_system}
Given any two metric spaces $\xxx,\yyy$, a map $F \colon \mathcal \xxx^\zz \to \mathcal \yyy^\zz$ is called a \emph{causal map} if, for every $t \in \zz$ and $x,x' \in \mathcal \xxx^\zz$ with $x_s = x_s'$ for $s \leq t$, we have $F(x)_s = F(x')_s$.
\end{definition}

Let us consider our first, and possibly most familiar, broad class of stochastic processes.  This example frames solutions to discrete-time SDEs in the language of causal maps taking values in paths in Wasserstein spaces.  

The crucial point here is that the framework of causal maps encompasses all classical discretized diffusion processes, such as any standard neural SDE model \cite{CKT20,gier20, Lyons_2021_NeuralSDEsGANs}.  We employ independent non-Gaussian noise in the SDEs in our example to illustrate that causal maps can comfortably describe much more complicated structures than what is describable with any diffusion model.  Similarly, we highlight that the drift and diffusion coefficients of the SDE are of much lower regularity than what can be handled with the classical theory of diffusion (without resorting to stochastic differential inclusions \cite{SDIs_book_2013}).  
\begin{example}[Discrete-time SDEs (with Non-Gaussian Noise)]
\label{ex_SDEs}
    Let $p\geq 1$ and $(\yyy,d_{\yyy})=(\mathcal{P}_p(\rr^d),\mathcal{W}_p)$.  Let $(W_n)_{n \in \mathbb N}$ be a sequence of independent standard Gaussians, and let $(Z_n)_{n\in \zz}$ be a sequence of i.i.d. random vectors in $\rr^d$, independent of $(W_n)_{n\in\nn}$, with $Z_0\sim \nu \in \mathcal{P}_p(\rr^d)$.  For simplicity, we assume that $\delta \eqdef \delta_+=\delta_-$, where $\delta_+$ and $\delta_-$ are as in Assumption~\ref{assumption_of_regular_gridmesh}. 
    Let the H\"older coefficients of the drift coefficients $\mu(t_n, \cdot) \in C^\alpha(\rrd,\rrd)$ and diffusion coefficients $\sigma(t_n, \cdot) \in C^\alpha(\rrd,\mathbb R^{d \times d})$ be uniformly bounded in $(t_n)_{n \in \mathbb N}$, and define at time $t_n$, for initial condition $X_{t_n} = x_{t_n} \in \rrd$, $X_{t_{n+1}}$ by
    \[
        X_{t_{n+1}} = x_{t_n}
        + \delta%
        \mu(t_n, x_{t_n}) 
        + \sqrt{\delta}\sigma(t_n, x_{t_n}) W_{n}
        + Z_{n}
        .
    \]
    This discrete-time SDE induces the  causal map
    \[
        F(\xb)_{t_n}
         \eqdef 
        \Law( X_{t_{n+1}} | X_{t_{n}} = x_{t_{n}} ), 
    \]
    where $\Law( Y | X = x)$ denotes the (a.s.\ well-defined) conditional law of a random variable $Y$ given $X=x$.
\end{example}
\begin{remark}[Higher Order Markovian SDEs]
\label{remark_extended_state_space}
By extending the state space, Example \ref{ex_SDEs} also covers higher order Markovian SDEs, i.e.\ when drift and diffusion coefficients depend on finitely many past states.  In order to ease reading, the example was presented in the Markovian case.
\end{remark}

In principle, a causal map's memory and internal structure may be infinitely complicated since one can easily construct such maps which depend on the arbitrarily distant past.  It would be surprising if these pathological causal maps can be approximated in any reasonable variant of the ``uniform on compact sets'' sense. Therefore, in analogy with \cite{boyd1985fading,Lukas2020}, we also exclude such maps.  Conversely, one would expect that any interesting universal approximation theorem for causal maps must encompass all causal maps that depend only on a finite (but potentially long) memory and process any path-segment using finite (but potentially large) number of Euclidean features.  
\begin{definition}[Causal Map of Finite Complexity]
\label{def_AdaptedMap_FiniteComplexity}
Let $L,m\in \nn_+$, $\alpha \in (0,1]$, $f\in C^{\alpha}(\rr\times \xxx^m,\rr^{L})$ (called the \emph{encoding map}), and $\rho\in C^{\alpha}(\rrflex{L},\yyy)$ (called the \emph{decoding map}).   
We associate to $f$ and $\rho$ the system
$F^{f,\rho}:{\xxx}^{\zz}\ni \xb\mapsto (\rho(f(
    t_n,
    x_{t_{n-m}:t_{n}}))
    )_{n\in \zz}\in \yyy^{\zz}$, 
    and call it a \emph{causal map of finite complexity}. $F^{f,\rho}$ is said to be \emph{time-homogeneous} if $f$ has no explicit dependence on the first (time) component.
\end{definition}
By using the parabolic PDE representation of an SDE, offered by the Feynman-Kac formula, \cite{ParabolicPDE_2020_Thesis_Arnulf, Vonwustemberger_2022_Thesis_Arnulf} have shown that feedforward neural networks can efficiently approximate most SDE's associated PDE.  Likewise, regular path-functionals of a jump-diffusion process with Lipschitz coefficients can efficiently be approximated by neural SDEs \cite{gonon2021deepNeuralSDE}.  In a similar spirit, we find that the causal maps of Example~\ref{ex_SDEs} are not only of finite complexity but, a fortiori, they admit a simple representation specified by only a small number of parameters.  

\begin{example}[Discrete-time SDEs (with Non-Gaussian Noise)]
\label{ex_SDEs_are_FC_adaptedMaps}
    Continuing Example~\ref{ex_SDEs}, with the simplified assumption that $\delta=1$, we show that the causal map $F$ is of finite complexity, since it can be expressed as $F(\xb)_{t_n}=\rho(f(t_n,x_{t_n}))$, where the encoding and decoding maps are defined by
    \[
    f(t,x)  \eqdef  
    \left(
        x+ \mu(t,x)
    ,
        \sigma(t,x)
    \right)\quad
    \mbox{ and }\quad
    \rho(\mu,\sigma)%
         \eqdef 
    N_d(\mu,\sigma \sigma^{\top}) \star \nu,
    \]
    where $\star$ denotes the convolution, and  $N_d(\mu,\sigma \sigma^\top)$ a $d$-dimensional normal distribution with mean $\mu$ and covariance matrix $\sigma \sigma^\top$.
    Clearly, $f(t_n, \cdot) \in C^\alpha(\rrd, \rrd \times \mathbb R^{d \times d})$, and moreover we have
    \begin{align*} 
        \mathcal W_p(\rho(\mu_1,\sigma_1), \rho(\mu_2,\sigma_2)) 
        &\le 
        \mathcal W_p(N_d(\mu_1,\sigma_1 \sigma_1^\top), N_d(\mu_2, \sigma_2 \sigma_2^\top))
        \\
        &\le
        \mathbb E 
        \left[ 
            \|\mu_1 - \mu_2 + (\sigma_1 - \sigma_2) \cdot W_n\|^p
        \right]^{1 / p}
        \\
        &\le
        \|\mu_1 - \mu_2\|
        +
        \|\sigma_1 - \sigma_2\|
        \,
        \mathbb E
        \left[
            \|W_n\|^p
        \right]^{1 / p}
    ,
    \end{align*}
so that $\rho $ belongs to $C^1(\rrd \times \mathbb R^{d \times d}, (\mathcal P_p(\rrd), \mathcal W_p))$.
\end{example}
In general, causal maps of finite complexity extend Example~\ref{ex_SDEs} in a number of directions.  For instance, they can describe stochastic processes which become Markovian in an extended state-space, encoding finitely many previous states realized by the process.  Furthermore, the map $f$ in Example~\ref{ex_SDEs_are_FC_adaptedMaps} is a very particular case of an encoding map, and it \doesnothaveto{} be an affine function of the process' current state, drift, and volatility.  In general, $f$ can be replaced by any H\"{o}lder function of time and current state, as in the theory of (non-linear) random dynamical systems (see \citep[Section 4.5]{FreidlinWentzell_1984}). 
Moreover, the decoding map \doesnotneedto{} take values in the subspace of the Wasserstein space $(\pp_1(\rr^d),\mathcal{W}_1)$ of $d$-dimensional Gaussian measures convoluted with $\nu$, representing the possible process' next step.  Rather, $\rho$ can be any H\"{o}lder map into $(\pp_1(\rr^d),\mathcal{W}_1)$, or more broadly, $\rho$ can map into the \textit{adapted Wasserstein space} $(\pp_1(\rr^{dN_F}),\mathcal{AW}_1)$ of \cite{Ruschendor_1985_AdaptedOG} which robustly describes the process' conditional law on the next $N_F$ future steps.  Note that these two generalizations coincide when $N_F=1$.  This is detailed in Example~\ref{ex_SDEs_AW_simple} in Section~\ref{s_Examples__ss_Examples_CausalSystems_wt_AC} below.  Finally, $f$  \doesnothaveto{} have closed-form expression depending only on a finite segment of the process' history nor relating that data to a finite number of latent parameters decoded by $\rho$.  Analogously, $\rho$ can be highly complicated and \doesnotneedto{} be expressible in closed-form as a decoding map depending on finitely many parameters.  

These considerations naturally lead to our reinterpretation of 
the \textit{fading memory property}, which was first formalized by \cite{boyd1985fading} but whose origins date back to ideas of Volterra and Wiener \cite{Wiener_1958_FMP}.  We note that, since its introduction, the fading memory property has been a central tool for deriving approximation theorems for dynamical systems between Euclidean spaces (see \cite{lukovsevivcius2009reservoir,JPandLyydmila2019,Lukas2020,Manjunath_2020_RMS}) and is closely linked to the Echo State Property \cite{jaeger2001echo} that is key to reservoir computing \cite{ManjunathJaeger_2013_ESP,GONON202110,GRIGORYEVA2018495}. 
\begin{definition}[Approximable Complexity]
\label{defn_compression_memory_property}
A causal map $F:\xxx^{\zz}\rightarrow {\yyy}^{\zz}$ is of \emph{approximable complexity} (AC Map) if there exist functions $m,L:(0,\infty)\rightarrow 
\nn_+$ and $c_{AC}:\zz\times (0,\infty)\rightarrow [1,\infty)$ such that,
for each $\epsilon>0$ and each compact $\kkk \subseteq \xxx^{\zz}$, there is $\alpha\in (0,1]$ and there exist 
an approximate encoding map $f_{\epsilon}\in C^{\alpha}(\rr\times \xxx^{m(\epsilon)},\rr^{L(\epsilon)})$
and an approximate decoding map $\rho_{\epsilon}\in C^{\alpha}(\rr^{L(\epsilon)},\yyy)$ such that the associated causal map of finite complexity satisfies
\begin{equation}
\sup_{n \in \zz}
\sup_{\xb\in \kkk}
\,
\frac{d_{\yyy}(F^{f_{\epsilon},\rho_{\epsilon}}(\xb)_{t_n},F(\xb)_{t_n})}{
        c_{AC}(n,\epsilon)
    }<\epsilon
\label{eq_approximation_sense}
.
\end{equation}
An AC map is said to be \emph{time-homogeneous} if there is a family $(F^{f_{\epsilon},\rho_{\epsilon}})_{\epsilon>0}$ of time-homogeneous systems of finite complexity satisfying~\eqref{eq_approximation_sense}.  
\end{definition}
Intuitively, an AC map $F$ is very close to some causal map of finite complexity $F^{f_{\epsilon},\rho_{\epsilon}}$ on some (possibly large) time window around the current time, after which the two may begin to drift apart.  Figure~\ref{fig:compressionrate} illustrates a typical output of $F(\xb)$ (in violet) and $F^{f_{\epsilon},\rho_{\epsilon}}$ (in orange) evaluated on some path $\xb \in \xxx^{\zz}$. 
The rate at which this drifting apart occurs is expressed by the compression rate $c_{AC}$.  The value of $c_{AC}$ is illustrated by the width of the turquoise region in Figure~\ref{fig:compressionrate}. 

\begin{figure}[H]%
    \centering
    \includegraphics[width=.5\linewidth]{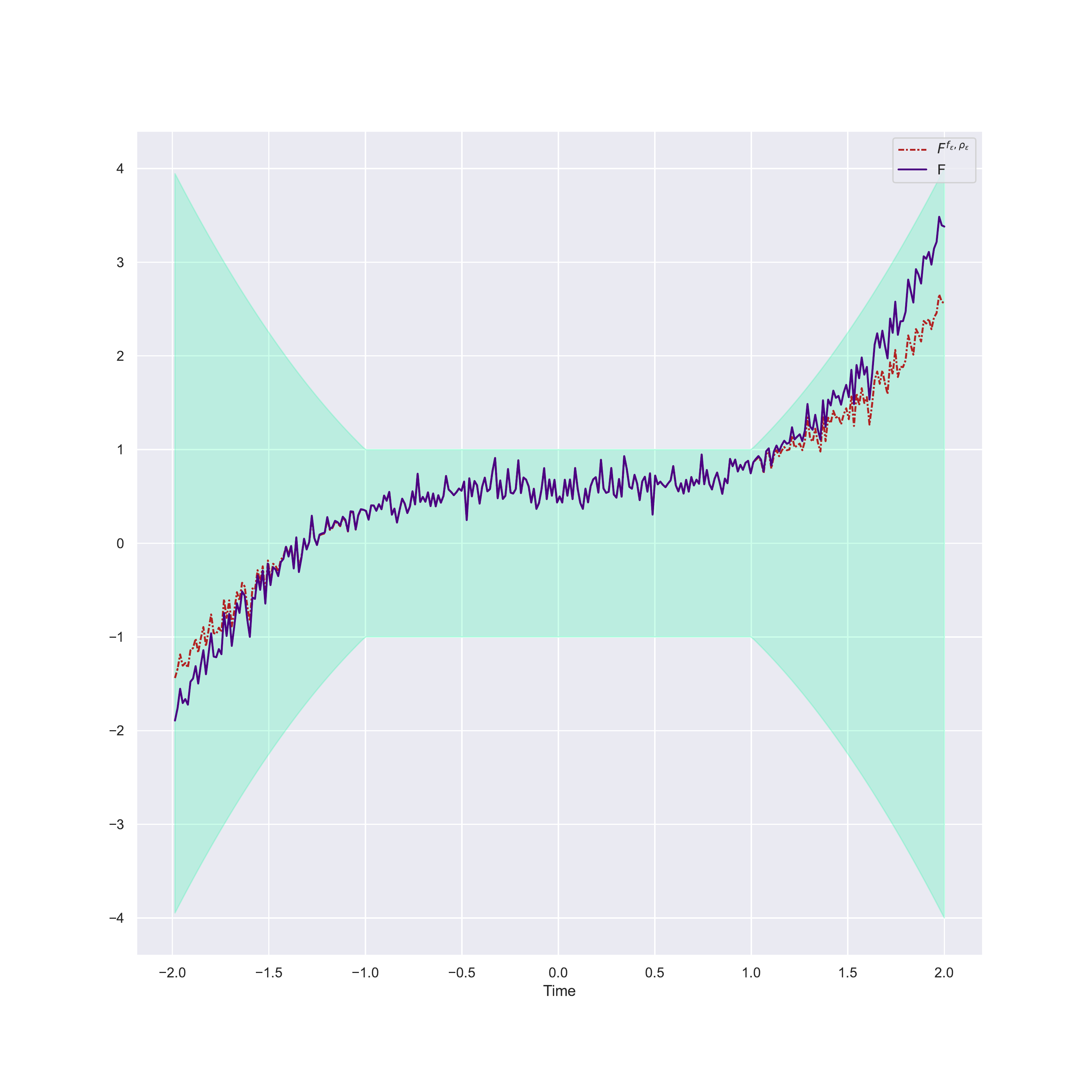}
    \caption{An AC map $F$'s compression rate $c_{AC}$, in Definition~\ref{defn_compression_memory_property}, quantifies the rate at which a finitely-parameterized approximation of $F$ decays.}
    \label{fig:compressionrate}
\end{figure}
 
In the case of $\yyy=\rr$, for any $n\in \zz$ and any $\xb \in \xxx^\zz$ the condition in \eqref{eq_approximation_sense} is equivalent to
\[
F(\xb)_{t_n}
    \in 
\left[
F^{f_{\epsilon},\rho_{\epsilon}}(\xb)_{t_n} - c_{AC}(n,\epsilon) \epsilon
,
F^{f_{\epsilon},\rho_{\epsilon}}(\xb)_{t_n} + c_{AC}(n,\epsilon) \epsilon
\right]
.
\]
Therefore, the outputs of the (possibly intractable) causal map $F$ each belongs to a region defined by the (tractable) causal map of finite complexity $F^{f_{\epsilon},\rho_{\epsilon}}$.  
An example of an AC map, but which is not of finite complexity, is given in Section~\ref{s_Examples}.  
\subsection{The Geometric Hypertransformer (GHT) Model}
\label{s_Dynamic_Case__ss_Preliminaries___sss_Our_Model}
We now introduce our geometric deep learning model, which we will use to approximate AC maps.  
The idea is to approximate AC maps by sequences of geometric transformers, which we sew together using a small auxiliary hypernetwork.  Just as in Figure~\ref{fig_metric_hypertransformer_TIKZ}, the role of this hypernetwork is to sequentially generate the next geometric transformer (given the current geometric transformer's parameters).  From the lens of stochastic analysis, the hypernetwork encoding the dynamic version of our geometric transformer is akin to a Feller process' infinitesimal generator \citep[Theorem II.3.8]{EngelNagel2000OneParameterSemigroups}.  
\begin{definition}[Geometric Hypertransformer]
\label{defn_hypernetwork}
Let $(\yyy,d_{\yyy},\eta,Q)$ be a QAS space, and fix positive integers $m,L,d$, a multi-index $[d]$ where $d_0=(m+1)d$ and $d_J=L$, and an
activation function $\sigma$.  
Let $\hat{\rho}:\rr^L\rightarrow \yyy$ be a geometric transformer, $h\in \mathcal{NN}^{ReLU}_{\cdot}$ a network mapping $\rr^{P([d])}$ to itself, $\theta \in \rr^{P([d])}$, and let $N\in \nn_+$.  The \emph{geometric hypertransformer} (GHT) generated by $(\hat{\rho},h,\theta,N)$ is the 
causal map $F^{(\hat{\rho},h,\theta,N)}:(\rr^d)^{\zz}\rightarrow \yyy^{\zz}$ defined by
\[
F^{(\hat{\rho},h,\theta,N)}(\xb)_{t_n}
     \eqdef 
\hat{\rho}\circ \hat{f}_{\theta_n}(x_{t_{n-m}:t_n}),\qquad n\in\zz,
\]
where $\hat{f}_{\theta_n}\in \NN[[d]]$ for all $n \in \zz$, and the parameters $(\theta_n)_{n\in \zz}$ are defined recursively by
\begin{equation}\label{eq:thetarec}
\theta_n \eqdef \theta \; \mbox{ for } n\leq -N
    \quad\mbox{ and }\quad
\theta_{n+1}
     \eqdef 
\begin{cases}
    h(\theta_{n} ) & : -N< n<N\\
    \theta_n &: n\geq N.
\end{cases}
\end{equation}
\end{definition}
Transformer networks typically carry an encoder-decoder structure, which means that each $\hat{f}$ can be thought of as the composition of two neural networks.  The role of the first network is to encode the incoming input information into some deep latent features designed to optimize the prediction of the second, decoder, network, whose role is to generate the predictions at each time-step.  In our context of the AC map $F$, the role of each of these networks becomes explicit.  Namely, the encoder network's role will be to approximate the $f_{\epsilon}$ given by its approximable complexity, and the role of the decoder network is then to approximate the measure-valued map $\rho_{\epsilon}$, also given by $F$'s approximable complexity.  

GHTs can approximate AC maps on an arbitrarily long but finite-time horizon without suffering from performance degradation.  However, this is not the case when approximating an AC map across an infinite-time horizon, as in this case the GHT's approximation quality will eventually begin to degrade past a prespecified moment in time.  Given a discrete path space $\kkk\subseteq \xxx^{\mathbb{Z}}$, the rate at which the performance of a GHT $\hat{F}:{\xxx}^{\mathbb{Z}}\rightarrow {\yyy}^{\mathbb{Z}}$ degrades beyond a finite-time horizon $N_T\in \mathbb{N}_+$, with hyperparameter $\lambda>0$, is quantified by the map $c^{\hat{F}}_{\kkk,N_T,\lambda}:\mathbb{Z}\rightarrow [1,\infty)$ defined on any $n\in \mathbb{N}_+$ by    
\begin{equation}
\label{eq:self_compression_function}
        c^{\hat{F}}_{\kkk,N_T,\lambda}(n)
    \eqdef
        \sup_{x\in \kkk}\,
            \max\big\{1,
                \lambda\,
                d_{\yyy}\big(
                    \hat{F}(x)_{t_n}
                ,
                    \hat{F}(x)_{t_{\operatorname{sgn}(n)\,\cdot N_T}}
                \big)
            \big\}
    ,
\end{equation}
where $\operatorname{sgn}(n)\eqdef -1$ if $n$ is a negative integer and $\operatorname{sgn}(n)=1$ otherwise.
Outside the time-window $\{-N_T,\dots,N_T\}$, the map in~\eqref{eq:self_compression_function} plays an analogous re-normalizing role to an AC maps' compression rate $c_{AC}$.  

\subsection{Main Result -- GHTs are Universal Causal Maps} 
\label{s_Dynamic_Case__ss_Main_Results}
In what follows, we will use the notation
\begin{equation}\label{def:GHT}
N_T \eqdef 
\min\big\{
    \min\{
    n\in \nn_+:\,
        t_n \geq T 
    \}
,
    \left|\max\{
    n\in \nn_-:\,
        t_n \leq -T 
    \}
\right|
\big\}
.
\end{equation}
Moreover, in Theorem~\ref{thrm_main_DynamicCase}, for a fixed compact set $\kkk\subseteq \xxx^{\zz}$,
$\xxx\subseteq\rr^d$, and an AC map $F:\xxx^{\zz}\rightarrow \yyy^\zz$, we will denote by
$L_{\alpha,\rho_{\epsilon}}$ and $L_{\alpha,f_{\epsilon}}$ the \Holderseminorm{} of $\rho_{\epsilon}$ and $f_{\epsilon}$ in~\eqref{eq_approximation_sense}, and set
$K_n \eqdef \operatorname{pj}_n(\kkk)$, where $\operatorname{pj}_n$ is the projection into the $n$-th coordinate, $\operatorname{pj}_n:(\rr^d)^{\zz}\ni (x_{t_u})_{u\in \zz}\mapsto x_{t_n}\in \rr^d$. 

\begin{theorem}[Adapted Universal Approximation via GHTs]
\label{thrm_main_DynamicCase}
Let $\xxx$ be a subset of $\rr^d$ and $\kkk\subseteq \xxx^{\zz}$ a compact subset.
Fix any AC map $F:\xxx^{\zz}\rightarrow \yyy^\zz$, and a ``time span'' $T>0$, with $T=t_{\bar{n}}$ for some $\bar{n}\in \nn_+$.    
Then, for every $\epsilon>0$, there is a ``compression rate'' $c_{\eps}:\zz \rightarrow (0,\infty)$, with $c_{\eps}(n)=1$ if $|n|\leq N_T$ and otherwise recorded in Table~\ref{tab_extrapolation_function} (depending on $\kkk$), such that the following holds:
there are a multi-index $[d]$, a geometric transformer $\hat{\rho} \in \GT[\cdot,N,q]
$, a hypernetwork $h\in \mathcal{NN}^{ReLU}_{\cdot}$ mapping $\rr^{P([d])}$ to itself, $\theta \in \rr^{P([d])}$, and $N\in \nn_+$, such that the GHT $F^{(\hat{\rho},h,\theta,N)}$
generated by $(\hat{\rho},h,\theta,N)$ satisfies
\[
\sup_{n\in \zz}\sup_{\xb\in\kkk}\,
\frac{
  d_{\yyy}\left(
F(\xb)_{t_n}, F^{(\hat{\rho},h,\theta,N)}(\xb)_{t_n}
\right)
}{
        \max\left\{c_{AC}(n,\epsilon)
    ,
        c_{\eps}(n)
    ,
        c^{F^{(\hat{\rho},h,\theta,N)}}_{\kkk,N_T,8/\eps }(n)
    \right\}
} 
    < \epsilon
.
\]
Moreover, $F^{(\hat{\rho},h,\theta,N)}$'s model complexity can be estimated by:
\begin{enumerate}
    \item \textbf{Encoder Complexity:} As in Table~\ref{tab_spacecomplexity_universalFFNN}, 
     for $\epsilon=\min_{n=-N_T,\dots,N_T}\;\frac{1}{C_{K_{t_n-dm(\epsilon/4):t_n}}}
           \left(\frac{\epsilon}{8 L_{\alpha,\rho_{\epsilon/4}}}\right)^{1/\alpha}$, where $C_{K_{
                t_n-dm(\epsilon/4):t_n
            }}$ is the constant defined in \eqref{def:ck}, and $\alpha$ is the regularity of $f_{\epsilon/4}$ as in~\eqref{eq_approximation_sense}.
    \item \textbf{Decoder Complexity:} As in Table~\ref{tab_spacecomplexity_universalmetrictransformer}
    \item \textbf{Hypernetwork Complexity:} $[d]=[P,M,M,P]$, where $P$ is the number of trainable parameters of the encoder network, as described in Table~\ref{tab_spacecomplexity_universalFFNN}, and $M$ is given by:
\[
M =
\min
    \left\{
    \tilde{M}\in \nn:\, 
        2\left\lfloor
        \frac{\tilde{M}}{2}
        \right\rfloor
        \left\lfloor
        \frac{\tilde{M}}{4P}
        \right\rfloor
            \geq 
        N_T
    \right\}
.
\]
\end{enumerate}
\end{theorem}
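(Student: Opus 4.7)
The plan is to decompose $F$ via its approximable-complexity structure and approximate the resulting encoder and decoder separately using the static results of Section~\ref{s_Static_Case}, then stitch the time-varying encoder parameters with a memorizing hypernetwork. First I would invoke approximable complexity at scale $\epsilon/4$ to produce a finite-complexity causal map $F^{f_{\epsilon/4},\rho_{\epsilon/4}}$ satisfying $d_\yyy(F(\xb)_{t_n},F^{f_{\epsilon/4},\rho_{\epsilon/4}}(\xb)_{t_n})<(\epsilon/4)c_{AC}(n,\epsilon/4)$ for all $\xb\in\kkk$, $n\in\zz$, via~\eqref{eq_approximation_sense}. Since $\kkk$ is compact and $f_{\epsilon/4}$ is continuous, its image over the time window $\{-N_T,\dots,N_T\}$ is a compact subset of $\rr^{L(\epsilon/4)}$; applying Theorem~\ref{thrm_main_StaticCase} to $\rho_{\epsilon/4}$ on this set yields a geometric transformer $\hat\rho$ approximating $\rho_{\epsilon/4}$ uniformly within $\epsilon/4$, of the complexity in Table~\ref{tab_spacecomplexity_universalmetrictransformer}.

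Second, for each $n\in\{-N_T,\dots,N_T\}$ I would apply Proposition~\ref{prop_universal_approximation_improved_rates} to the $\alpha$-H\"older encoder slice $x_{t_{n-m}:t_n}\mapsto f_{\epsilon/4}(t_n,x_{t_{n-m}:t_n})$ on $K_{t_{n-m}:t_n}\subset\rr^{(m+1)d}$ at tolerance $\frac{1}{C_{K_{t_{n-m}:t_n}}}(\epsilon/(8L_{\alpha,\rho_{\epsilon/4}}))^{1/\alpha}$, obtaining parameters $\theta_n^\star\in\rr^{P([d])}$ for a common multi-index $[d]$. This tolerance is dictated precisely so that $L_{\alpha,\rho_{\epsilon/4}}\|\hat f_{\theta_n^\star}-f_{\epsilon/4}(t_n,\cdot)\|_\infty^\alpha\leq\epsilon/8$, making the composition with the H\"older decoder $\rho_{\epsilon/4}$ contribute at most $\epsilon/8$ to the total error. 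I would then construct a two-hidden-layer ReLU hypernetwork $h:\rr^{P([d])}\to\rr^{P([d])}$ that memorizes $\theta_n^\star\mapsto\theta_{n+1}^\star$ on the $2N_T$-point training set; the required width $M$ (satisfying $2\lfloor M/2\rfloor\lfloor M/(4P)\rfloor\geq N_T$) is exactly the sublinear memorization capacity bound for deep ReLU networks \cite{MemoryCapacitySublinear2021SejunParkLeeJaehoChulheeShin__Memorization__Really_Approximation_inthis_paper}. Setting $\theta=\theta_{-N_T}^\star$ and $N=N_T$, the recursion~\eqref{eq:thetarec} in Definition~\ref{defn_hypernetwork} reproduces $\theta_n=\theta_n^\star$ for $|n|\leq N_T$ and freezes the parameters outside this window.

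Third, the triangle inequality collects the three error sources. For $|n|\leq N_T$ they sum directly:
\[
d_\yyy(F(\xb)_{t_n},F^{(\hat\rho,h,\theta,N)}(\xb)_{t_n})
\leq
(\epsilon/4)\,c_{AC}(n,\epsilon/4)+\epsilon/8+\epsilon/4,
\]
which, after rescaling, matches $\epsilon$ times the denominator of the theorem (with $c_\epsilon(n)=1$ in this range). For $|n|>N_T$ one inserts the extra intermediate points $F^{f_{\epsilon/4},\rho_{\epsilon/4}}(\xb)_{t_n}$ and $F^{(\hat\rho,h,\theta,N)}(\xb)_{t_{\mathrm{sgn}(n)N_T}}$, splitting the bound into: (i) the AC gap controlled by $c_{AC}(n,\epsilon)$; (ii) a drift of the finite-complexity model from its boundary value, controlled through H\"older continuity of $f_{\epsilon/4}$ and $\rho_{\epsilon/4}$ in the time argument, which defines the middle compression rate $c_\epsilon(n)$; (iii) the inside-window bound above evaluated at the boundary time; and (iv) the self-drift of the GHT, absorbed by $c^{F^{(\hat\rho,h,\theta,N)}}_{\kkk,N_T,8/\epsilon}(n)$ via the $\max\{1,\cdot\}$ in~\eqref{eq:self_compression_function}. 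The maximum of these three compression rates in the denominator dominates every term, giving the required strict inequality.

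I expect the hypernetwork step to be the main obstacle. A naive uniform approximation of the sequence-generating dynamics would force the hypernetwork's parameter count to scale with the approximation quality rather than with the finite cardinality of the time window, destroying the compact architecture $[P,M,M,P]$ claimed in the theorem. The resolution is to explicitly use memorization rather than approximation, which changes the scaling from depending on $\epsilon$ to depending only on $N_T$ and the encoder parameter count $P$, through the sublinear memorization capacity bound. A secondary bookkeeping point is the careful scale-matching between $c_{AC}(n,\epsilon/4)$ and $c_{AC}(n,\epsilon)$ in the denominator, which can be absorbed by a preliminary rescaling of $\epsilon$ at the outset of the proof.
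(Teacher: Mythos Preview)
Your proposal is correct and follows essentially the same route as the paper: reduce to a finite-complexity causal map via the AC property, approximate the decoder $\rho_{\epsilon/4}$ by a geometric transformer via Theorem~\ref{thrm_main_StaticCase}, approximate each time-slice of the encoder via Proposition~\ref{prop_universal_approximation_improved_rates}, memorize the resulting parameter sequence with a ReLU hypernetwork, and split the error outside $[-T,T]$ into the four terms you list. Two bookkeeping points the paper handles explicitly and you should not overlook: (i) the domain on which you apply Theorem~\ref{thrm_main_StaticCase} to $\rho_{\epsilon/4}$ must be a small \emph{neighborhood} of $\bigcup_{|n|\le N_T} f_{\epsilon/4}(t_n,K_{t_{n-m}:t_n})$, not merely the image itself, since the encoder networks $\hat f_{\theta_n}$ only land near that image; and (ii) the memorization result requires the inputs $\theta_n^\star$ to be pairwise distinct, which the paper enforces by a harmless perturbation (and by padding all networks to a common architecture $[d]$ before invoking the memorization theorem of \cite{YunSraJadbabaie2019GoodMemoryCapacity__Memorization}, which is the source of the $2\lfloor M/2\rfloor\lfloor M/(4P)\rfloor\ge N_T$ bound you quote).
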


\begin{remark}
\label{remark_independance_of_extrapolation_rates_of_F}
From Table~\ref{tab_extrapolation_function}, one can see that only in the case of general paths $\kkk$ there is explicit dependence on $F$, whereas the other cases depend only on its regularity.  
\end{remark}

\begin{remark}
When applying Theorem~\ref{thrm_main_DynamicCase} to the sample paths of the solution to an SDE, as in Proposition~\ref{prop:HighProbConstainement}, the statement should be interpreted as a high probability guarantee, depending on the parameter $0<\varepsilon\le 1$ from Proposition~\ref{prop:HighProbConstainement}.
\end{remark}

The new compression rate defined in Theorem~\ref{thrm_main_DynamicCase} by $\max\left\{c_{AC}(\cdot,\eps),c_\eps(\cdot),c^{F^{(\hat{\rho},h,\theta,N)}}_{\kkk,N_T,8/\eps }(\cdot)\right\}$ estimates the width of the turquoise region in Figure~\ref{fig:compressionrate} when approximating $F$ on an infinite-time horizon.  We emphasize that, in the interval $[-T,T]$, $c_\eps=1$ and $c^{F^{(\hat{\rho},h,\theta,N)}}_{\kkk,N_T,8/\eps }=1$ and therefore the compression rate $\max\{c_{AC}(\cdot,\eps),c_{\eps}(\cdot),c^{F^{(\hat{\rho},h,\theta,N)}}_{\kkk,N_T,8/\eps }(\cdot)\}$  coincides with $c_{AC}(\cdot,\eps)$ therein. %

We also point out that, if $F$ is a causal map of finite complexity, then $c_{AC}(n,\eps)=1$ for all $n\in \zz$. Therefore, in this case $\max\{c_{AC}(\cdot,\eps),c_{\eps}(\cdot),c^{F^{(\hat{\rho},h,\theta,N)}}_{\kkk,N_T,8/\eps }(\cdot)\}=1$ in the time-interval $[-T,T]$.  Naturally, $c$ depends on the regularity of the paths on which the causal map $F$ is being approximated, as reflected by our rates recorded in Table~\ref{tab_extrapolation_function}.

\begin{table}[H]%
    \centering
    \caption{Compression rate required outside the prescribed time-horizon, in the universal approximation theorem of Theorem~\ref{thrm_main_DynamicCase}, as a function of the path space.}
    \label{tab_extrapolation_function}
    \begin{adjustbox}{width=\columnwidth,center}
    \begin{tabular}{@{}ll@{}}
    \toprule
    \textbf{Compact Path-space} & \textbf{Compression Rate} $c_{\eps}(n)$ for $|n|>N_T$\\
    \arrayrulecolor{lightgray}
    \midrule
    $K^w$ & 
            $
                4\epsilon^{-1}
                \,
                \omega_{\rho_{\epsilon/4}}\circ \omega_{f_{\epsilon/4}}\biggl(
                        (|n|-N_T)\delta_{+}
                    + 
                      dm(\epsilon/4)\, 
                        \Big(
                            \operatorname{diam}(K) 
                        +
                            w(|n|) 
                        +
                            w(N_T)
                        \Big)
                \biggr)
            $
    \\
    \arrayrulecolor{lightgray}\hline
        $K_{\mathbb{C},C^{\star},\varepsilon}^{\operatorname{exp}}$
        % \tablefootnote{Where the constants $C^{\star}$ and $\boldsymbol{C}$ are as in Proposition~\ref{prop:HighProbConstainement}.}
    &
            $
                4\epsilon^{-1}
                \,
                \omega_{\rho_{\epsilon/4}}\circ \omega_{f_{\epsilon/4}}\biggl(
                        (|n|-N_T)\delta_{+}
                    + 
                      dm(\epsilon/4)\, 
                        \Big(
                            \operatorname{diam}(K) 
                        +
                            w_{\boldsymbol{C},C^{\star}}(n)
                        +
                            w_{\boldsymbol{C},C^{\star}}(N_T)
                        \Big)
                \biggr)
            $
    \\
    \arrayrulecolor{lightgray}\hline
    $K_{C,p}^{\infty}$     & 
    $
    4\epsilon^{-1}
    L_{\alpha,\rho_{\epsilon/4}}
    L_{\alpha,f_{\epsilon/4}}^{\alpha}
    \left(
    (|n|-N_T)
    \delta_+
    \left[
    1
    +
        (dm(\epsilon/4) +1) C^{\frac1{p}} {\delta_+}^{\frac{1-p}{p}}
    \right]
    \right)^{\alpha^2}
    $
    \\
    \arrayrulecolor{lightgray}\hline
    $K_{C,p}^{\alpha}
    $ &   
    $
    4\epsilon^{-1}
    L_{\alpha,\rho_{\epsilon/4}}
    L_{\alpha,f_{\epsilon/4}}^{\alpha}
    \left(
    (|n|-N_T)
    \delta_+  + (dm(\epsilon/4)+1)C^{\frac{1}{p}}
    {\delta_+}^{\frac{1}{p}}
    \left(
        1+2
        \zeta\left(
                \frac{\alpha}{p-1}
            \right)
    \right)^{\frac{p-1}{p}}\right)^{\alpha^2}
    $
\\
\arrayrulecolor{lightgray}\hline
$K^{\zz}$ &  
    $
    4\epsilon^{-1}
    L_{\alpha,\rho_{\epsilon/4}}
    L_{\alpha,f_{\epsilon/4}}^{\alpha}
    \left(
        (|n|-N_T){\delta_+}
            +
         (dm(\epsilon/4)+1)\operatorname{diam}(K)
    \right)^{\alpha^2}
    $
    \\
\arrayrulecolor{lightgray}\hline
``Worst-Case'' Arbitrary $\kkk$ & 
$
\max_{\xb\in \kkk,\, k\leq |n|}\,
\,
4\epsilon^{-1}
\max\{1,
d_{\yyy}(F^{\rho_\epsilon,f_\epsilon}(\xb)_{t_{k}},F^{\rho_\epsilon,f_\epsilon}(\xb)_{ t_n})
\}
$
\\
\bottomrule
% \\
    \end{tabular}
    \end{adjustbox}
    \caption*{
    Here, $K\subseteq \rr^d$ is compact, $C>0$, $\alpha <1-p$, $p\geq 1$ when $\kkk=K_{C,p}^{\alpha}$, and $p>0$ when $\kkk=K_{C,p}^{\infty}$,
    $L_{\alpha,\rho_{\epsilon}}$, and $L_{\alpha,f_{\epsilon}}$ are the \Holderseminorm{s} of $\rho_{\epsilon}$ and $f_{\epsilon}$ in~\eqref{eq_approximation_sense}, and $\zeta$ is the Riemann zeta function.   
    % \hfill\\ 
    When $\kkk=K_{\boldsymbol{C},C^{\star},\varepsilon}^{\exp}$, $w_{\boldsymbol{C},C^{\star}}(n)\eqdef \max\{C_0,\varepsilon^{-1/2}\,C^{\star}C_n^{1/2}e^{-nC_n\delta_-/2} \}$, for each $n\in \mathbb{N}_+$, the positive constants $\boldsymbol{C}=(C_n)_{n=0}^{\infty}$ and $C^{\star}$ are as in Proposition~\ref{prop:HighProbConstainement} and $\varepsilon>0$ can be taken to be the approximation error from Theorem~\ref{thrm_main_DynamicCase}.
    }
\end{table}

Furthermore, in the case where $F$ is time-homogeneous and $\kkk=K^{\zz}$, we recover the following analogue of the reservoir-computing-type results of \cite{OrtegraGrigoryeva,Lukas2020}.  
\begin{corollary}[Approximation of Time-Homogeneous Causal Maps]
\label{cor_reservoir_analogue___time_homogeneous_case}
Assume the setting of Theorem~\ref{thrm_main_DynamicCase} and suppose further that $F$ is time-homogeneous and $\kkk=K^{\zz}$ for some compact set $K\subseteq \RR^d$.  
Then, the compression rate $c_{\eps}$ is $1$ for $|n|\leq N_T$, while for $|n|>N_T$ it is
\[
c_{\eps}(n)  = 
    4\epsilon^{-1}
    L_{\alpha,\rho_{\epsilon/4}}
    L_{\alpha,f_{\epsilon/4}}^{\alpha}
    \left(
         (dm(\epsilon/4)+1)\operatorname{diam}(K)
    \right)^{\alpha^2}.
\]
\end{corollary}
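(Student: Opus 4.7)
The plan is to specialize the derivation of Theorem~\ref{thrm_main_DynamicCase} to the setting where $F$ is time-homogeneous and $\kkk=K^{\zz}$, and to observe that exactly one term of the compression rate recorded in Table~\ref{tab_extrapolation_function} is killed by time-homogeneity.

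First, I would appeal to the time-homogeneous clause of Definition~\ref{defn_compression_memory_property}: because $F$ is a time-homogeneous AC map, the family $(F^{f_\eps,\rho_\eps})_{\eps>0}$ witnessing its approximable complexity may be chosen to consist of time-homogeneous causal maps of finite complexity. By Definition~\ref{def_AdaptedMap_FiniteComplexity}, each encoder $f_\eps$ then has no explicit dependence on its time argument, so that $F^{f_\eps,\rho_\eps}(\xb)_{t_n}=\rho_\eps\circ f_\eps(x_{t_{n-m(\eps)}:t_n})$ for every $n\in\zz$.

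Next, I would revisit the Hölder estimate that produces the $\kkk=K^{\zz}$ row of Table~\ref{tab_extrapolation_function}. That entry is obtained by bounding the self-variation $d_{\yyy}\bigl(F^{f_{\eps/4},\rho_{\eps/4}}(\xb)_{t_n},F^{f_{\eps/4},\rho_{\eps/4}}(\xb)_{t_{\operatorname{sgn}(n)N_T}}\bigr)$ appearing in~\eqref{eq:self_compression_function} by composing the $\alpha$-Hölder moduli of $\rho_{\eps/4}$ and $f_{\eps/4}$. In the general (non-time-homogeneous) case the input to $f_{\eps/4}$ at two different times differs in two independent pieces: a temporal coordinate, whose variation is bounded by $|t_n - t_{\operatorname{sgn}(n)N_T}| \le (|n|-N_T)\delta_+$, and a path-segment coordinate, whose entries all lie in $K$ and therefore vary by at most $(dm(\eps/4)+1)\operatorname{diam}(K)$. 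Rescaling the resulting bound by the normalizing factor $4/\eps$ used in the statement of Theorem~\ref{thrm_main_DynamicCase} recovers the general expression for $c_\eps(n)$.

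Under the additional time-homogeneity assumption the temporal slot of $f_{\eps/4}$ is absent, so the $(|n|-N_T)\delta_+$ contribution simply disappears. Propagating the same two Hölder bounds with only the spatial contribution in place yields the announced expression for $c_\eps(n)$ when $|n|>N_T$, and the claim $c_\eps(n)=1$ on $|n|\le N_T$ is inherited verbatim from Theorem~\ref{thrm_main_DynamicCase}. The main obstacle is a bookkeeping check: one has to verify that every appearance of the time-shift $|t_n-t_{\operatorname{sgn}(n)N_T}|$ in the derivation of the general $K^{\zz}$ bound arises solely from the explicit time slot of $f_\eps$, so that removing that slot indeed eliminates the $(|n|-N_T)\delta_+$ term without affecting the other factors $L_{\alpha,\rho_{\eps/4}}$, $L_{\alpha,f_{\eps/4}}^\alpha$, or the spatial constant $(dm(\eps/4)+1)\operatorname{diam}(K)$.
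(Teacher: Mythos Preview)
Your proposal is correct and follows essentially the same route as the paper: both argue that in the time-homogeneous case the encoder $f_{\eps/4}$ carries no explicit time coordinate, so the estimate in Case~4 of the proof of Theorem~\ref{thrm_main_DynamicCase} (equation~\eqref{eq_PROOF_MAIN_Cylender_case_3}) loses its $|t_n-t_{n'}|$ contribution and only the spatial term $(dm(\eps/4)+1)\operatorname{diam}(K)$ survives. One minor wording slip: the self-variation you bound is the one tagged \eqref{PROOF_thrm_main__2_main_inequality_to_control_Term_B_Extrapolation_quantity___pure_term_B} in the proof (for $F^{\rho_\eps,f_\eps}$), not the quantity in~\eqref{eq:self_compression_function}, which concerns $\hat F$; the argument itself is unaffected.
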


\section{Examples of Geometric Attention Mechanisms}
\label{s_Examples}
\subsection{Examples of QAS Spaces}
\label{s_Examples__ss_Examples}
We illustrate the scope of our causal geometric deep learning framework with various examples of the spaces covered by our theory.  In each case, we work out exactly what the geometric attention mechanism in \textit{closed-form} is and, therefore, the form of the geometric transformer in that context. 
We begin with the infinite-dimensional linear case.
Then, we progress to non-linear examples arising from optimal transport theory, and finally we transition to the finite dimensional but non-Euclidean setting within the context of information geometry.  

\subsubsection{Linear Spaces}
\label{s_Preliminaries__ss_Examples___sss_Linear}
\begin{figure}[ht]%[H]%
    \centering
    \includegraphics[width=1\textwidth]{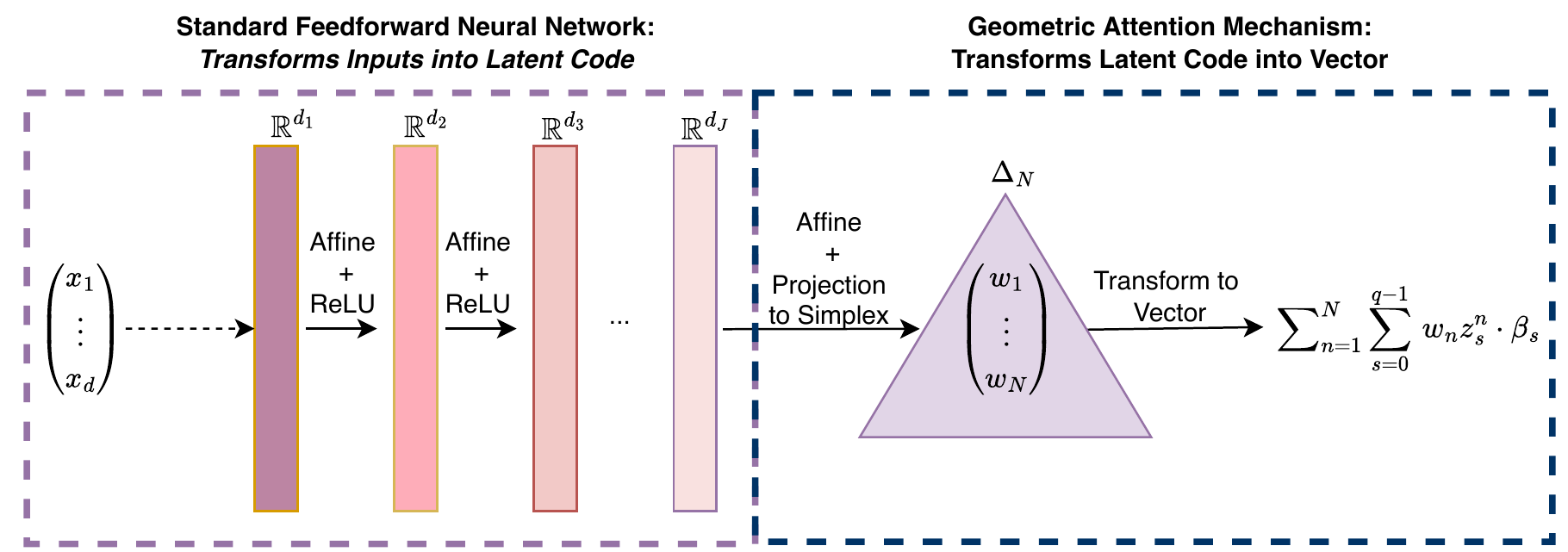}
    \caption{The \textit{geometric transformer} of Example~\ref{ex_Frechet} maps inputs in $\mathbb{R}^d$ to outputs in an \textit{infinite-dimensional Fr\'{e}chet space with a Schauder basis}.  The outputs are convex combinations of $N$ vectors, in the Fr\'{e}chet space, each of which can be exactly implemented as a linear combination of the first $q$ basis vectors $\{\beta_s\}_{s=0}^{q-1}$.}
    \label{fig:Transformer_Rd_Frechet}
 \end{figure}

Our framework easily encompasses a broad range of topological vector spaces relevant throughout much of applied probability theory and, in particular, to mathematical finance.  These include all Euclidean spaces and any ``well-behaved'' Banach space, such as all $L^p$-spaces on $\sigma$-finite measure spaces where $1\leq p<\infty$.

\begin{example}[Fr\'{e}chet Spaces]
\label{ex_Frechet}
Let $(\yyy,d_{\yyy})$ be a Fr\'{e}chet space.  For any $N\in \nn_+$, we may define the mixing function $\eta$ to be the map sending any $w\in \Delta_N$ and any $(y_n)_{n=1}^N\in \yyy^N$ to
\[
\eta(w,(y_n)_{n=1}^N) \eqdef \sum_{n=1}^N w_n y_n
.
\]
Let $\dim(\yyy)$ denote dimension of $\yyy$ as a vector space.  
If $\yyy$ admits a Schauder basis\footnote{
A Schauder basis for a Fr\'{e}chet space $(\yyy,d_{\yyy})$ is a linearly independent set  $(\beta_s)_{s=0}^{q^{\star}-1}\subseteq\yyy$ for which, given any $y\in \yyy$ there exists a unique sequence $(z_s^y)_{s=0}^{q^{\star}-1}$ in $\rr$ satisfying $\lim\limits_{q \uparrow q^\star}\,
d_{\yyy}\left(y,\sum_{s=0}^{q-1} z_s^y{\beta_s}\right)=0$.} %
$(\beta_s)_{s=0}^{q^\star-1}$, where $q^{\star} \eqdef \min\{\dim(\yyy),\#\nn \}$, then $(\yyy,d_{\yyy})$ is quantized by the functions $(Q_q)_{q \in \nn}$ defined via
\[
Q_q:\rr^{q^{\star}} \ni 
    z 
        \mapsto 
    \sum_{s=0}^{\min\{q,q^{\star}\}-1}
    \, 
    z_s 
        \cdot 
    \beta_s
\in \yyy,
\]
where $z_s$ denotes the $s$-th component of $z$.
Therefore, Fr\'{e}chet spaces with Schauder basis can be endowed with a QAS space structure whose associated geometric attention is
\[
\operatorname{attention}_{N,q}(
        u
            ,
        (z^n)_{n=1}^N
    )
     \eqdef 
\operatorname{\sum}_{n=1}^N
\sum_{s=0}^{\min\{q,q^{\star}\}-1}
    \Pi_{\Delta_N}(u)_n
    z_s^n
        \cdot 
    \beta_s
.
\]
\end{example}
In particular, Example~\ref{ex_Frechet} shows that our static universal approximation result for our architecture can approximate more general output spaces than the DeepONets of \cite{LuJinPanZhangKarniadakis_DeepONetNature_2021,LiuYandChenZhaoLiao_DeepOReLUNets_2022New}.  Further, our results yield quantitative counterparts to the Banach space-valued results of \cite{benth2021neural}'s qualitative universal approximation results for their feedforward-type architecture.  

Next, we explicate Example~\ref{ex_Frechet} with a Fr\'{e}chet space central to classical mathematical finance.  Namely, we use it to design a universal deep neural approach to \textit{term structure modeling} which is compatible with the Heath-Jarrow-Morton (HJM) framework of \cite{heath1992bond}.  We note that, machine learning models have recently found successful applications in HJM-type frameworks since one can approximately impose no-arbitrage restrictions into the learned model \cite{gambara2020consistent,kratsios2020deep}.  We also note that a feedforward counterpart to our transformer approach for forward-rate curve modeling in the Forward rate curve space of \cite{Damir_Thesis_2001} has recently been considered in \cite{benth2022pricing}.    

\begin{figure}[ht]%[H]
    \centering
    \includegraphics[width=1\textwidth]{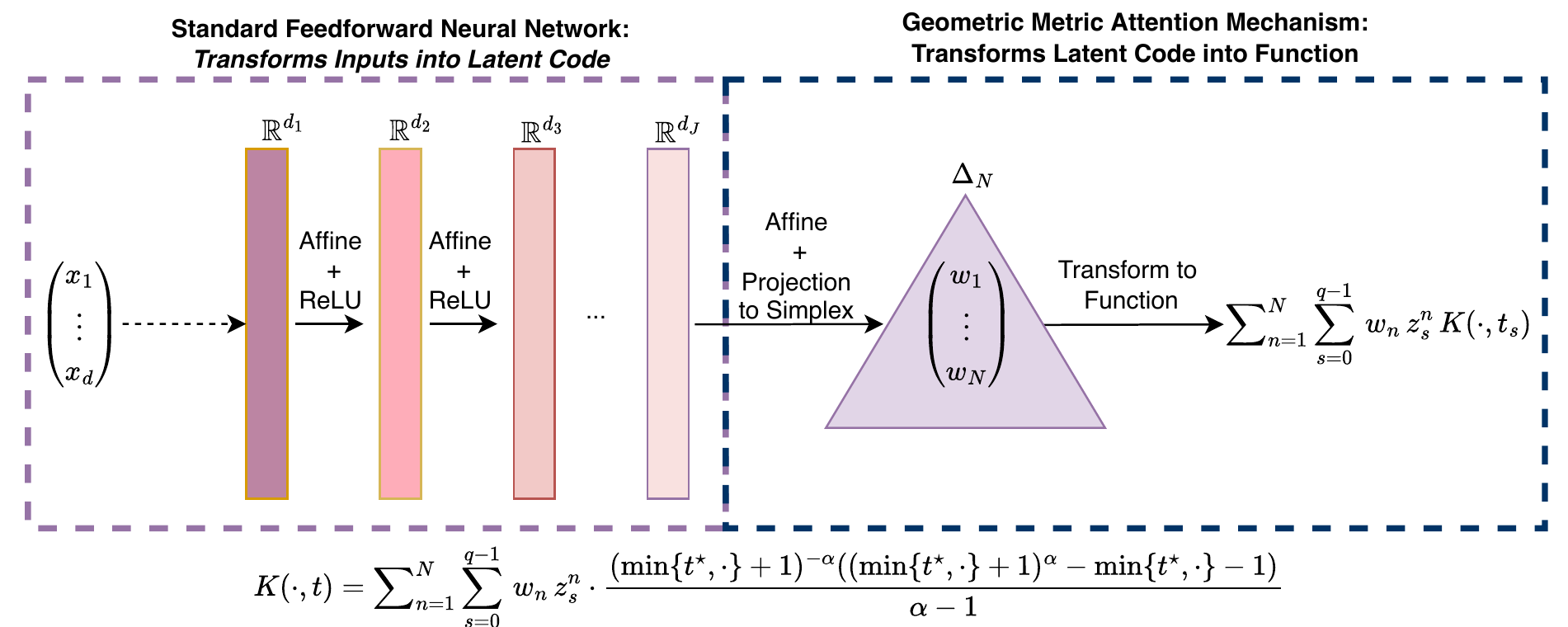}
    \caption{The \textit{geometric transformer} of Example~\ref{ex_Filipovic_Space} maps inputs in $\mathbb{R}^d$ to \textit{forward rate curves in the Reproducing Kernel Hilbert space $H_{\alpha}$ of \cite{Damir_Thesis_2001}} which can be expressed as convex combinations of $H_{\alpha}$'s kernel function $K(\cdot,\cdot)$ with second component evaluated at specified times $t_0,\dots,t_{q-1}\ge 0$.}
    \label{fig:Transformer_Rd_Filipovic_Space}
 \end{figure}

\begin{example}[Transformers in a Space of Forward Rate Curves]
\label{ex_Filipovic_Space}
In \citep[Section 5.1]{Damir_Thesis_2001}, the author introduces a class of Hilbert spaces of functions on $[0,\infty)$ for modeling the term structure of interest rates, which are both economically meaningful (see \citep[Sections 4.2-4.3]{Damir_Thesis_2001}) and convenient to analyse within the HJM framework.   For instance, if $\alpha >3$, the author considers the space $\tilde{H}_{\alpha}$ of all \textit{absolutely continuous functions} $y:[0,\infty)\rightarrow \rr$
(representing yields curves) for which the norm $\|y\| \eqdef \langle y,y\rangle^{1/2}_{\alpha}$ is finite, where
\[
\langle y,\tilde{y}\rangle_{\alpha}
     \eqdef 
|y(0)\tilde{y}(0)|
    +
\int_0^{\infty} |y'(t)\tilde{y}(t)|\ |1+t|^{\alpha} dt,
\]
where $y'$ is a weak derivative of $y$ on $(0,\infty)$ and $\tilde{y}$ is some absolutely continuous function on $[0,\infty)$.  For simplicity of exposition, let us consider the subset $H_{\alpha}\subset \tilde{H}_{\alpha}$ consisting of all functions satisfying the boundary condition $y(0)=0$.  Then, \citep[Example 1.3]{SaitohSawano_2016_DetailedRKHSBook} shows that this is also a Reproducing Kernel Hilbert Space (RKHS) with reproducing kernel $K$ defined on any $(t,\tilde{t})\in [0,\infty)^2$ by
\[
K(t,\tilde{t})
     \eqdef 
\int_0^{\min\{t,\tilde{t}\}}
    \frac{1}{(1+t)^{\alpha}}
    dt
=
    \frac{(\min\{t,t'\}+1)^{-\alpha}((\min\{t,t'\}+1)^{\alpha} -\min\{t,t'\} -1)}{\alpha -1}
.
\]
Applying \citep[Moore-Aronzajins Theorem]{aronszajn1950theory}, we conclude that the span of $\{K(t,\cdot):\,t\in [0,\infty)\}$ is dense in $H_{\alpha}$.  Now, since $[0,\infty)$ is separable and $K(\cdot,\cdot)$ is continuous, then there is a countable subset $\{t^{\star}_n\}_{n\in \nn}\subseteq [0,\infty)$ for which
\[
y_n 
     \eqdef  
K(t_n^{\star},\cdot)
\]
is a Schauder basis of $H_{\alpha}$.  Thus, 
the geometric attention of Example~\ref{ex_Frechet} simplifies to
\[
\operatorname{attention}_{N,q}(u,(z^n)_{n=1}^N)
     \eqdef 
\operatorname{\sum}_{n=1}^N
\sum_{s=0}^{q-1}
    \Pi_{\Delta_N}(u)_n
    z_s^n
        \cdot 
    \frac{
        (\min\{t_n^{\star},\cdot\}+1)^{-\alpha}((\min\{t_n^{\star},\cdot\}+1)^{\alpha} -\min\{t_n^{\star},\cdot\} -1)
    }{
        \alpha -1
    }
,
\]
for some $\{t^{\star}_n\}_{n=0}^{\infty}$ in $[0,\infty)$.  
\end{example}

\begin{remark}[Other HJM-Type Frameworks via Example~\ref{ex_Filipovic_Space}]
\label{remark_General}
As an interesting future research direction, one can likely modify Example~\ref{ex_Filipovic_Space} to suit other HJM-type of models used in equity or credit markets \cite{carmona2007hjm} or stocks \cite{KallsenKruner_2015_HJMVol}.
\end{remark}
\begin{remark}[Example~\ref{ex_Filipovic_Space} Translates to Separable RKHSs]
\label{remark_RKHS}
Since RKHSs are a central object in machine learning, it is worth emphasizing that the analysis carried out in Example~\ref{ex_Filipovic_Space} translates, nearly identically, to geometric transformers mapping in any other separable RKHS.  The geometric transformer in that context is the the same as in Figure~\ref{fig:Transformer_Rd_Filipovic_Space} with only the kernel function swapped for the kernel of the new RKHs.
\end{remark}

\subsubsection{Optimal Transport Spaces} 
\label{s_Preliminaries__ss_Examples___sss_Wasserstein}
Our results also apply to many spaces from optimal transport theory, adapted optimal transport, and consequentially robust finance.  

\begin{figure}[H]%[ht]%
    \centering
    \includegraphics[width=1\textwidth]{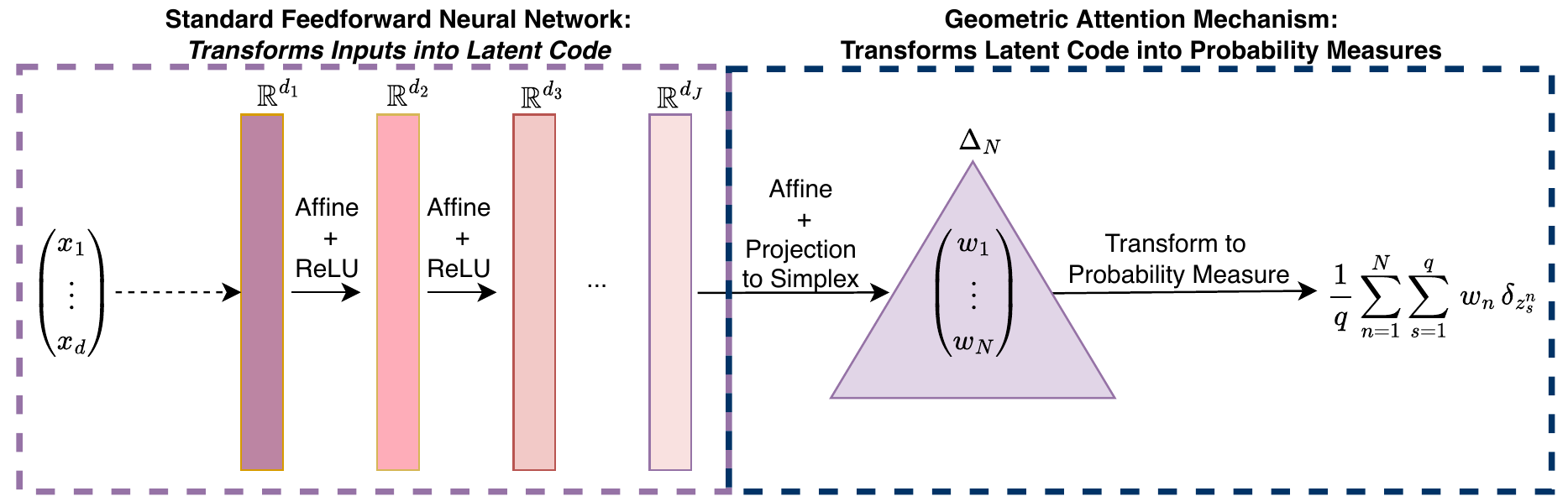}
    \caption{The \textit{geometric transformer} of Example~\ref{ex_wasserstein} transforms inputs in $\mathbb{R}^d$ to outputs in the Wasserstein space $(\mathcal{P}_q(\mathbb{R}^d),\mathcal{W}_p)$.  Every input in $\mathbb{R}^d$ is mapped to a weight $w$ in the $N$-simplex which is used to create a convex combination of $N$ empirical measures on $\mathbb{R}^d$, each of which charges at-most $q$ points with mass.}
    \label{fig:Transformer_Rd_wasserstein}
 \end{figure}

\begin{example}[Wasserstein Space with Convex Combinations]
\label{ex_wasserstein}
Fix $p\geq 1$, let $Z\subseteq \rr^d$ be closed, and either suppose that $Z=\rr^d$ and $q> p$, or that $Z$ is bounded and $q\geq p$. 
Let $(\yyy,d_{\yyy}) \eqdef (\mathcal{P}_q(Z),\mathcal{W}_p)$.  Define the mixing function $\eta$ to be the map that, for each $N\in \nn_+$,  sends any $w\in \Delta_N$ and any $(y_n)_{n=1}^N\in \yyy^N$ to
\begin{equation}
    \eta(w,(y_n)_{n=1}^N)
    	 \eqdef 
    \sum_{n=1}^N w_n y_n
\label{eq_ex_wasserstein__co_convex_combinations}
    .
\end{equation}
Under our assumptions on $Z,p,$ and $q$, we may apply \citep[Theorem 2 or Corollary 3]{Chevallier_2018_UnifDecompositionProbMEasures_JAP} (respectively) to conclude that $(\mathcal{P}_q(Z),\mathcal{W}_p)$ is quantized by the following family $(Q_q)_{q\in \nn_+}$ of functions:
\begin{equation}\label{eq:emp_meas}
Q_q: \rr^{d\times q}\ni 
    z=(z_0,\dots,z_{q-1})
        \mapsto 
    \frac1{q}
    \sum_{s=0}^{q-1}
        \,\delta_{z_s}
\in \mathcal{P}_q(Z).
\end{equation}
Therefore, $(\mathcal{P}_q(Z),\mathcal{W}_p)$ can be endowed with a QAS space structure whose geometric attention mechanism, coincides with the probabilistic attention mechanism of \cite{kratsios2021universal,AB_2021}, and is given by
\[
\operatorname{attention}_{N,q}(u,(z^n)_{n=1}^N)
     \eqdef 
    \frac1{q}
    \sum_{n=1}^N 
    \sum_{s=1}^q
    \Pi_{\Delta_N}(u)_n
        \delta_{z_s^n}
.
\]
\end{example}

\begin{figure}[ht]%[H]%
    \centering
    \includegraphics[width=1\textwidth]{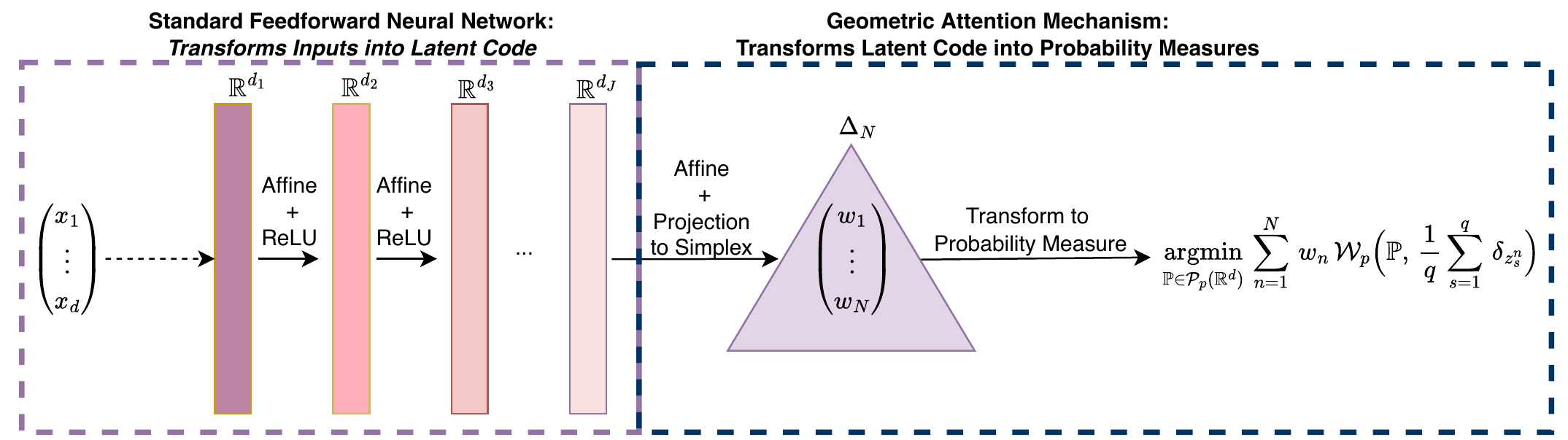}
    \caption{The \textit{geometric transformer} of Example~\ref{ex_Wasserstein_barycenter} transforms inputs in $\mathbb{R}^d$ to outputs in the Wasserstein space $(\mathcal{P}_p(\mathbb{R}^d),\mathcal{W}_p)$.  Every input in $\mathbb{R}^d$ is first mapped to a weight $w$ in the $N$-simplex.  The weight $w$ together with $N$ different empirical measures $\sum_{s=1}^q\,\delta_{z_s^1}$, $\dots$, $\sum_{s=1}^q\,\delta_{z_s^N}$ on $\mathbb{R}^d$ each of which charges at-most $q$ points with mass, defines a Wasserstein barycenter problem where the relative importance of each empirical measure is weighted according to $w$.  A probability measure optimizing this Wasserstein barycenter problem is output by the GT.}
    \label{fig:Transformer_Rd_wasserstein_McCann}
\end{figure}

Still considering the Wasserstein space, for the next example, we show how the mixing function $\eta$ can be chosen to be a \textit{non-linear averaging of distributions} unlike the linear mixing functions used to illustrate out theory thus far.  We consider the notion of Wasserstein barycenters, as introduced in \cite{AguehCarlier_WassersteinBarycenters}, which generalize McCann’s interpolation problem originally defined only for two probability measures. This has a wide range of applications, for example when one wants to average features defined as distributions (as in computer vision). We refer to \cite{cuturi2014fast} and \cite{claici2018stochastic} for algorithms to efficiently compute Wasserstein barycenters and to \cite{HeinemannMunkZemel_SIAM_SCIMODs_Guarantees_2022} for related statistical learning guarantees.
\begin{example}[Wasserstein barycenters]
\label{ex_Wasserstein_barycenter}
Let $(\yyy, d_\yyy) = (\mathcal P_p(\rrd), \mathcal W_p)$ be the set of measures with finite $p$-th moment on $\rrd$ equipped with the Wasserstein-$p$-distance.
For each $N \in \N$ and fixed $(y_n)_{n = 1}^N \in \mathcal P_p(\rrd)^N$, consider
\begin{align*}
    S(y,w)  \eqdef & \sum_{n = 1}^N w_n \mathcal W_p(y,y_n)^p, \\
    V(w)  \eqdef & \inf_{ y \in \mathcal P_p(\rrd)} S(y,w),
\end{align*}
and define $\eta(w,(y_n)_{n = 1}^N)$ as a measurable selection of optimizers of $V$.
To see the existence of such a selection, first note that $V$ is continuous.
Consequently,
\[
    B  \eqdef 
    \left\{
    (y,w) \in \mathcal P_p(\rrd) \times \Delta_N \colon
    S(y,w) = V(w)
    \right\}
\]
is a bounded subset of $\mathcal P_p(\rrd) \times \Delta_N$, and therefore relatively compact in $\mathcal P(\rrd) \times \Delta_N$.
By \cite[Remark 6.12]{OldandNew2010Villani}, $S$ is lower semicontinuous on $\mathcal P(\rrd) \times \Delta_N$, thus, $B$ is even compact in $\mathcal P(\rrd) \times \Delta_N$.
Therefore, we find, for $z \in \mathcal P_p(\rrd)$ and $r > 0$, that
\[
    \left\{ w \in \Delta_N \colon \exists (y,w) \in B \text{ s.t.\ }
    y \in
    \overline{\Ball_{(\mathcal P_p(\rrd), \mathcal W_p)}(z,r)}
    \right\}
\]
is a closed subset of $\Delta_N$.
Hence, \cite[Theorem 6.9.3]{Bogachev} to produces the desired measurable selection.

We remark that, for $p = 2$, by \cite{AguehCarlier_WassersteinBarycenters} the set of optimal couplings is a singleton if there is a $1 \le k \le N$ with $y_k$ absolutely continuous w.r.t.\ the Lebesgue measure.
Hence, in this case the (unique) selection is even continuous.
In any case, $\eta$ satisfies the following inequalities for all $i=1,...,N$:
\[
    \mathcal W_p\left(\eta(w,(y_n)_{n = 1}^N), y_i\right) 
    \le
    \sum_{j = 1}^N w_j \left( \mathcal W_p(y_j, y_i) + \mathcal W_p\left(\eta(w,(y_n)_{n = 1}^N), y_j\right)\right)
    \le 
    2 
    \left( 
        \sum_{j = 1}^N w_j \mathcal W_p(y_i, y_j)^p
    \right)^{1 / p}.
\]
\end{example}
Just as Example~\ref{ex_adapted_wasserstein} provided an adapted counterpart to Example~\ref{ex_wasserstein}, so does the following construction provide an adapted counterpart to Example~\ref{ex_Wasserstein_barycenter}.  The corresponding illustration is akin to Figure~\ref{fig:Transformer_Rd_wasserstein_McCann}, mutatis mundanis.

\begin{example}[Adapted Wasserstein barycenters]
\label{ex_adapted_wasserstein_barycenters}
    Even though $(\mathcal P_p(\mathbb R^{d T}), \mathcal{AW}_p)$ on its own is not a geodesic space, its completion $(\mathcal{FP}_p, \mathcal{AW}_p)$ is geodesically complete; see \cite{WassersteinSpaceofStochasticProcesses}, which also gives a probabilistic interpretation of the aforementioned space as a Wasserstein space of stochastic processes.
    Moreover, $(\mathcal{FP}_p, \mathcal{AW}_p)$ provides a way of taking barycenters of stochastic processes that take into account path properties as well as the arrow of time encoded in the underlying filtrations.
    Similar to Example \ref{ex_Wasserstein_barycenter}, for fixed $N \in \mathbb N$ and $(y_n)_{n = 1}^N$, we define the mixing function $\eta$ as a measurable selection of $\mathcal{AW}_p$-barycenters:
    \[
        \eta(w,(y_n)_{n = 1}^N)
        \in \arg \min
        \left\{
            \sum_{n = 1}^N w_n \mathcal{AW}_p(y, y_n)^p
            \colon
            y \in \yyy
        \right\}.
    \]
    The existence of $\eta$ can be shown along the lines of Example \ref{ex_Wasserstein_barycenter} due to the characterization of relative compact sets given in \cite[Theorem 1.7]{WassersteinSpaceofStochasticProcesses}.
\end{example}

\begin{remark}[Mixture Density Networks are Geometric Transformers]
\label{remark_guassian_mixtures}
One can modify the quantization function from the previous section to output Gaussian mixtures with parameterized non-degenerate variance.  Therefore, the mixture density network of \cite{bishop1994mixture} is a particular case of our geometric transformer framework. 
\end{remark}

\subsubsection{Finitely Parameterized Families of Probability Distributions from Information Geometry} 
\label{s_Preliminaries__ss_Examples___sss_Information Geometry}

\begin{figure}[ht]%[H]%
    \centering
    %% With Equation in Figure
    % \includegraphics[width=0.95\textwidth]{}
    %% Without Equation in Figure
    \includegraphics[width=0.95\textwidth]{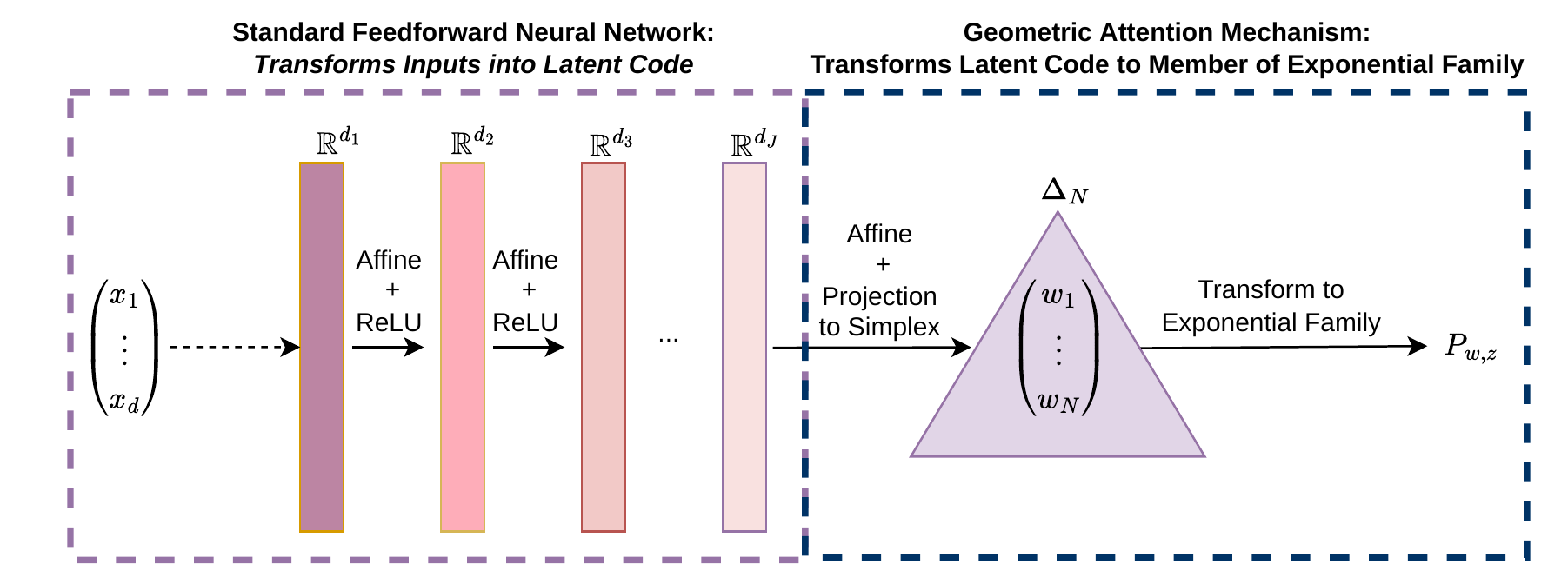}
    \\
    $\frac{P_{w,z}}{dx} \propto \exp\Big(-\sum_{n=1}^N \sum_{k=1}^d \,w_n\, P_{[0,1]^d}(z^n)_k\,F_k(\cdot)\Big)$
    \caption{The \textit{geometric transformer} of Example~\ref{ex_information_geometry} transforms inputs in $\mathbb{R}^d$ to the unique parameter associated to a probability measure in the natural parameter space $\Theta_{F_{\cdot}}$ associated to the statistical manifold $\mathcal{P}_{F_{\cdot}}$.  The model outputs the unique probability measure in $\mathcal{P}_{F_{\cdot}}$ corresponding to this parameter in $\Theta_{F_{\cdot}}$.}
    \label{fig:Transformer_Exponential}
 \end{figure}

At times, more can be assumed of the probability distributions describing the target process.  In particular, we are interested in the case where the process' law is characterized by a sufficient statistic and the conditional law of that process belongs to an exponential family of probability measures.  Typical examples arise when working with continuous-time and finite-state Markov chains (see \cite{Jacobsen_1982StatsofCountingProcesses,KuchlerSorensen_ExponentialFamiliesProcessesBook_1997}).  

This is not only of practical interest, but it provides a tangible class of non-linear examples, stemming in classical information geometry \citep{amari2016information}, where the mixing function $\eta$ is not a convex combination and where the maps $(Q_q)_{q\in \nn_+}$ are all identical due to $\yyy$'s finite dimensionality.  

\begin{example}[Finite-Dimensional Exponential Families of Stochastic Processes]
\label{ex_information_geometry}
Let $F_1,\dots,F_d:(\rr^n)^T\rightarrow \rr$ be linearly independent continuous path functionals and let %
$\Theta_{F_{\cdot}}$ be the open convex subset of $\rr^d$ defined by
\[
\Theta_{F_{\cdot}} 
     \eqdef 
\left\{
\theta \in \rr^d
    :
    \int_{x\in \rr^{nT}}
        \exp\left(
        -\sum_{k=1}^d \theta_k \, F_k(x)
        \right)
    dx
    <\infty
\right\}
.
\]
The set $\Theta_{F_{\cdot}}$, is known as the \textit{natural parameter space} of the statistical manifold
\[
\mathcal{P}_{F_{\cdot}}
     \eqdef 
\left\{
    y_{\theta} 
        \in 
    \mathcal{P}\left((\rr^n)^T\right)
:\,
(\exists \theta \in \Theta_{F_{\cdot}})\,
    \frac{dy_{\theta}}{
    dx%
    }
        \propto
    \exp\left(
        - \sum_{k=1}^d
            \theta_k
            F_k(\cdot)
    \right)
    \mbox{ and }
    \mathbb{E}_{X\sim y_{\theta}}\left[\|X\|\right]<\infty
\right\}
,
\]
where the identification of $\Theta_{F_{\cdot}}$ with $\mathcal{P}_{F_{\cdot}}$ is given by the correspondence $\Theta_{F_{\cdot}} \ni \theta \leftrightarrow y_{\theta} \in \pp_{F_{\cdot}}$; see \citep[Chapter 2.1]{amari2016information}.  Following \cite{Cencov1970s_reprintenglish1982,amari2016information}, the family of Fisher information matrices $(I(\theta))_{\theta \in \Theta_{F_{\cdot}}}$, defined at each $\theta\in \Theta_{F_{\cdot}}$ by $I(\theta) \eqdef 
\left(
    \mathbb{E}_{y_{\theta}}\left[
        \frac{\partial p_{\theta}}{\partial \theta_i}
        \frac{\partial p_{\theta}}{\partial \theta_j} 
    \right]
\right)_{i,j=1}^d
,
$
where $p_{\theta} \eqdef \frac{dy_{\theta}}{
dx%
}$, defines a Riemannian metric on $\Theta_{F_{\cdot}}$, called the \textit{Fisher-Rao metric}.
Appealing to the identification $\varphi:\Theta_{F_{\cdot}} \ni \theta \rightarrow y_{\theta} \in \pp_{F_{\cdot}}$, the Fisher-Rao metric induces an intrinsic metric on $\pp_{F_{\cdot}}$ via
\begin{equation}
\label{eq:FisherRaoDistance}
    d_I\left(y_{\theta^1},y_{\theta^2}\right)
     \eqdef 
    \inf_{\gamma}\, \int_0^1 
        \sqrt{
            \dot{\gamma}(t)^{\top} 
                I(\theta)
            \dot{\gamma}(t)
        }
    dt,
\end{equation}
where the infimum runs over all piecewise continuous $\gamma:[0,1]\rightarrow \Theta_{F_{\cdot}}$ with $\gamma(0)=\theta^1$ and $\gamma(1)=\theta^2$.  

For simplicity, as in \cite{FigalliNNs2021_Nature}, 
let us assume that $[0,1]^d\subset \Theta_{F_{\cdot}}$ and let us set $(\yyy,d_{\yyy}) \eqdef (\varphi([0,1]^d),d_I)$. Since $\varphi$ is a chart, it is smooth and therefore Lipschitz when restricted to $[0,1]^d$.  Therefore, we may define $\eta$ such that, for every $N\in \nn_+$, $w\in \Delta_N$ and $(y_{\theta^n})_{n=1}^N\in \yyy^N$,
\[
\frac{d \, \eta(w,(y_{\theta^n})_{n=1}^N)}{
dx
}
    \propto
\exp\left(
-\sum_{n=1}^N 
        \sum_{k=1}^d
            w_n
            \theta^n_k
            \,F_k(\cdot)
\right)
.
\]
Since $\Theta_{F_{\cdot}}$ is a $d$-dimensional Riemannian manifold, then, similarly to Example~\ref{ex_Frechet}, we may quantize it by the functions $(Q_q)_{q \in \nn}$ defined via
\[
\frac{dQ_q(z)}{
dx%
}
    \propto
    \exp\left(
        -\sum_{k=1}^d 
        \,[\Pi_{[0,1]^d}(z)]_k
        F_k(\cdot)
    \right),
\]
where $\Pi_{[0,1]^d}$ is the metric projection of $\rr^d$ onto the cube $[0,1]^d$, i.e.
$\rr^d \ni u \mapsto (\min\{\max\{0,u_i\},1\})_{i=1}^d \in [0,1]^d$.
Thus,
$(\mathcal{P}_{F_{\cdot}},d_I)$ can be endowed with QAS space structure.  For $q\geq d$, its associated geometric attention mechanism is
\[
\frac{d \, 
\operatorname{attention}_{N,q}(u,(z^n)_{n=1}^N)
}{
dx%
}
    \propto
\exp\left(
    -\sum_{n=1}^N 
        \sum_{k=1}^d
            [\Pi_{\Delta_N}(u)]_n
            \,[\Pi_{[0,1]^d}(z^n)]_k
            \,F_k(\cdot)
\right)
.
\]
\end{example}

We conclude our illustration of QAS spaces with an example relevant to Markovian SDEs and Gaussian processes.  

\begin{figure}[H]%[ht]%
    \centering
    \includegraphics[width=1\textwidth]{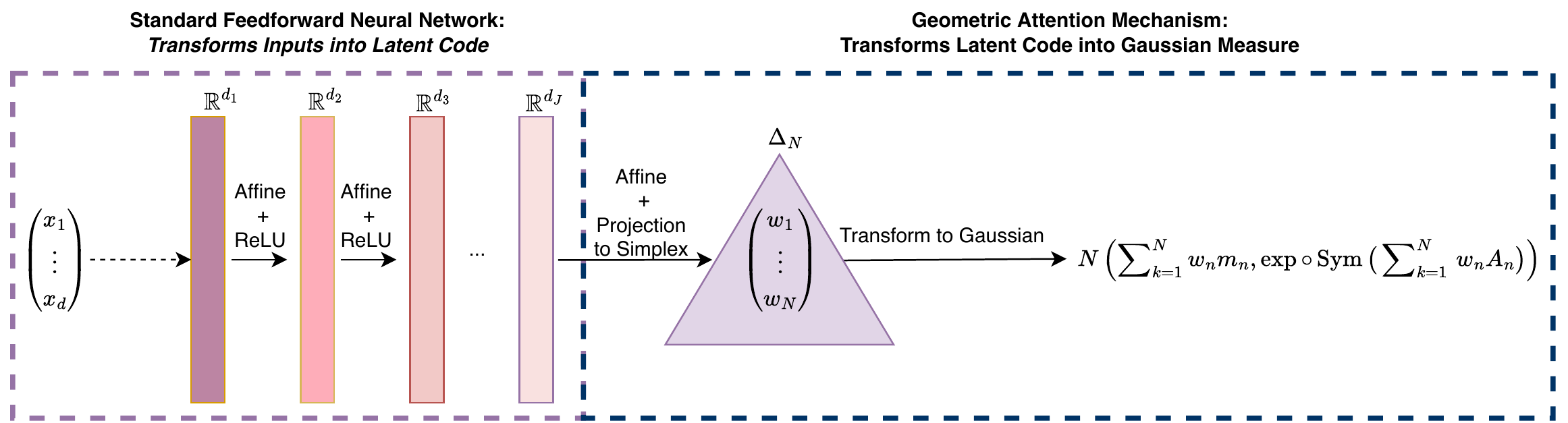}
    \caption{The \textit{geometric transformer} of Example~\ref{ex_Gaussian_measures} maps inputs in $\mathbb{R}^d$ to \textit{non-degenerate Gaussian} probability measures on $\mathbb{R}^d$.  The intrinsic distance function quantifying the dissimilarity between any such Gaussian measures is~\eqref{eq_RiemannianDistanceFunction}.}
    \label{fig:Transformer_Rd_Gaussians}
 \end{figure}

\begin{example}[Non-Degenerate Gaussian Measures]
\label{ex_Gaussian_measures}
Let $\yyy$ be the set of probability measures $N(m,\Sigma)$ on $\rr^d$ which are absolutely continuous with respect to the $d$-dimensional Lebesgue measure and whose Radon-Nikodym derivative can be written as
\[
\frac{dN(m,\Sigma)}{dx}
    =
(2\pi)^{-\frac{d}{2}}\det(\Sigma)^{-\frac{1}{2}} \, e^{ -\frac{1}{2}({x} - m)^{{{\top}}} \Sigma^{-1}({x} - m) }
,
\]
where ${\Sigma}$ is a $d\times d$ positive definite matrix and $m\in \rr^d$.  Accordingly, as in the information geometry literature \cite{GeometryOfMultivariateNormal_Lie_Canada,Nielson2020Flashcards,Wasserstein2Gaussian}, we identify non-degenerate Gaussian distributions in $\yyy$ with pairs $({\Sigma},{\mu})$ parameterizing them.  Instead of equipping $\yyy$ with a Riemannian structure, we may simply equip it with the following complete metric, arising from the non-positive curvature geometry of the set of $d\times d$-symmetric positive-definite matrices (see \cite{meyer2011regression,helgason1979differential}):
\begin{equation}
\label{eq_RiemannianDistanceFunction}
d_{\yyy}\left(
    N(m_1,\Sigma_1)
        ,
    N(m_2,\Sigma_2)
\right)
 \eqdef 
\sqrt{
    \left\|m_1-m_2\right\|^2
        +
    \left\|\sqrt{{\Sigma}_1}\log\left( {\sqrt{{\Sigma}_1}}^{-1} \sqrt{{\Sigma}_2} {\sqrt{{\Sigma}_1}}^{-1} \right)\sqrt{{\Sigma}_1}\right\|_F^2
}
,
\end{equation}
where $\sqrt{\cdot}$ denotes the square-root of a symmetric positive-definite matrix, $\log$ denotes the matrix logarithm, and $\|\cdot\|_F$ denotes the Frobenius norm.
Following \cite{kratsiosuniversal2021}, for each $q \in \nn$ we set
\[
Q_q:\rr^{d}\times \rr^{d(d+1)/2}\ni (m,A) \mapsto 
N(m,\operatorname{exp}\circ\operatorname{Sym}(A)),
\]
where $\operatorname{exp}$ is the exponential matrix and for any $A=(A_{1,1},\dots,A_{1,d},\dots,A_{d,d})\in \rr^{d(d+1)/2}$ the matrix $\operatorname{Sym}(A)$ is defined by $\operatorname{Sym}(A)_{i,j} \eqdef \operatorname{Sym}(A)_{j,i} \eqdef A_{i,j}$, $i,j=1,\dots,d$.
Thus, $(\yyy,d_{\yyy})$ can be equipped with a QAS space structure.  Its associated geometric attention mechanism is defined, the same way for each $q\in \nn_+$, by sending any $(u,(m_n,A_n)_{n=1}^N)$ in $\rr^{N}\times \rr^{N\times d + d(d+1)/2}$ to
\[
    \operatorname{attention}_{N,q}(u,(m_n,A_n)_{n=1}^N)
         \eqdef 
    N\left(
        \operatorname{\sum}_{k=1}^N
            \Pi_{\Delta_N}(u)_n
            m_n
    ,
        \exp\circ \operatorname{Sym}
        \left(\operatorname{\sum}_{k=1}^N
            \Pi_{\Delta_N}(u)_n
            A_n
        \right)
    \right)
.
\]
\end{example}
\subsection{Examples of Causal Maps of Approximable Complexity}
\label{s_Examples__ss_Examples_CausalSystems_wt_AC}
In this section, we present two examples of AC maps, which are not of finite complexity.  The first illustrates how an AC map can fail to be of finite complexity while exhibiting a limited memory, because the decoding map $\rho$ depends on arbitrarily many parameters.  The second example demonstrates an AC map where the converse is true.  Namely, it has infinite memory but its decoding maps depend on a fixed and finite number of latent parameters.

In the following example, we modify the setting of Example~\ref{ex_SDEs}, to show how an SDE's multi-step evolution defines an AC map into the adapted Wasserstein space. To keep technical details at a minimum, we consider a $1$-dimensional SDE.
\begin{example}[Discrete-time SDEs (Adapted Wasserstein space)]
\label{ex_SDEs_AW_simple}
Let $(\yyy, d_\yyy) = (\mathcal P_1(\rr^2), \mathcal{AW}_1)$.
Here, we assume that the H\"older coefficients of $\mu(t_n,\cdot) \in C^\alpha(\rr, \rr)$ and $\sigma(t_n,\cdot) \in C^\alpha(\rr,\rr)$ are uniformly bounded in $(t_n)_{n \in \mathbb{Z}}$, with constants $L_\mu$ and $L_\sigma$, respectively.
    For simplicity, we assume $\delta_-$ and $\delta_+$ in Assumption \ref{assumption_of_regular_gridmesh} to be  $\delta_- = \delta_+ = 1$.
    For initial condition $X_{t_n} = x_{t_n} \in \rr$, we define $X_{t_{n+1}}$ by
    \[
        X_{t_{n+1}} = x_{t_n} + \mu(t_n, x_{t_n}) + \sigma(t_n,x_{t_n}) W_n,
    \]  
    where $(W_n)_{n \in \mathbb Z}$ is a sequence of independent standard Gaussians.
    We consider the causal map that is, for $n \in \zz$ and $\xb \in \rr^\zz$, given by
    \[
        F(\xb)_{t_n}  \eqdef  
            \Law(X_{t_{n+1}}, X_{t_{n + 2}} | X_{t_{n}} = x_{t_n}),
    \]
    and proceed to show that it has approximable complexity.
    For this reason, let $\epsilon > 0 $ and $\mathcal K \subseteq \rr^\zz$ be compact.
    We define, for $n \in \nn$,
    \[
 c_{AC}(n,\epsilon) \eqdef \sup_{\xb \in \mathcal K} |\sigma(t_n,x_{t_n})|^ \alpha.
    \]
    For $m \in \nn$ sufficiently large, we choose an increasing sequence of quantiles $(q_k)_{k = 1}^m$ of a standard Gaussian $\gamma$, satisfying
    \[
        \int_\rr \sum_{k = 0}^m \mathbbm{1}_{I_k}(x) |x - q_k|^\alpha \, \gamma(dx) \le \frac{\epsilon}{L_\mu + L_\sigma},
    \]
    where we set $q_0 \eqdef  q_1, q_{m + 1}  \eqdef  \infty$, $I_k  \eqdef  (q_k,q_{k+1})$, $k = 1, \ldots, m$ and $I_0  \eqdef  (-\infty, q_1)$.

    To ease notation, we write
    \begin{gather*}
        q_k^n(x)  \eqdef  x + \mu(t_n,x) + \sigma(t_n, x) q_k,
        \\
        \mu^n(x)  \eqdef  \mu(t_n,x), \quad \mu^n_k(x)  \eqdef  \mu(t_n, q_k^{n-1}(x)),
        \\
        \sigma^n(x)  \eqdef  \sigma(t_n,x), \quad \sigma_k^n(x)  \eqdef  \sigma(t_n, q_k^{n-1}(x)).
    \end{gather*}
    Then, the encoding and decoding maps are given by
    \begin{align*}
        f_\epsilon(t_n , x)  \eqdef & 
        ( x + \mu^n(x), 
        \sigma^n(x), 
        (\mu^{n+1}_k(x) )_{k = 1}^m, 
        (\sigma^{n+1}_k(x) )_{k = 1}^m ),
        \\
        \rho_\epsilon( (\mu_k)_{k = 0}^m, (\sigma_k)_{k = 0}^m)
         \eqdef &
        N(\mu_0, \sigma_0^2)(dx_1) \sum_{k = 0}^m \mathbbm{1}_{\mu_0 + \sigma_0 I_k}(x_1) N(\mu_{k \vee 1}, \sigma_{k \vee 1}^2)(dx_2).
    \end{align*}
    By construction, we have that $f_\epsilon(t_n, \cdot) \in C^\alpha(\rr,\rr^{m + 1} \times \rr^{m + 1})$.
    Next we show that $\rho_\epsilon$ belongs to $C^1(\rr^{m + 1} \times \rr^{m + 1}, (\mathcal P_1(\rr^2),\mathcal{AW}_1))$. 
    We may estimate
    \begin{align*}
        \mathcal{AW}_1(\rho_\epsilon((\mu_k)_{k = 0}^m, (\sigma_k)_{k = 0}^m),
        \rho_\epsilon((\hat \mu_k)_{k = 0}^m, (\hat \sigma_k)_{k = 0}^m))
        &\le 
        \sum_{k = 0}^m \mathcal W_1(N(\mu_k,\sigma_k^2),N(\hat \mu_k,\hat \sigma_k^2))
        \\
        &\le
        \sum_{k = 0}^m |\mu_k - \hat \mu_k| + |\sigma_k - \hat \sigma_k| \, \mathbb E \left[ | W_n| \right].
    \end{align*}
    Finally, we verify that the causal map of finite complexity associated to $(f_\epsilon, \rho_\epsilon)$ satisfies \eqref{eq_approximation_sense}.
    For this reason, let $\gamma^n_{x_{t_n}}$ be the normal distribution with mean $\mu^n(x_{t_n})$ and variance $\sigma^n(x_{t_n})^2$.
    Thus, we get for $\xb \in \mathcal K$
    \allowdisplaybreaks
    \begin{align*}
        \mathcal{AW}_1
    &(F(\xb)_{t_n}, \rho_\epsilon(f_\epsilon(t_n, 
        x_{t_n})))
        \\
    &\le 
        \int_\rr \sum_{k = 0}^m \mathbbm{1}_{\mu^n(
        x_{t_n}
        ) + \sigma^n(
        x_{t_n}
       ) I_k}(x)
        \mathcal W_1\left(N(\mu^{n+1}(x), \sigma^{n+1}(x)), N(\mu_k^{n+1}(x), \sigma_k^{n+1}(x))\right) \, \gamma^n_{
        x_{t_n}
        }(dx)
        \\
    &\le 
        \int_\rr
        \sum_{k = 0}^m \mathbbm{1}_{I_k}(x)
        \Big(
        | \mu^{n + 1}(\mu^n(
        x_{t_n}
        ) + \sigma^n(
        x_{t_n}
        ) x) - \mu^{n+1}(\mu^n(
        x_{t_n}
        ) + \sigma^n(
        x_{t_n}
        ) q_k)|
        \\
    & \hspace{1cm}+
        | \sigma^{n + 1}(\mu^n(
        x_{t_n}
        ) + \sigma^n(
        x_{t_n}
        ) x) - \sigma^{n+1}(\mu^n(
        x_{t_n}
        ) + \sigma^n(
        x_{t_n}
        ) q_k)|
        \, \mathbb E [ | W_n | ] \Big) \, \gamma(dx)
        \\
    & \hspace{3cm} \le 
        \int_\rr
        \sum_{k = 0}^m \mathbbm{1}_{I_k}(x)
        |\sigma^n(
        x_{t_n}
        )|^\alpha | x - q_k|^\alpha (L_\mu + L_\sigma) \, \gamma(dx)
        \le 
        |\sigma^n(
        x_{t_n}
        )|^\alpha \epsilon.
        \end{align*}
\end{example}

Our next example of an AC map draws from the dynamical systems literature, such as \cite{HUTTERRecepHelmut2021_Ent_Optimal_RNNs_LDSs}.  
In particular, the example shows how the approximation of stochastic processes can equally be framed as a causal map into an $L^1$-space rather than a Wasserstein space.  This illustrates the ``modularity'' of our framework as well as the ability to model stochastic processes with genuinely infinite memory.  

\begin{example}[Infinite-Memory SDEs Inducing AC Maps]
\label{ex_SDEs_W1_infinite_memory}
Let $\alpha \in (0,1$], $(\Omega,\mathcal{F},\pp,\mathbb{F} \eqdef (\mathcal{F}_{t_n})_{n\in \nn})$ be a filtered probability space, and $(W_{t_n})_{n\in \nn}$ be a sequence of i.i.d.\ standard $d$-dimensional Gaussian random variables.
For simplicity, we assume that $\delta_-=\delta_+=1$ where, $\delta_+$ and $\delta_-$ are as in Assumption \ref{assumption_of_regular_gridmesh}.  Let $M\in C^{\alpha}([0,\infty)\times \rr^d,[0,1]^d)$, $\Sigma \in C^{\alpha}([0,\infty)\times \rr^d,[0,1]^{d\times d})$, and let $(k_n)_{n\in \zz_-}$ be an absolutely summable real-valued sequence.
Define the map
\[
\mu:
[0,\infty)\times (\rr^d)^{\zz_-}\ni 
\left(t,(z_n)_{n\in \zz_-}\right)
    \mapsto 
\sum_{-\infty<n\leq 0}
    k_n M(t,z_n)
.
\]
Then the causal map $F$ given by
\[
F(\xb)_{t_n}%
     \eqdef 
        N
        \left(
        x_{t_n}
            +
        \mu(t_n,(x_{t_s})_{s\leq n})
                ,
            \Sigma(t_n,x_{t_n})
        \right)
    \in \mathcal{P}_1(\rr^d),
\]
for $\xb \in (\rr^d)^\zz$ and $n \in \zz$, has ``infinite memory''.
This causal map has approximable complexity since the encoding and decoding maps of Definition~\ref{defn_compression_memory_property} can be taken to be
\[
f_{\epsilon}(t,(x_{-m(\epsilon)},\dots,x_0)) 
     \eqdef  
\Big(
        x_0
            + 
        \sum_{-m(\epsilon)\leq n\leq 0} \,k_n M(t, x_n)
    ,
        \Sigma(t,x_0)
\Big)
    \mbox{ and }
\rho_{\epsilon}(\mu,\Sigma)
         \eqdef 
    N(\mu ,\Sigma\Sigma^{\top})
    ,
\]
for $m(\epsilon)\in \nn$ large enough.
\end{example}

\section{Discussion}
\label{s_Discussion}
In the current manuscript, we considered the case where $\mathscr{X}$ is a compact subset of a Euclidean case.  Indeed, one can consider more general ``finite dimensional'' input spaces by precomposing our geometric transformer and geometric hypertransformer, architectures with \textit{feature maps}, which map inputs from more general metric spaces into Euclidean space.  Moreover, by \cite{kratsios2021universal} any such continuous feature map is ```suitable'', meaning that it preserves a model's universal approximation property upon pre-composition, if and only if it is injective.  

The situation complicates whenever $\mathscr{X}$ is no longer assumed to be a compact subset of Euclidean space, and instead, it is assumed to be a compact subset of an infinite-dimensional Banach space $B$.  This is because our proof techniques rely on the extendability of H\"{o}lder functions defined on $\mathscr{X}$ into a QAS space $(\yyy,d_{\yyy},\eta,Q)$ to H\"{o}lder functions defined on all of $B$ and that the H\"{o}lder constant of this extended function can be controlled in terms of the original map.  This is an active area of contemporary analytic research which is subtle, even in seemingly familiar cases, such as when $\mathscr{X}$ and $\mathscr{Y}$ are both Lebesgue spaces on $\mathbb{R}$ (see \cite{naor2001phase}).  

In the dynamic case, if we replace the uniform topology on the Fr\'{e}chet space $ (\mathbb{R}^d)^{\mathbb{Z}}$ with a weighted TVS topology in the sense of \cite{prolla1971bishop,schmocker2022universal}, then it may be possible to derive qualitative universal approximation theorems which are valid on a broader family of compact subsets of the path-space $(\rr^d)^{\mathbb{Z}}$ with this alternative topology. 
We would like to explore this direction and the possible connection to the compression function $c_{AC}$ in Definition~\ref{def_AdaptedMap_FiniteComplexity} in future work. 

% Lastly, let us now proposed work does provide a principled deep learning framework in which several important (generalized) dynamical systems arising in mathematics can be approximated.  Examples range from non-Markovian SDEs, in Example~\ref{ex_SDEs_W1_infinite_memory}, to the conditional law of a discrete-time time stochastic process's running maximum in Example~\ref{ex_Running_Max_Discrete_Time}.  Nevertheless, there is still room for extending our universal deep neural modeling framework further since some processes of interest in mathematical finance, e.g.\ a continuous time stochastic process's running, cannot be more naturally approximated without artificially enlarging our state space.  In future work, we would like to also cover this type of stochastic object.  

\section{Proofs}
\label{s_Proofs}
This final section contains proofs of the paper's main results. We start by recalling concepts and introducing notations that we will use throughout the section.

Let $(\mathcal X,d)$ be a metric space.  We will often make use of a homeomorphism of a metric space to itself, which ``makes any $\alpha$-H\"{o}lder function Lipschitz'' and, following \citep[Section 2.6]{Weaver2018LipschitzAlgebras}, is defined as follows. For any $0<\alpha\leq 1$ we define the \textit{$\alpha$-snowflake} $(\mathcal X,d^{\alpha})$ of $(\mathcal X,d)$ to be the metric space whose underlying set is $\mathcal X$ equipped with the metric $d^{\alpha}$ given by 
\[
    d^{\alpha}(x_1,x_2) \eqdef  (d(x_1,x_2))^{\alpha}, \qquad x_1, x_2 \in \mathcal X.
\]
The metric ball in $(\mathcal X,d)$ of radius $r > 0$ at $x \in \mathcal X$ is denoted by $\Ball_{(\mathcal X,d)}(x,r)  \eqdef  \{ z \in \mathcal X \colon d(x,z) < r \}$.
The metric space $(\mathcal X,d)$ is called \textit{doubling}, if there is $C\in \nn_+$ for which every metric ball in $(\mathcal X,d)$ can be covered by at most $C$ metric balls of half its radius.
The smallest such constant is called $(\mathcal X,d)$'s \textit{doubling number}, and is here denoted by $C_{(\mathcal X,d)}$, see \citep[Section 10.13]{HeinonenClassicBook_AnalysisinMet2001} for further details.
We denote the Euclidean distance on $\rr^n$ by $d_n$, and we write $d_n^\alpha$ for the metric of the $\alpha$-snowflake $(\R^n, d_n^\alpha)$ of $(\R^n, d_n)$.

We also recall that, for any $\alpha>0$, the $\alpha$-H\"{o}lder norm of a function $g:[0,1]^n\to\rr^m$ is defined as
\[
\|g\|_{\alpha}  \eqdef  \sup_{x\in [0,1]^n} \|g(x)\| + \sup_{x_1,x_2\in [0,1]^n,\, x_1\neq x_2} \frac{\|g(x_1) - g(x_2)\|}{\|x_1-x_2\|^\alpha}.
\]
When $\|g\|_{\alpha}<\infty$, we write $g\in C^{\alpha}([0,1]^n,\rr^m)$.
\subsection{Proofs for the Static Case}
\label{s_Proofs__ss_StaticCase}

\begin{lemma}[Doubling Number of Snowflakes and Covering Number]
    \label{lem:DoublingOfSnowFlakesAndCoveringNumber}
    Let $(\mathcal X, d)$ be a doubling metric space with doubling number $C_{(\mathcal X, d)}$.
    Then, for $\alpha \in (0,1]$, the doubling number $C_{(\mathcal X, d^\alpha)}$ of its $\alpha$-snowflake $(\mathcal X, d^\alpha)$ is bounded as
    \begin{equation}
        \label{eq:DoublingConstantKSnowflakeToNormal}
        C_{(\mathcal X, d^\alpha)} \le C_{(\mathcal X, d)}^{ \lceil 1 / \alpha \rceil}.
    \end{equation}
    Moreover, if $\mathcal X$ has finite diameter, %let $N  \eqdef  C_{(\xxx,d_{\xxx})}^{\lceil \log_2(\operatorname{diam}(\mathcal X)) - \log_2(\delta) \rceil}$.  
    then, for any $\delta > 0$, there exist $N  \eqdef  C_{(\xxx,d_{\xxx})}^{\lceil \log_2(\operatorname{diam}(\mathcal X)) - \log_2(\delta) \rceil}$ balls of radius $\delta$ covering $\mathcal X$, i.e., there are $x_i \in \mathcal X$ for $i = 1,\ldots, N$ such that
    \begin{equation}
        \mathcal X = \bigcup_{i = 1}^N \Ball_{(\mathcal X,d)}(x_i, \delta).
    \end{equation}
\end{lemma}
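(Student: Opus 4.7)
The plan rests on the elementary translation dictionary between balls in $(\mathcal X,d)$ and balls in the snowflake $(\mathcal X,d^\alpha)$: for every $x\in\mathcal X$ and $r>0$,
\[
\Ball_{(\mathcal X,d^\alpha)}(x,r)
=
\Ball_{(\mathcal X,d)}(x,r^{1/\alpha}),
\]
which follows from $d^\alpha(x,z)<r\iff d(x,z)<r^{1/\alpha}$. Everything else is bookkeeping on iterated halvings.

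For the first claim, fix a snowflake ball $\Ball_{(\mathcal X,d^\alpha)}(x,R)$. Via the dictionary, I need to cover the $d$-ball $\Ball_{(\mathcal X,d)}(x,R^{1/\alpha})$ by $d$-balls of radius $(R/2)^{1/\alpha}=R^{1/\alpha}\,2^{-1/\alpha}$. Setting $k\eqdef\lceil 1/\alpha\rceil$, I apply the doubling property of $(\mathcal X,d)$ exactly $k$ times: after one application the original ball is covered by $C_{(\mathcal X,d)}$ $d$-balls of radius $R^{1/\alpha}/2$; after $k$ applications it is covered by $C_{(\mathcal X,d)}^{k}$ $d$-balls of radius $R^{1/\alpha}/2^{k}\le R^{1/\alpha}\,2^{-1/\alpha}$. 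Translating each such $d$-ball back through the dictionary gives a covering by $C_{(\mathcal X,d)}^{\lceil 1/\alpha\rceil}$ snowflake balls of radius at most $R/2$, proving \eqref{eq:DoublingConstantKSnowflakeToNormal}.

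For the second claim, fix $x_0\in\mathcal X$ and observe that $\mathcal X\subseteq \Ball_{(\mathcal X,d)}(x_0,\operatorname{diam}(\mathcal X)+\epsilon)$ for any $\epsilon>0$; equivalently, $\mathcal X$ is covered by a single $d$-ball of any radius strictly larger than $\operatorname{diam}(\mathcal X)$. Iterating the doubling property $k$ times yields a cover of $\mathcal X$ by $C_{(\mathcal X,d)}^{k}$ $d$-balls whose radii are bounded above by $\operatorname{diam}(\mathcal X)/2^{k}$ (letting $\epsilon\downarrow 0$ along the argument, or replacing the initial radius by $\operatorname{diam}(\mathcal X)$ and working with closed balls, which enjoy the same doubling bound). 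Choosing
\[
k\;=\;\bigl\lceil \log_2(\operatorname{diam}(\mathcal X))-\log_2(\delta)\bigr\rceil
\]
forces $\operatorname{diam}(\mathcal X)/2^{k}\le \delta$, so the resulting $C_{(\mathcal X,d)}^{k}$ balls each have radius at most $\delta$ and their union covers $\mathcal X$, giving the asserted $N$.

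The main obstacle is very mild: merely the careful rounding in the exponent $\lceil 1/\alpha\rceil$ (respectively $\lceil \log_2\operatorname{diam}-\log_2\delta\rceil$), and a cosmetic point about open versus closed balls at the initial radius $\operatorname{diam}(\mathcal X)$, neither of which affects the stated bound. No deeper input beyond the definition of the doubling number is required.
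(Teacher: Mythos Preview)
The proposal is correct and follows essentially the same argument as the paper: both use the identity $\Ball_{(\mathcal X,d^\alpha)}(x,r)=\Ball_{(\mathcal X,d)}(x,r^{1/\alpha})$ together with an iterated application of the doubling property, choosing the number of iterations to be $\lceil 1/\alpha\rceil$ for the first claim and $\lceil \log_2(\operatorname{diam}(\mathcal X))-\log_2(\delta)\rceil$ for the second. Your treatment of the open-versus-closed ball issue at radius $\operatorname{diam}(\mathcal X)$ is in fact slightly more careful than the paper's.
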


\begin{proof}
    Since $(\mathcal X, d)$ is doubling, there are, for each $x \in \mathcal X$ and $r > 0$, $x_i \in K$, $i = 1, \ldots, C_d$ such that
    \[
        \Ball_{(\mathcal X,d)}\left(x,r\right) \subseteq \bigcup_{i = 1}^{C_{(\mathcal X, d)}} \Ball_{(\mathcal X,d)}\left(x_i,r / 2\right).
    \]
    Iteratively applying this reasoning $j$-times for $j \in \N$, we find
    \begin{equation}
        \label{eq:DoublingConstantKRnBalls}
        \Ball_{(\mathcal X, d)}(x,r) \subseteq \bigcup_{i_1,\ldots, i_j = 1}^{C_{(\mathcal X, d)}} \Ball_{(\mathcal X,d)}\left(x_{i_1,\ldots,i_j},r / 2^j\right),
    \end{equation}
    where $x_{i_1,\ldots,i_j}$ are elements of $\mathcal X$.
    
    To see the first statement, note that
    \[
        \Ball_{(\mathcal X, d)}(x, r) = \Ball_{(\mathcal X, d^\alpha)}(x, r^\alpha), 
        \text{ thus }
        \Ball_{(\mathcal X, d)}(x, r / 2^{1 / \alpha}) = \Ball_{(\mathcal X, d^\alpha)}(x, r^\alpha / 2).
    \]
    For this reason, we choose $j = \lceil 1 / \alpha \rceil$ and derive from \eqref{eq:DoublingConstantKRnBalls} that a ball in $(\mathcal X,d^\alpha)$ can be covered by $C_{(\mathcal X,d)}^j$  balls half the radius, which yields \eqref{eq:DoublingConstantKSnowflakeToNormal}.
    
    To see the second statement, note that $\mathcal X \subset \Ball_{(\mathcal X, d)}(x, \operatorname{diam}(\mathcal X))$ for any $x \in \mathcal X$.
    Choose $j = \lceil \log_2( \operatorname{diam}(\mathcal X)) - \log_2(\delta) \rceil$ and $ r = \operatorname{diam}(\mathcal X)$.
    We have $r \le 2^j \delta$ and therefore, by \eqref{eq:DoublingConstantKRn}, $\mathcal X$ can be covered by $C_d^j = N$ balls of radius $\delta$.
\end{proof}

\begin{lemma}[Extension of H\"{o}lder Functions on Compact Subsets of Euclidean Space]
    \label{lem:HolderExtensionRnRm}
    Let $K\subset \R^n$ be compact, $\alpha \in (0,1]$, and $f \in C^\alpha(K,\R^m)$ with \Holderseminorm{} $L_f$.
    Then there exists an extension $F \in C^\alpha(\R^n, \R^m)$ of $f$ with constant $L_F$, such that
    \begin{equation} \label{eq:HolderExtensionRnRm}
        L_F \le c \cdot   \lceil \alpha^{-1} \rceil \log_2\left( \kappa_K(5^{-1}) \right) L_f,
    \end{equation}
    for some universal constant $c > 0$.
\end{lemma}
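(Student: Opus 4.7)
The plan is to reduce the H\"older extension problem to a Lipschitz extension problem on the $\alpha$-snowflake of $\mathbb{R}^n$, then to invoke a Lipschitz extension theorem whose constant is controlled by the metric capacity, and finally to translate the snowflaked capacity back to the capacity of $K$.

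First I would observe that, viewed as a map $f : (K, d_n^\alpha) \to (\mathbb{R}^m, d_m)$, the function $f$ is $L_f$-Lipschitz, since
\[
\|f(x)-f(y)\| \le L_f\, d_n(x,y)^\alpha = L_f\, d_n^\alpha(x,y), \qquad x,y\in K.
\]
Consequently, any $\alpha$-H\"older extension $F \in C^\alpha(\mathbb{R}^n,\mathbb{R}^m)$ of $f$ with constant $L_F$ corresponds exactly to a Lipschitz extension $F : (\mathbb{R}^n, d_n^\alpha) \to (\mathbb{R}^m, d_m)$ with the same constant, so it suffices to construct the latter.

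Next, I would invoke a Lipschitz extension theorem for Hilbert-valued maps defined on subsets of a doubling metric space, specifically the one in \cite{BRUE2021LipExtRP}, which produces an extension whose Lipschitz constant is bounded by an absolute multiple of $\log_2(\kappa(1/5))$ times the original, where $\kappa$ denotes the ambient metric capacity. Applied to $(K, d_n^\alpha) \subset (\mathbb{R}^n, d_n^\alpha)$, this yields an extension $F : \mathbb{R}^n \to \mathbb{R}^m$ that is Lipschitz with respect to $d_n^\alpha$, and hence $\alpha$-H\"older with respect to $d_n$, with constant at most $c_1 \log_2(\kappa^\alpha_K(1/5)) L_f$, where $\kappa^\alpha_K$ denotes the metric capacity of $K$ computed in the snowflake metric.

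The remaining step is to establish the inequality
\[
\log_2(\kappa^\alpha_K(1/5)) \le c_2\, \lceil \alpha^{-1} \rceil\, \log_2(\kappa_K(1/5))
\]
for an absolute constant $c_2>0$. This would be done by iterating the doubling covering argument $\lceil \alpha^{-1} \rceil$ times, in direct analogy with the proof of Lemma~\ref{lem:DoublingOfSnowFlakesAndCoveringNumber}, together with the equivalence between metric capacity and doubling number for subsets of $\mathbb{R}^n$ established in \cite{BRUE2021LipExtRP}. Inserting this bound into the previous step yields~\eqref{eq:HolderExtensionRnRm}. The main obstacle I anticipate is bookkeeping the effect of the snowflake on the fixed radius $1/5$: a ball of $d_n^\alpha$-radius $r$ is a ball of $d_n$-radius $r^{1/\alpha}$, so the iteration must pass through an intermediate radius and the resulting multiplicative loss must be absorbed into the universal constant $c$.
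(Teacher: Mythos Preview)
Your proposal is correct and follows essentially the same approach as the paper: reduce to a Lipschitz extension problem on the $\alpha$-snowflake, apply the extension theorem of \cite{BRUE2021LipExtRP}, and then control the snowflake's doubling/capacity by iterating the covering argument $\lceil \alpha^{-1}\rceil$ times via Lemma~\ref{lem:DoublingOfSnowFlakesAndCoveringNumber} together with the capacity--doubling comparison of \cite[Proposition~1.7]{BRUE2021LipExtRP}. The only cosmetic difference is that the paper phrases the intermediate bound directly in terms of the doubling number $C_{(K,d_n^{\alpha})}$ rather than the snowflaked metric capacity, but the two are interchanged by the same cited proposition.
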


\begin{proof}

    Since the identity $(\R^n,d_n)\ni x \mapsto x\in (\R^n,d_n^{\alpha})$ is a quasisymmetry \citep[page 78]{HeinonenClassicBook_AnalysisinMet2001}, it is by definition a homeomorphism.
    Thus, $K$ is closed in $(\R^n,d_n^{\alpha})$ if and only if it is closed in $(\R^n,d_n)$.
    Moreover, a function $f$ is an element of $C^\alpha(K,\R^m)$ if and only if $f \in C^1((K, d_n^\alpha), (\R^m, d_m))$, since
    \[
        \lVert f(x_1) - f(x_2) \rVert \le L_f \lVert x_1 - x_2 \rVert^\alpha = L_f d_n^\alpha(x_1, x_2)\quad \text{for all $x_1, x_2 \in K$.}
    \]
    In order to find an extension $F$ of $f$ with \Holderseminorm{} bounded by \eqref{eq:HolderExtensionRnRm}, we want to apply \citep[Theorem 4.1]{BRUE2021LipExtRP}.
    It remains to show that the doubling number
    $C_{(K,d_n^\alpha)}$ of $(K,d_n^\alpha)$ is bounded as
    \begin{equation}
        \label{eq:DoublingConstantKRn}
        C_{(K,d_n^\alpha)} \le \kappa_K(5^{-1})^{\lceil \alpha^{-1} \rceil}.
    \end{equation}
    For this reason, we relate $C_{(K,d_n^\alpha)}$ with the doubling number $C_{(K,d_n)}$ of $(K, d_n)$.
    By \citep[Proposition 1.7 (i)]{BRUE2021LipExtRP} we have the bound
    \begin{equation}
        \label{PROOF_prop_universal_tree__eq_doubling_constant_bound}
        C_{(K,d_n)}\leq \kappa_K(5^{-1}).
    \end{equation}
    Now, by Lemma \ref{lem:DoublingOfSnowFlakesAndCoveringNumber} and combining \eqref{PROOF_prop_universal_tree__eq_doubling_constant_bound} with \eqref{eq:DoublingConstantKSnowflakeToNormal}, we obtain \eqref{eq:DoublingConstantKRn}.
    Therefore, we can apply \citep[Theorem 4.1]{BRUE2021LipExtRP} and find an extension $F \in C^1((\R^n,d_n^\alpha),(\R^m,d_m))$ of $f$ with Lipschitz constant $L_F$ satisfying \eqref{eq:HolderExtensionRnRm}.
    Clearly, $F \in C^\alpha(\R^n,\R^m)$ with \Holderseminorm{} $L_F$ which completes the proof.
\end{proof}

\begin{proof}[{Proof of Proposition~\ref{prop_universal_approximation_improved_rates}}]
If $f(x)=c$ for some 
constant $c>0$, 
then the statement holds with the neural network $\hat{f}(x)=c$, which can be represented as in \eqref{eq_definition_ffNNrepresentation_function}
with $[d]=(n,m)$, where $A^j$ is the $0$ matrix for all $j$, and the ''$c$'' in \eqref{eq_definition_ffNNrepresentation_function} is taken to be this constant $c$.
Therefore, we henceforth only need to consider the case where $f$ is not constant.
Let us observe that, if we pick some $x^{\star}\in K$, then for any multi-index $[d]$ and any neural network $\hat{f}_{\theta}\in \NN[[d]]$, $\hat{f}_{\theta}(x)-f(x^{\star})\in \NN[[d]]$, since $\NN[[d]]$ is invariant to post-composition by affine functions.  
Thus, we represent $\hat{f}_{\theta}(x)-f(x^{\star})=\hat{f}_{\theta^{\star}}(x)$, for some $\theta^{\star}\in \rr^{P([d])}$.  Consequently:
\[
    \sup_{x\in K}
    \left|
    \|
    (f(x)-f(x^{\star})) - \hat{f}_{\theta^{\star}}(x)
    \|
    -
    \|
    f(x) - \hat{f}_{\theta}(x)
    \|
    \right|
    =0
    .
\]
Therefore, without loss of generality, we assume that 
$f(x^*)=0$ for some $x^*\in K$.
By Lemma \ref{lem:HolderExtensionRnRm} we can extend $f$ to $F \in C^{\alpha}(\R^n,\R^m)$ with \Holderseminorm{} bounded by \eqref{eq:HolderExtensionRnRm}.

\hfill\\
\textbf{Step 1 -- Normalizing $\tilde{f}$ to the Unit Cube:}
    First, we identify a hypercube ``nestling'' $K$.  To this end, let
    \begin{equation}
        \label{PROOF_prop_universal_tree__good_cube_via_Jungs_Radius}
        r_K  \eqdef  \operatorname{diam}(K) \sqrt{ \frac{n}{2(n+1)} }.
    \end{equation}
    By Jung's Theorem (see \cite{Jung1901}), there exists $x_0\in \rr^n$ such that the closed Euclidean ball $\overline{\Ball_{(\R^n,d_n)}\left(x_0,r_K\right)}$ contains $K$.
    Therefore, by H\"{o}lder's inequality, we have that the $n$-dimensional hypercube\footnote{For $x,y \in \R^n$ we we denote by $[x,y]$ the hypercube defined by $\prod_{i = 1}^n [x_i, y_i]$. } $[x_0-r_K\bar{1},x_0+r_K\bar{1}]$ contains $\overline{ \Ball_{({\rr^n},d_n)}\left( x_0,r_K \right) }$, where $\bar{1}=(1,\dots,1)\in \rr^n$.
    Consequently, $K\subseteq [x_0-r_K\bar{1},x_0+r_K\bar{1}]$.  
    Let $\tilde{f} \eqdef  F|_{[x_0-r_K\bar{1},x_0+r_K\bar{1}]}$, 
    then $\tilde f \in C^\alpha([x_0 - r_K \bar 1, x_0 + r_K \bar 1], \R^m)$ is an extension of $f$ with \Holderseminorm{} $L_{\tilde f}$ bounded by \eqref{eq:HolderExtensionRnRm}.

Since $K$ has at least two points, then $r_K>0$.  Hence, the affine function
\[
T:\rr^n \ni x \mapsto 
(2r_K)^{-1}(x-x_0+r_K\bar{1}) \in \rr^n
\]
is well-defined, invertible,  not identically $0$, and maps $[x_0-r_K\bar{1},x_0-r_K\bar{1}]$ to $[0,1]^n$.
Note that the $\alpha$-H\"{o}lder norm of $g  \eqdef  \tilde f \circ T^{-1}$ is finite, as $g$ is $\alpha$-H\"older continuous with constant $L_g =(2r_K)^{\alpha} L_{\tilde f}$.
More explicitly, writing $u^\ast  \eqdef  T( x^\ast) \in [0,1]^n$, we have $g(u^\ast) = 0$ and find 
\begin{align}
\nonumber
    \lVert g \rVert_\alpha 
    &= \sup_{u \in [0,1]^n} \lVert g(u) \rVert + \sup_{u_1, u_2 \in [0,1]^n, u_1 \neq u_2} \frac{\lVert g(u_1) - g(u_2) \rVert}{\lVert u_1 - u_2\rVert^\alpha} \\ \nonumber
    &=
    \sup_{u \in [0,1]^n} \lVert g(u) - g(u^\ast) \rVert + L_g \le \sup_{u_1, u_2 \in [0,1]^n} \lVert g(u_1) - g(u_2) \rVert + L_g  \le 2 L_g\\ 
    & \phantom{=\sup_{u \in [0,1]^n} \lVert g(u) - g(u^\ast) \rVert + L_g} 
    \le 
    c \, 2^{\alpha + 1} r_K^\alpha \left( \lceil \alpha^{-1} \rceil \log_2\left( \kappa_K(5^{-1}) \right) \right) L_f
    \label{eq:gHolderNorm}
    ,
\end{align}
where we used that $L_{\tilde f}$ is bounded by \eqref{eq:HolderExtensionRnRm}.
We define $\tilde g  \eqdef  \lVert g \rVert_\alpha^{-1} g$, and get that, for each $i=1,\dots,m$, the function $\tilde g^{(i)}  \eqdef  \operatorname{pj}_i \circ \tilde g$ belongs to the unit ball of $C^{\alpha}([0,1]^n,\rr)$, where, for $i=1,\dots,m$, $\operatorname{pj}_i$ denotes the canonical projection $\operatorname{pj}_i:\rr^m\ni (x_1,\dots,x_m)\mapsto x_i \in \rr$.

\hfill\\
\textbf{Step 2 -- Constructing the Approximator:}
For $i=1,\dots,m$, let $\hat f_{\theta^{(i)}}\in \NN[[d^{(i)}]]$ for some multi-index $[d^{(i)}]=(d_0^{(i)},\dots,d_{J^{(i)} +1 })$ with $n$-dimensional input layer and $1$-dimensional output layer, i.e.\ $d_0^{(i)}=n$ and $d_{J^{(i)} +1 } =1$, and let $\theta^{(i)} \in \rr^{P([d^{(i)}])}$ be the parameters defining $\hat f_{\theta^{(i)}}$.
Since the pre-composition by affine functions and the post-composition by linear functions of neural networks in $\NN[[d^{(i)}]]$ are again neural networks in $\NN[[d^{(i)}]]$, we have that $g_{\theta^{(i)}}  \eqdef  \hat f_{\theta^{(i)}} \circ T^{-1}$ and $\tilde g_{\theta^{(i)}}  \eqdef  \lVert g \rVert_\alpha^{-1} g_{\theta^{(i)}}$ are neural networks in $\NN[[d^{(i)}]]$.
Note that due to bijectivity of the maps $T$ and $y \mapsto \lVert g \rVert_\alpha \, y$, the correspondence $\hat f_{\theta^{(i)}} \mapsto \tilde g_{\theta^{(i)}}$ is one-to-one.
Denote the standard basis of $\rr^m$ by $\{e_i\}_{i=1}^m$.
We compute
\allowdisplaybreaks
\begin{align}
    \nonumber
    \sup_{x\in K}\, 
    \left\|f(x)-\sum_{i=1}^m \hat{f}_{\theta^{(i)}}(x)e_i\right\|
        = &
    \sup_{x\in K}\, 
    \left\|\tilde{f}(x)
        -
    \sum_{i=1}^m \hat{f}_{\theta^{(i)}}(x)e_i\right\|
        \leq
    \sup_{x\in [x_0-r_K\bar{1},x_0+r_K\bar{1}]}\, 
    \left\|\tilde{f}(x)
        -
    \sum_{i=1}^m \hat{f}_{\theta^{(i)}}(x)e_i\right\|
\\
    \nonumber
    = &
    \sup_{x\in [x_0-r_K\bar{1},x_0+r_K\bar{1}]}\, 
    \left\|\tilde{f}\circ T^{-1}\circ T(x)
        -
    \sum_{i=1}^m \hat{f}_{\theta^{(i)}}\circ T^{-1}\circ T(x)e_i\right\|
\\
    \nonumber
        = &
    \sup_{u\in [0,1]^n}\, 
    \left\|g(u) - \sum_{i=1}^m g_{\theta^{(i)}}(u) e_i\right\|
        \leq
            \sqrt{m}
    \sup_{u\in [0,1]^n}\, 
    \max_{1\leq i\leq m}
    \left\|
        \operatorname{pj}_i\circ g(u)
            -
        g_{\theta^{(i)}} (u)
    \right\|
\\
        = &\,
            C_0
    \sup_{u\in [0,1]^n}\, 
    \max_{1\leq i\leq m}
    \left\| \tilde g^{(i)}(u) - \tilde g_{\theta^{(i)}}(u)
    \right\| 
    \label{PROOF_prop_universal_tree__main_estimate_Part_A}
    ,
\end{align}
where $C_0 \eqdef  \|g\|_{\alpha} \sqrt{m}$.
Since, for each $i=1,\dots,m$, $\tilde{g}^{(i)}$ belongs to the unit ball of $C(([0,1]^n, d_n^\alpha),\rr)$, 
for $\sigma$ as in Definition~\ref{defn_TrainableActivation_Singular} (resp.\ as in Definition~\ref{defn_TrainableActivation_Smooth}) we may apply \citep[Theorem 1]{ShenYangZhang2021} (resp.\ \citep[Proposition 59]{kratsiosuniversal2021})
to conclude that, for any $W,D\in \nn_+$ (resp.\ any $\tilde{\epsilon}>0$), and each $i=1,\dots,m$, there are $\hat f_{\hat{\theta}^{(i)}} \in \NN[[d^{(n)}]]$ where $[d^{(i)}]=(d_0^{(i)},\dots,d_{J^{(i)}  +1  }^{(i)})$
such that
\begin{equation}
\begin{cases}
    J^{(i)}
    \le
2^6n D + 3
 \mbox{ and }
\max_{1 \le j \le J^{(i)} +1} d_j{^{(i)}}
\le
\max\{n,5 W+13\} 
    &: \mbox{$\sigma$ of Definition~\ref{defn_TrainableActivation_Singular}}\\
J^{(i)}
    \in 
    \mathcal{O}\left(
  \tilde{\epsilon}^{-2n/\alpha}
  L_f^{2n/\alpha}
  (1+n/4)^{2n/\alpha}
\right)
    \mbox{ and }
    \max_{1 \le j \le J^{(i)} +1}d_j{^{(i)} } \le n + 3 %
    &: 
    \mbox{$\sigma$ of Definition~\ref{defn_TrainableActivation_Smooth}}
    \\
\end{cases}
\label{PROOF_prop_universal_tree__complexity_estimate_univariate}
\end{equation}
and
\begin{equation}
\begin{cases}
    \sup_{u\in [0,1]^n}\, 
    \max_{1\leq i\leq m}
    \|
        \tilde{g}^{(i)}(u)
            -
        \hat{f}_{\hat{\theta}^{(i)}}(u)
    \| 
    \leq 
    n^{\frac{\alpha}{2}} 
W^{-\alpha \sqrt{D}}
    +
2 n^{\frac{\alpha}{2}}
W^{-\sqrt{D}} 
    &: \mbox{$\sigma$ of Definition~\ref{defn_TrainableActivation_Singular}}\\
        \sup_{u\in [0,1]^n}\, 
        \max_{1\leq i\leq m}
        \|
            \tilde{g}^{(i)}(u)
                -
            \hat{f}_{\hat{\theta}^{(i)}}(u)
        \| 
        \leq 
        \tilde{\epsilon}
    &: 
    \mbox{$\sigma$ of Definition~\ref{defn_TrainableActivation_Smooth}}
    .
\end{cases}
\label{PROOF_prop_universal_tree__main_estimate_Part_B_application_of_Yarotskys_Result}
\end{equation}
Consequently, \eqref{PROOF_prop_universal_tree__main_estimate_Part_B_application_of_Yarotskys_Result} implies that
\allowdisplaybreaks
\begin{align}
\begin{cases}
     \sup_{x\in K}\, 
    \left\|f(x)
        -
    \sum_{i=1}^m
        \hat{f}_{\hat \theta^{(i)}}(x)
    e_i
    \right\|
        \leq 
            C_0
    n^{\frac{\alpha}{2}} W^{-\sqrt{D}}
    (W^{(1-\alpha) \sqrt{D}} +2)
    &: \mbox{$\sigma$ of Definition~\ref{defn_TrainableActivation_Singular}}\\
            \sup_{x\in K}\, 
    \left\|f(x)
        -
    \sum_{i=1}^m
        \hat{f}_{\hat \theta^{(i)}}(x)
    e_i
    \right\|
        \leq 
            C_0
        \tilde{\epsilon}
    &: 
    \mbox{$\sigma$ of Definition~\ref{defn_TrainableActivation_Smooth}}
    .
\end{cases}
    \label{PROOF_prop_universal_tree__main_estimate_Part_C_wrapping_up}
\end{align}

We express the constant $C_0$ in terms of the given $K$-dependent data by \eqref{PROOF_prop_universal_tree__good_cube_via_Jungs_Radius} and \eqref{eq:gHolderNorm}, 
thus obtaining 
\begin{equation}
C_0 
    \leq 
2^{1 + \alpha / 2}c \sqrt{m} \operatorname{diam}(K)^\alpha \left( \frac{n}{n + 1} \right)^{\alpha / 2} \left( \lceil \alpha \rceil^{-1} \log_2 \left( \kappa_K(5^{-1}) \right) \right)L_f
    \label{PROOF_prop_universal_tree__main_estimate_Part_F_Completed_Estimate}
    .
\end{equation}
Therefore, we introduce the universal constant
$\tilde C_0  \eqdef 2^{1 + \alpha / 2}\left(\frac{n}{n + 1} \right)^{\alpha / 2}c L_f$.
Rearranging and combining with~\eqref{PROOF_prop_universal_tree__complexity_estimate_univariate}, we find that
\begin{align}
    \begin{cases}
    \epsilon  \eqdef 
    \sqrt{m}
    n^{\frac{\alpha}{2}} W^{-\sqrt{D}}
    (W^{(1-\alpha) \sqrt{D}} +2)
    &: \mbox{$\sigma$ of Definition~\ref{defn_TrainableActivation_Singular}}\\
    \epsilon  \eqdef 
    \sqrt{m}\tilde{\epsilon}
    &: 
        \mbox{$\sigma$ of Definition~\ref{defn_TrainableActivation_Smooth}}
        .
\end{cases}
\end{align}
Observe that $P([d^{(i)}]) = 
    \sum_{j=0}^{J^{(i)}}d^{(i)}_{j +1 }(d_{j}^{(i)}+3) 
    -
    2d_{J^{(i)}+1}^{(i)}
\le 
    \sum_{j=0}^{J^{(i)}}\, d_{j+1}^{(i)}(d_{j}^{(i)}+3)
    \le (\operatorname{Depth}+1)\times (\operatorname{Width}+3)^2
    $, where we denote the network's depth and width respectively by $\operatorname{Depth}$ and $\operatorname{Width}$. 
    Thus, 
we deduce that the number of parameters defining each network is bounded-above by
\begin{align}
\begin{cases}
        P([d^{(i)}]) 
    \leq  
        \max\{n+3,5W+16\}^2
        (2^6n D 
        % + 3
       +4
        )
    &: \mbox{$\sigma$ of Definition~\ref{defn_TrainableActivation_Singular}}\\
        P([d^{(i)}]) 
    \leq  
        \mathcal{O}\left(
        (n + 6)^2
        \big(
            \tilde{\epsilon}^{-4n/\alpha}
            L_f^{4n/\alpha}
            (1+n/4)^{4n/\alpha}
      +1\big)
        \right)
    &: 
        \mbox{$\sigma$ of Definition~\ref{defn_TrainableActivation_Smooth}}
        .
\end{cases}
\label{PROOF_prop_universal_tree__main_estimate_Part_G_Rearranged}
\end{align}
\hfill\\
\textbf{Step 3 -- Counting Parameters:}
Let $g_1\bullet g_2$ 
denotes the component-wise composition of a univariate function $g_1$ with a multivariate function $g_2$.
By construction, for any $k\in \nn_+$, if $I_k$ denotes the $k\times k$-identity matrix, then
$I_k \sigma_{(1,1)}\bullet I_k \in \NN[[d_k]]$ with $P([d])=2k$, and $I_k \sigma_{(1,1)}\bullet I_k=1_{\rr^k}$.
Therefore, mutatis mutandis, $\NN[[\cdot]]$ satisfies \citep[Definition 4]{FlorianHighDimensional2021}; whence, mutatis mutandis, we may apply \citep[Proposition 5]{FlorianHighDimensional2021}.
Thus, there is a multi-index $[d]=(d_0,\dots,d_{J +1} )$ with $d_0=n$ and $d_J=m$, and a network $\hat{f}_{\theta}\in \NN[[d]]$ implementing $\sum_{i=1}^m \hat{f}_{\theta^{(i)}}$, i.e.
\[
    \sum_{i=1}^m \hat{f}_{\theta^{(i)}}e_i
        = 
    \hat{f}_{\theta},
\]
such that $\hat{f}_{\theta}$'s depth ($J$) is bounded above by
\begin{align}
 J 
    \le
\begin{cases}
    m(2^6n D 
        + 
    4
    )
    &: \mbox{$\sigma$ of Definition~\ref{defn_TrainableActivation_Singular}}\\
    \mathcal{O}\left(
    m\Big(  \tilde{\epsilon}^{-2n/\alpha}
              L_f^{2n/\alpha}
              (1+n/4)^{2n/\alpha}
             + 1
            \Big)
    \right)
    &: 
    \mbox{$\sigma$ of Definition~\ref{defn_TrainableActivation_Smooth}},
\end{cases} 
\end{align}
its width can be upper-bounded by
\begin{align}
\max_{0\leq j\leq J +1} \, d_j 
        \leq 
\begin{cases}
    n(m-1) 
        + 
    \max\{n,5W+13\}
    &: \mbox{$\sigma$ of Definition~\ref{defn_TrainableActivation_Singular}}\\
    nm+ 3
    &: 
    \mbox{$\sigma$ of Definition~\ref{defn_TrainableActivation_Smooth}},
\end{cases}
\end{align}
and it total number of parameters can be upper-bounded by
\begin{align}
        P([d])
        \leq &
    \left(
        \frac{11}{16} \, 2^2 \, n^2 m^2 - 1
    \right) 
    \sum_{i=1}^m P([d^{(i)}])
\label{PROOF_prop_universal_tree__paraneter_estimate_Flo_applied_pre}
.
\end{align}
Therefore, depending on the trainable activation function $\sigma$,~\eqref{PROOF_prop_universal_tree__paraneter_estimate_Flo_applied_pre} implies that
\begin{align}
P([d])
    \leq
    \begin{cases}
    \left(
        \frac{11}{4}n^2m^2 - 1
    \right) 
    m
    \max\{n+3,5W+16\}^2
    (2^6n D 
    % + 3
  +4 
    )
    &: \mbox{$\sigma$ of Definition~\ref{defn_TrainableActivation_Singular}}\\
    \mathcal{O}\left(
    \left( \frac{11}{4} n^2m^2 - 1\right) 
    m
    (n + 6)^2
       \big(
            \tilde{\epsilon}^{-4n/\alpha}
            L_f^{4n/\alpha}
            (1+n/4)^{4n/\alpha}
           + 1
        \big)
    \right)
    &:
    \mbox{$\sigma$ of Definition~\ref{defn_TrainableActivation_Smooth}}
    .
    \end{cases}
\label{PROOF_prop_universal_tree__paraneter_estimate_Flo_applied}
\end{align}
Combining~\eqref{PROOF_prop_universal_tree__paraneter_estimate_Flo_applied} with~\eqref{PROOF_prop_universal_tree__main_estimate_Part_G_Rearranged} yields our result for the cases where $\sigma$ is as in Definition~\ref{defn_TrainableActivation_Singular} and as in Definition~\ref{defn_TrainableActivation_Smooth}.
The case where $\sigma$ is as in Definition~\ref{defn_TrainableActivation_ClassicalNonTrainable} follows from \citep[Theorem 3.2]{kidger2021neural}. 
\end{proof}

\subsection{Proof of Theorem~\ref{thrm_main_StaticCase}}
\begin{proof}[{Proof of Theorem~\ref{thrm_main_StaticCase}}]
Let $\alpha \in (0,1]$ and fix $\epsilon_A, \epsilon_Q > 0$.
\hfill \\

\textbf{Step 1 -- The Random Projection:} 
Let $A \subset \xxx$ be closed and denote by $C_{(A, d^\alpha_\xxx)}$ the doubling number of the $\alpha$-snowflake of $A$, where $d_{\xxx}$ is the base metric on $A$ defining the Wasserstein distance on $\mathcal P_1(A)$.
By \cite[Theorem 3.2]{BRUE2021LipExtRP}, there exists a random projection $\Pi \colon \xxx \to \mathcal P_1(A)$ with Lipschitz constant $L_{\Pi}$ such that
\begin{equation}
    \label{PROOF_eq_thrm_measure_valued_quantitative_projection_Lipschitzbound1}
    L_{\Pi} \le c \cdot \log_2(C_{(A, d^\alpha_\xxx)}),
\end{equation}
where $c > 0$ is a universal constant.
To make the bound of $L_{\Pi}$ more explicit, we estimate $C_{(A, d^\alpha_\xxx)}$.
For this reason, consider $x \in A$ and $r > 0$.
We may externally cover $\Ball_{(A, d^\alpha_\xxx)}(x,r)$ by $C_{(\xxx, d^\alpha_\xxx)}^2$ balls of radius $r / 4$ centered at points in $\xxx$.
Consider a ball centered at $\tilde x \in \xxx$ with
\[
    \Ball_{(\xxx, d^\alpha_\xxx)}(\tilde x, r/4) \cap A \neq \emptyset,
\]
then we can find a ball of radius $r / 2$ centered in $A$ covering it.
Therefore, $\Ball_{(\xxx, d^\alpha_\xxx)}(x,r)$ may be internally covered by at most $C_{(\xxx,d^\alpha_\xxx)}^2$-balls of radius $r / 2$.
By inserting this into \eqref{PROOF_eq_thrm_measure_valued_quantitative_projection_Lipschitzbound1} and applying Lemma \ref{lem:DoublingOfSnowFlakesAndCoveringNumber}, we thus get
\begin{equation}
    \label{PROOF_eq_thrm_measure_valued_quantitative_projection_Lipschitzbound2}
    L_{\Pi} 
\le 
    2 c \cdot \log_2(C_{(\xxx, d^\alpha_\xxx)}) 
\le 
    2 c \lceil 1 / \alpha \rceil \cdot \log_2(C_{(\xxx, d_\xxx)}) 
=: 
    C_\Pi.
\end{equation}
Note that the right-hand side of \eqref{PROOF_eq_thrm_measure_valued_quantitative_projection_Lipschitzbound2} is independent of the choice of $A$, and we may assume from now on without loss of generality that $C_\Pi \ge 1$.
\par
\noindent

\textbf{Step 2 -- Estimating the External Covering Number of $\boldsymbol{f(\xxx)}$:}
By passing on to the snowflake $(\xxx, d^\alpha_\xxx)$, we may apply the reasoning of Lemma \ref{lem:DoublingOfSnowFlakesAndCoveringNumber} and cover $(\xxx, d^\alpha_\xxx)$, for any $\delta > 0$, by
\begin{equation}
    \label{PROOF_eq_thrm_measure_valued_quantitative_UAT__Ndeltabound}
    N_\delta 
        \le 
    C_{(\xxx,d_{\xxx})}^{\lceil
    \left(
        \log_2(\operatorname{diam}(\xxx)) 
            - 
        \log_2(\delta) 
    \right)
        / \alpha \rceil}
\end{equation}
distinct balls of radius $\delta$ centered in $\{ x_i \}_{i = 1}^{N_\delta} \subseteq \xxx$.
From now on, let
\begin{equation}
\label{PROOF_eq_thrm_measure_valued_quantitative_UAT__delta_DEF}
    \delta 
 \eqdef 
    \frac{
        \epsilon_A
    }{
        3L_f C_{\eta}C_{\Pi}
    }
,
\end{equation}
where we recall that $C_\eta \ge 1$ is the constant of the mixing function, see Definition \ref{ass_approximately_simplicial},
and write $N \eqdef N_\delta$ and $\mathcal{X}_N  \eqdef  \{ x_i \}_{i = 1}^N$. Then
\begin{align}
    \label{PROOF_eq_thrm_measure_valued_quantitative_UAT__Ndeltabound_BEGIN}
    \max_{x \in \xxx} \min_{1 \le i \le N} d_\yyy(f(x), f(x_i)) 
    &\le
    L_f \max_{x \in \xxx} \min_{1 \le i \le N} d_\xxx^\alpha(x, x_i)
    \\
    \label{PROOF_eq_thrm_measure_valued_quantitative_UAT__Ndeltabound_END}
    &\le
    L_f \delta \le \epsilon_A / 3.
\end{align}
In particular, the $\epsilon_A / 3$-external covering number of $f(\xxx)$ is at most $N$.\\
\par
\noindent

\textbf{Step 3 -- $\mathcal P_1(\mathcal{X}_N)$ and the simplex $\Delta_N$:} 
Let $\Pi_{\mathcal P_1(\mathcal{X}_N)} \colon (\xxx, d_\xxx^\alpha) \to (\mathcal P_1(\mathcal{X}_N),\mathcal W_1)$ be a $C_\Pi$-Lipschitz random projection, which exists by what was recalled in Step 1. 
We find by the triangle inequality and \eqref{PROOF_eq_thrm_measure_valued_quantitative_UAT__delta_DEF} that
\begin{equation*}
    d_\xxx^\alpha(x_1,x_2) \le \delta \text{ and } d_\xxx^\alpha(x_2,x_3) \le \delta \implies d_\xxx^\alpha(x_1,x_3) \le (2\epsilon_A) / (3 L_f).
\end{equation*}
Choose $\tilde{\mathcal{X}}_N  \eqdef  \{ \tilde x_i \}_{i = 1}^{\tilde N}$, $\tilde N \le N$, to be a subset of $\mathcal{X}_N$ with
\begin{gather}
    \label{PROOF_eq_thrm_measure_valued_quantitative_covering_points_not_too_close}
    \delta / 2 \le \min_{i \neq j} d_\xxx^\alpha(\tilde x_i, \tilde x_j) \le \operatorname{diam}(\xxx)^\alpha,
    \\
    \label{PROOF_eq_thrm_measure_valued_quantitative_covering_points_not_too_far}
    \max_{1 \le i \le N} \min_{1 \le j \le \tilde N} d^\alpha_\xxx(x_i, \tilde x_j) \le \delta.
\end{gather}
Then $f(\tilde{\mathcal{X}}_N)$ is a $(2 \epsilon_A) / 3$-covering of $f(\xxx)$ consisting of at most $N$ points, because, by \eqref{PROOF_eq_thrm_measure_valued_quantitative_covering_points_not_too_far},
\begin{align}
    \max_{x \in \xxx} \min_{1 \le i \le \tilde N} d_\yyy(f(x), f(\tilde x_i))
    &\le
    \max_{x \in \xxx} \min_{1 \le i \le N} d_\yyy(f(x), f(x_i)) + \min_{1 \le j \le \tilde N} d_\yyy(f(x_i), f(\tilde x_i))
    \\
    \label{PROOF_eq_thrm_measure_valued_quantitative_23_covering}
    &\le
    L_f (\delta + \delta) \le (2 \epsilon_A) / 3.
\end{align}
From now on we will solely use $\tilde{\mathcal{X}}_N$ and, to ease of notation, rename $\tilde{\mathcal{X}}_N$ as $\mathcal{X}_N$, $\tilde{N}$ as $N$, and $\{\tilde{x}_i\}_{i=1}^{\tilde{N}}$ as $\{x_i\}_{i=1}^N$.
Clearly, the Wasserstein space $(\mathcal P_1(\mathcal{X}_N), \mathcal W_1)$ is homeomorphic to $(\mathcal P_1(\mathcal{X}_N), \operatorname{TV})$, where $\operatorname{TV}$ is the total variation distance on $\mathcal P_{1}(\mathcal{X}_N)$, and the two metrics are equivalent thereon, since
\begin{equation}
    \label{PROOF_eq_thrm_measure_valued_quantitative_TV_to_W1}
    (\delta / 2) \cdot \operatorname{TV}(\mu,\nu) 
    \le 
    \mathcal W_1(\mu,\nu)
    \le
    \operatorname{diam}(\xxx)^\alpha \cdot \operatorname{TV}(\mu,\nu).
\end{equation}
Let $\lVert \cdot \rVert_1$ be the $\ell^1$-norm and $\lVert \cdot \rVert$ the Euclidean norm.
Note that $(\mathcal P_1(\mathcal{X}_N), \operatorname{TV})$ is isometric to the simplex $(\Delta_N, \lVert \cdot \rVert_1)$ and homeomorphic to $(\Delta_N, \lVert \cdot \rVert)$.
Indeed, there is a map $\iota_N \colon \mathcal P_1(\mathcal{X}_N) \to \Delta_N$ with
\begin{equation}
    \label{PROOF_eq_thrm_measure_valued_quantitative_euclidean_to_TV}
    \lVert \iota_N(\mu) - \iota_N(\nu) \rVert
    \le
    \operatorname{TV}(\mu,\nu)
    = 
    \lVert \iota_N(\mu) - \iota_N(\nu) \rVert_1 
    \le
    \sqrt{N} \cdot \lVert \iota_N(\mu) - \iota_N(\nu) \rVert,
\end{equation}
for all $\mu,\nu \in \mathcal P_1(\mathcal{X}_N)$.
Combining \eqref{PROOF_eq_thrm_measure_valued_quantitative_projection_Lipschitzbound2} with \eqref{PROOF_eq_thrm_measure_valued_quantitative_TV_to_W1} and \eqref{PROOF_eq_thrm_measure_valued_quantitative_euclidean_to_TV} yields an estimate on the Lipschitz constant, denoted by $L_{f_{N}}$, of $f_N  \eqdef  \iota_N \circ \Pi_{\mathcal P_1(\mathcal{X}_N)} \colon (\xxx, d_\xxx^\alpha)\to(\Delta_N, \lVert \cdot \rVert) \subseteq (\mathbb R^N, \lVert \cdot \rVert)$:
\begin{align*}
     \lVert f_N(x) - f_N(\tilde x) \rVert
    &\le 
    \frac{2}{\delta} \mathcal W_1(\Pi_{\mathcal P_1(\mathcal{X}_N)}(x), \Pi_{\mathcal P_1(\mathcal{X}_N)}(\tilde x))
    \\
    &\le 
    \frac{4 c \lceil 1 / \alpha \rceil }{\delta} \cdot \log_2(C_{(\xxx, d_\xxx)})
    \cdot
    d_\xxx^\alpha(x,\tilde x)
    \\
    &=
    \left( 12 c^2 \lceil 1 / \alpha \rceil^2 L_f \right) \cdot 
    \left( 
    \frac{C_{\eta}
    }{\epsilon_A} 
    \right)^{1 / r} \cdot \log_2(C_{(\xxx, d_\xxx)})^2 
    \cdot 
    d_\xxx^\alpha(x,\tilde x)
\end{align*}
for all $x, \tilde x \in \xxx$.
\par
\noindent
\textbf{Step 4 -- Approximation of $f$:}
We denote the elements of $f(\mathcal{X}_N)$ by $y_i  \eqdef  f(x_i)$, $i = 1,\ldots, N$.
By Definition~\ref{def_disc_modulus_quantification}, we have that $f(\mathcal{X}_N)$ is quantizable, i.e.\
there exists $q\in \nn_+$ with $D_q \eqdef \,
\mathcal \mathcal{Q}_{f(\mathcal{X}_N)}(\epsilon_Q)$ and
there are $\{ \tilde y_i \}_{i = 1}^N \subseteq \yyy$ such that
\begin{equation}
     \label{PROOF_eq_thrm_measure_valued_quantitative_UAT__quantization}
   d_\yyy(y_i, \tilde y_i) 
        \le 
        \min\left\{
            \frac{\epsilon_Q}{3 C_{\eta}}
                ,
            \frac{\epsilon_Q}{3}
        \right\}
        ,\quad 1 \le i \le N,
\end{equation}
and $\tilde{\mathcal{Y}}_N  \eqdef  \{ \tilde y_i \}_{i = 1}^N$ are representable with $
\mathcal \mathcal{Q}_{f(\mathcal{X}_N)}(\epsilon_Q)$-parameters, i.e., 
there are $z_1,\dots,z_N\in \rr^{\mathcal \mathcal{Q}_{f(\mathcal{X}_N)}(\epsilon_Q)}$ such that 
$Q_q(z_i)=\tilde{y_i}$ for $i=1,\dots,N$ and the estimate~\eqref{PROOF_eq_thrm_measure_valued_quantitative_UAT__quantization} holds.  
Since $\mathcal{Y}_N=(y_1,\dots,y_N)$ and  $\tilde{\mathcal{Y}}_N=(\tilde y_1,\dots,\tilde y_N)$ are fixed, to ease the notation, from here and till the end of the proof we will use
\begin{align*}
    \eta_N &\colon \Delta_N \to \yyy \colon w \mapsto \eta( w, \mathcal{Y}_N), \\
    \widetilde{\eta}_N & \colon \Delta_N \to \yyy \colon w \mapsto \eta( w, \tilde{\mathcal{Y}}_N).
\end{align*}

Let $1 \le i \le N$, $w \in \Delta_N$ and $p\geq 1$.
Since, for every $x_i$, the function $x \mapsto d_\yyy(f(x), f(x_i))$ is $L_f$-Lipschitz on $(\xxx, d^\alpha)$, then we have
\[
    \mathcal W_p^p( \delta_{x_i}, \iota_N^{-1}(w)) 
   =
    \sum_{j=1}^N w_j d^\alpha_\xxx(x_i,x_j)^p
   \geq\frac{1}{L_f^p} \sum_{j=1}^N w_j d_\yyy(f(x_i),f(x_j))^p
   =\frac{1}{L_f^p} \sum_{j=1}^N w_j d_\yyy(y_i,y_j)^p.
\]
Using this, together with Definition~\ref{ass_approximately_simplicial} 
and \eqref{PROOF_eq_thrm_measure_valued_quantitative_UAT__quantization}, yields
\begin{align}
\nonumber
    d_\yyy
    (
        \widetilde{\eta}_N(e_i)
        ,
        \widetilde{\eta}_N(w)
    )
    &\le
    C_{\eta} \left(\sum_{j = 1}^N w_j d_\yyy ( \tilde y_i, \tilde y_j)^p\right)^{1/p}
    \\
    \nonumber
    &\le 
    C_{\eta} \left(\sum_{j = 1}^N w_j \left( d_\yyy ( \tilde y_i, y_i) + d_\yyy (y_i, y_j) + d_\yyy (y_j, \tilde y_j) \right)^p\right)^{1/p}
       \\
       \nonumber
    &\le 
    C_{\eta} \left(\sum_{j = 1}^N w_j \left( \frac{2\epsilon_Q}{3 C_{\eta}} + d_\yyy (y_i, y_j) \right)^p\right)^{1/p}
    \\
    \nonumber
     &\le 
 2\epsilon_Q/3+C_{\eta} \left(\sum_{j = 1}^N w_j d_\yyy (y_i, y_j)^p\right)^{1/p}
 \\
\label{PROOF_eq_thrm_measure_valued_quantitative_UAT__quantization_estimate}
     &\le 
 2\epsilon_Q/3+C_{\eta} L_f \mathcal W_p(\Pi_{\mathcal P_1(\mathcal{X}_N)}(x_i), \iota_N^{-1}(w)),
\end{align}
where we used Minkowski's inequality in the second to last step.
Now we want to obtain an estimate 
of $\mathcal W_p(\Pi_{\mathcal P_1(\mathcal{X}_N)}(x_i), \Pi_{\mathcal P_1(\mathcal{X}_N)}(x))$, for $1 \le i \le N$ and $x \in \xxx$. If $x\in\mathcal X_N$, then $\mathcal W_p(\Pi_{\mathcal P_1(\mathcal{X}_N)}(x_i), \Pi_{\mathcal P_1(\mathcal{X}_N)}(x))=d_\xxx(x,x_i)$.
For the case of $x\notin\mathcal X_N$, we note from the proof of \cite[Theorem~3.2]{BRUE2021LipExtRP} that the projection $\Pi_{\mathcal P_1(\mathcal{X}_N)}$ even satisfies
\begin{equation}
\label{eq:lippr}
    \mathcal W_p(\Pi_{\mathcal P_1(\mathcal{X}_N)}(x_i),\Pi_{\mathcal P_1(\mathcal{X}_N)}(x)) 
    \le
    C_\Pi d_\xxx(x_i,x),
\end{equation}
where crucially enters $x_i \in \mathcal X_N$.

Since, for $1 \le i \le N$, $\eta_N(e_i) = y_i = f(x_i)$, $\widetilde{\eta}_N(e_i) = \tilde y_i$ and $f_N(x_i) = e_i$,
we find by \eqref{PROOF_eq_thrm_measure_valued_quantitative_23_covering}, \eqref{PROOF_eq_thrm_measure_valued_quantitative_UAT__quantization}, \eqref{PROOF_eq_thrm_measure_valued_quantitative_UAT__quantization_estimate}, \eqref{eq:lippr} and Lipschitz continuity of $\Pi_{\mathcal P_1(\mathcal{X}_N)}$, that
\begin{align}
    \sup_{x \in \xxx} d_\yyy(f(x), \widetilde{\eta} ( f_N(x) ))
    &\le
    \sup_{x \in \xxx} \min_{1 \le i \le N}\big\{ d_\yyy(f(x),y_i) + d_\yyy(y_i, \tilde{y}_i)+  d_\yyy ( \tilde y_i, \widetilde{\eta}_N(f_N(x)))\big\}
    \\
    \nonumber
    &\le
    (2 \epsilon_A) / 3 + \epsilon_Q / 3 + \sup_{x \in \xxx} d_\yyy ( \widetilde{\eta}_N(e_{i(x)}), \widetilde{\eta}_N(f_N(x)))
    \\
     \nonumber
    &\le
    (2 \epsilon_A) / 3 + \epsilon_Q + \sup_{x \in \xxx} C_{\eta} L_f  \mathcal W_p(\Pi_{\mathcal P_1(\mathcal{X}_N)}(x_{i(x)}), \iota_N^{-1} \circ f_N(x) )
    \\
    \nonumber
    &=
    (2 \epsilon_A) / 3 + \epsilon_Q + \sup_{x \in \xxx} C_{\eta} L_f  \mathcal W_p(\Pi_{\mathcal P_1(\mathcal{X}_N)}(x_{i(x)}), \Pi_{\mathcal P_1(\mathcal{X}_N)}(x))
    \\
        \label{PROOF_eq_thrm_measure_valued_quantitative_quantization_estimate_pt_2}
    &\le
    (2 \epsilon_A) / 3 + \epsilon_Q +  C_{\eta} L_f  C_\Pi \delta
    ,
\end{align}
where $i(x)$ satisfies $i(x) \in \arg \min_{1 \le i \le N} d_\xxx(x,x_i)$  and $b$ is the universal constant in \eqref{eq:lippr}.
Recalling the definition of $\delta$, see \eqref{PROOF_eq_thrm_measure_valued_quantitative_UAT__delta_DEF}, yields
\[
    \sup_{x \in \xxx} d_\yyy ( f(x), \widetilde{\eta}_N(f_N(x)) ) \le \epsilon_A + \epsilon_Q.
\]
Hence, $\widetilde{\eta}_N \circ f_N$ is an $(\epsilon_A + \epsilon_Q)$-approximation of $f$.
\hfill\\

\textbf{Step 5 -- Approximating $f_N$ by neural networks:} 
Fix an approximation error $\epsilon_{\mathcal{NN}}>0$ of the neural network. 
Consider either scenario:
\begin{itemize}
    \item \textbf{Scenario A (Trainable Activation Function: Singular-ReLU-Type)}: Under Assumption~\ref{defn_TrainableActivation_Singular}, Proposition~\ref{prop_universal_approximation_improved_rates} guarantees that there is  a network $\hat{f}_{\theta}\in \NN[[d]][\sigma]$ where, $d=(d_0,\dots,d_J)$, $d_0=d$, and $d_J=N$
    of depth and width as recorded in Table~\ref{tab_spacecomplexity_universalmetrictransformer} satisfying~\eqref{PROOF_eq_thrm_measure_valued_quantitative_UAT__NN_approximation} below.
    \item \textbf{Scenario B (Trainable Activation Function: Smooth-ReLU-Type)}: Under Assumption~\ref{defn_TrainableActivation_Smooth}, we may apply \citep[Proposition 58]{kratsios2021universal} to conclude that for each $i=1,\dots,N$ there is a feedforward network with activation function $\sigma^{\star}$ of width $d+N+2$ and of depth 
    $\mathcal{O}\left(
(1+\frac{d}{4})^{\frac{2d}{\alpha}}
\epsilon^{\frac{-2d}{\alpha}}
\right)
$
approximating each $\operatorname{pj}_i\circ f_N$.  Since $\sigma_{(1,1)}$ is the identity on $\rr$, then we may (mutatis mutandis) apply the parallelization construction of \citep[Proposition 5]{FlorianHighDimensional2021} to conclude that there is a network $\hat{f}_{\theta}\in \NN[[d]][\sigma]$ where $d=(d_0,\dots,d_J)$, $d_0=d$, $d_J=N$, and of depth and width as recorded in Table~\ref{tab_spacecomplexity_universalmetrictransformer}, such that~\eqref{PROOF_eq_thrm_measure_valued_quantitative_UAT__NN_approximation} below holds.
    \item \textbf{Scenario C (Classical Activation Function)}: Under Assumption~\ref{defn_TrainableActivation_ClassicalNonTrainable}, \citep[Theorem 3.2]{KidgerLyons2020} applies, whence, there exists a network $\hat{f}_{\theta}\in \NN[[d]][\sigma]$ where $d=(d_0,\dots,d_J)$, $d_0=d$, $d_J=N$, of width at most $d+N+2$ satisfying~\eqref{PROOF_eq_thrm_measure_valued_quantitative_UAT__NN_approximation} below.
\end{itemize}
In either case, we may choose $\hat{f}$ such that
\begin{equation}
    \sup_{x\in \xxx}\,
    \|
        f_N(x) - \hat{f}_{\theta}(x)
    \| \leq \frac{\epsilon_{\mathcal{NN}}}{C_{\eta}L_f \operatorname{diam}(\xxx)^\alpha \sqrt{N}}
    .
    \label{PROOF_eq_thrm_measure_valued_quantitative_UAT__NN_approximation}
\end{equation}
Since the projection $
\Pi_{\Delta_N} \colon (\RR^N, \lVert \cdot \rVert_2) \to (\Delta_N, \lVert \cdot \rVert_2)
$ onto the convex subset $\Delta_N \subseteq \RR^N$ is 1-Lipschitz, and noting that $\Pi_{\mathcal P_1(\mathcal{X}_N)}(x) = \iota_N^{-1} \circ f_N(x)$,
we may compute
\allowdisplaybreaks
\begin{align}
    \label{PROOF_eq_thrm_measure_valued_quantitative_UAT__NN_approximation_application_to_delta______BEGIN}
    \sup_{x\in \xxx}
     \,
    \mathcal W_1 
    (
        \iota_N^{-1} \circ \Pi_{\Delta_N} \circ \hat f_\theta(x)
            ,
        \iota_N^{-1} \circ f_N(x)
    )
=&
    \sup_{x\in \xxx}
    \,
    \mathcal W_1 
    (
        \iota_N^{-1} \circ \Pi_{\Delta_N} \circ \hat f_\theta(x)
            ,
        \iota_N^{-1} \circ \Pi_{\Delta_N} \circ f_N(x)
    )
\\
\nonumber
\leq &
    \sup_{x\in \xxx}
    \,
    \operatorname{diam}(\xxx)^\alpha \sqrt{N}
    \cdot
    \|
        \Pi_{\Delta_N} \circ \hat f_\theta(x)
            -
        \Pi_{\Delta_N} \circ f_N(x)
    \|
\\
\nonumber
\leq &
    \sup_{x\in \xxx}
     \,
    \operatorname{diam}(\xxx)^\alpha \sqrt{N}
    \cdot
    \|
        \hat f_\theta(x)
            -
        f_N(x)
    \|
\\
\leq &
    \frac{\epsilon_{\mathcal{NN}}}{C_{\eta} L_f}
    ,
    \label{PROOF_eq_thrm_measure_valued_quantitative_UAT__NN_approximation_application_to_delta______END}
\end{align}
where we applied \eqref{PROOF_eq_thrm_measure_valued_quantitative_TV_to_W1} and \eqref{PROOF_eq_thrm_measure_valued_quantitative_euclidean_to_TV} to obtain the first, and \eqref{PROOF_eq_thrm_measure_valued_quantitative_UAT__NN_approximation} to obtain the last inequality.

Now let $x \in \xxx$ and $i = i(x)$. Then, analogously to Step 4, \eqref{PROOF_eq_thrm_measure_valued_quantitative_23_covering} and \eqref{PROOF_eq_thrm_measure_valued_quantitative_UAT__quantization_estimate} give
\begin{align}
    \nonumber
    d_\yyy 
    (f(x)
        ,
        \widetilde{\eta}_N \circ \Pi_{\Delta_N} \circ \hat f_\theta(x)
    )
    &\le
    d_\yyy
    (f(x)
        ,
        \widetilde{\eta}_N(e_i))
    )
    +
    d_\yyy
    (
        \widetilde{\eta}_N(e_i)
        ,
        \widetilde{\eta}_N \circ \Pi_{\Delta_N} \circ \hat f_\theta(x)
    )
    \\
    \label{PROOF_eq_thrm_measure_valued_quantitative_UAT__NN_approx1}
    &\le
    (2 \epsilon_A) / 3 +  2\epsilon_Q / 3 
    +
    C_{\eta} L_f \mathcal W_1( \Pi_{\mathcal P_1(\mathcal{X}_N)}(x_i), \iota_N^{-1} \circ \Pi_{\Delta_N} \circ  \hat f_\theta(x)).
\end{align}
Note that in the last term we can use the estimate
\begin{align*}
    \mathcal W_1( \Pi_{\mathcal P_1(\mathcal{X}_N)}(x_i), \iota_N^{-1} \circ \Pi_{\Delta_N} \circ  \hat f_\theta(x))
    &\le
    \mathcal W_1 ( \Pi_{\mathcal P_1(\mathcal{X}_N)}(x_i), \Pi_{\mathcal P_1(\mathcal{X}_N)}(x) )
    +
    \mathcal W_1( \Pi_{\mathcal P_1(\mathcal{X}_N)}(x), \iota_N^{-1} \circ \Pi_{\Delta_N} \circ  \hat f_\theta(x))
    \\
    &\le
    C_\Pi \delta
    +
    \frac{\epsilon_{\mathcal{NN}}}{C_{\eta} L_f}.
\end{align*}
Finally, plugging this into \eqref{PROOF_eq_thrm_measure_valued_quantitative_UAT__NN_approx1} yields
\begin{align*}
    \sup_{x \in \xxx} d_\yyy (f(x), \widetilde{\eta}_N\circ \Pi_{\Delta_N} \circ \hat f_\theta(x))
    &\le
    (2\epsilon_A)/ 3 +  2\epsilon_Q / 3 + C_{\eta} L_f C_\Pi \delta + \epsilon_{\mathcal{NN}}
    \\
    & \leq
    \epsilon_A + \epsilon_Q + \epsilon_{\mathcal{NN}}.
\end{align*}
This completes the approximation. \hfill\\
\textbf{Step 6 -- Expressing $N$ and $Q$ using metric entropy:} 
It remains to derive simple expressions for $N$ and $Q$.  Together, \eqref{PROOF_eq_thrm_measure_valued_quantitative_projection_Lipschitzbound2},~\eqref{PROOF_eq_thrm_measure_valued_quantitative_UAT__Ndeltabound} and~\eqref{PROOF_eq_thrm_measure_valued_quantitative_UAT__delta_DEF} imply that
\begin{align}
\nonumber
    N
\leq &
    C_{(\xxx,d_{\xxx})}
    ^{
    \left\lceil
    \left(
        \log_2(\operatorname{diam}(\xxx))
            -
        \log_2(\delta)
    \right)/\alpha
    \right\rceil
    }
\\
\nonumber
= &
    C_{(\xxx,d_{\xxx})}
    ^{
    \left\lceil
    \left(
        \log_2(\operatorname{diam}(\xxx))
            -
        \log_2\left(
         \epsilon_A/\left(3L_f
                    C_{\eta}C_{\Pi}   
            \right)
        \right)
    \right)/\alpha
    \right\rceil
    }
\\
\nonumber
= &
    C_{(\xxx,d_{\xxx})}
    ^{
    \left\lceil
    \alpha^{-1}
    \left(
        \log_2(\operatorname{diam}(\xxx))
            -
        \log_2\left(
            \epsilon_A/(3L_f)
        \right)
        +
        \log_2\left(    
                (
                    C_{\eta}C_{\Pi}
                )    
            \right)
    \right)
    \right\rceil
    }
\\
\label{PROOF_expressing_N_using_metric_entropy}
= &
C_{(\xxx,d_{\xxx})}
    ^{
    \left\lceil
    \alpha^{-1}
    \left(
        \log_2(\operatorname{diam}(\xxx))
            -
        \log_2\left(
            \epsilon_A/(3L_f)
        \right)
        +
        \log_2\left(
                    C_{\eta}
                    2 c \lceil 1 / \alpha \rceil \cdot \log_2(C_{(\xxx, d_\xxx)})
            \right)
    \right)
    \right\rceil
    }
.
\end{align}
By \citep[Proposition 1.7 (i)]{Brue2021Extension}, we have the estimate
$
C_{(\xxx,d_{\xxx})}
    \leq 
\kappa_{\xxx}(5^{-1})$. Thus,~\eqref{PROOF_expressing_N_using_metric_entropy} can be bounded above as
\small
\[
    \ln(N)
\leq %
    \ln\left(\kappa_{\xxx}\left(5^{-1}\right)\right)
    {
    \left\lceil
    \alpha^{-1}
    \left(
        \log_2(\operatorname{diam}(\xxx))
            -
        \log_2\left(
            \epsilon_A/3L_f
        \right)
        +
        \log_2\left(
                    C_{\eta}
                    2 c \lceil 1 / \alpha \rceil \cdot \log_2\left(\kappa_{\xxx}\left(5^{-1}
                    \right)\right)
                 \right)
    \right)
    \right\rceil
    }
.
\]
\normalsize
This completes the simplified estimate on $N$.  

For $Q$, following~\eqref{PROOF_eq_thrm_measure_valued_quantitative_UAT__quantization} we have that $Q=\mathcal \mathcal{Q}_{f(\mathcal{X}_N)}(\epsilon_Q)$.  Moreover, by definition, since $\xxx_N\subseteq \xxx$, we have that 
\[
Q = \mathcal \mathcal{Q}_{f(\mathcal{X}_N)}(\epsilon_Q)
    \leq 
\mathcal \mathcal{Q}_{f(\xxx)}(\epsilon_Q).
\]
This completes the estimate on $Q$ as well as our proof.  
\end{proof}

\subsection{Proofs for the Dynamic Case}
\label{s_Proofs__ss_DynamicCase}

\begin{proof}[{Proof of Proposition~\ref{prop:HighProbConstainement}}]
Under the assumed condition on the $\mathbb{F}$-progressively measurable stochastic processes $(\alpha_t)_{t\ge 0}$ and $(\beta_t)_{t\ge 0}$, we can conclude from \citep[Theorem 2.2]{Nizar_OSC_STP_BSDE_2013} that
\begin{equation}
\label{eq:GrowthRate_SDE__NizarControl}
        \mathbb{E} \left[\underset{{t\in [0,t_n]}}{\operatorname{sup}}\,\, ||X_t||^2 \right]
    \le 
        C_n e^{C_n\,t_n}
\end{equation}
%\Bea{In particular, for $n=0$ $\mathbb{E}[\|X_0\|^2]\le C_0$, which does not give what we want in 0}.
holds for every $n\in \mathbb{N}_+$, for some $C_n>0$, where each $C_n$ depends only on the prescribed local-Lipschitz constant $L_n$, on $t_n$, and on the law of $X_0$.  For every $n\in \mathbb{N}_+$, let $\tilde{C}_n\ge C_n$ be chosen large enough such that the exponential sum $\sum_{n=1}^{\infty}\, e^{(C_n-\tilde{C}_n)\,n\,{\delta_+}}$ converges.
Now, we may apply Tonelli's theorem 
%to~\eqref{eq:GrowthRate_SDE__NizarControl} 
and deduce that
\allowdisplaybreaks
\begin{align}
\begin{split}
%    \nonumber
        \mathbb{E} \left[\sum_{n=1}^\infty \underset{{t\in [0,t_n]}}{\operatorname{sup}}\,\, ||X_t||^2 \frac{e^{-\tilde{C}_n\,t_n}}{C_n}\right]  
    = &
        \sum_{n=1}^\infty \mathbb{E} \left[\underset{{t\in [0,t_n]}}{\operatorname{sup}}\,\, ||X_t||^2\right] \frac{e^{-\tilde{C}_n\,t_n}}{C_n}
    \le 
        \sum_{n=1}^\infty C_n\, e^{\,C_n\,t_n}\,  \frac{e^{-\tilde{C}_n\,t_n}}{C_n}
    \\
    \label{eq:GrowthRate_SDE__MarkovSetup}
    =&  
        \sum_{n=1}^\infty \, e^{(C_n-\tilde{C}_n)\,t_n}
    \le   
        \sum_{n=1}^\infty \, e^{(C_n-\tilde{C}_n)\,n\,{\delta_-}}
    <
        \infty.
        \end{split}
\end{align}
Since~\eqref{eq:GrowthRate_SDE__MarkovSetup} implies that $\sum_{n=1}^\infty \underset{{t\in [0,t_n]}}{\operatorname{sup}}\,\, ||X_t||^2 \frac{e^{-( C_n-\tilde{C}_n)\,t_n}}{C_n}$ is a.s.\ finite, and since this quantity is non-negative, we may apply the Markov inequality to deduce that, for any given positive constant $\gamma$, the following concentration bound holds
\begin{equation}
\label{eq:GrowthRate_SDE__MarkovApplied}
    \mathbb{P}\left(
            \sum_{n=1}^\infty 
                \underset{{t\in [0,t_n]}}{\operatorname{sup}}\,\, ||X_t||^2 
                \frac{e^{ -\tilde{C}_n \,t_n}}{C_n}
            \ge 
                \gamma
        \right) 
    \le 
    \frac1{
        \gamma
    }\,
        \sum_{n=1}^\infty \, e^{(C_n-\tilde{C}_n)\,t_n}
    \le 
    \frac{1}{\gamma}\,
    \sum_{n=1}^{\infty}\, e^{( C_n-\tilde{C}_n)\,n\,\delta_- }
.
\end{equation}
In particular,~\eqref{eq:GrowthRate_SDE__MarkovApplied} implies the looser bound
\allowdisplaybreaks
\begin{align}
\label{eq:GrowthRate_SDE__MarkovApplied___looser}
\begin{split}
    \mathbb{P}\left(
                \sup_{n\in \mathbb{N}_+}\,
                    \frac{e^{- \tilde{C}_n \,t_n / 2}}{C_n^{1/2}} \, 
                    \underset{{t\in [0,t_n]}}{\operatorname{sup}}\,\, ||X_t|| 
            \ge 
                \gamma^\frac1{2}
                %C_n^{1/2}\,\gamma^{1/2}\, e^{-n \,C_n \,{\delta_+}}
        \right) 
    = &
        \mathbb{P}\left(
                \sup_{n\in \mathbb{N}_+}\,
                \frac{e^{-  \tilde{C}_n \, t_n}}{C_n}\,
                    \underset{{t\in [0,t_n]}}{\operatorname{sup}}\,\, ||X_t||^2 
            \ge 
                \gamma
                %C_n\,\gamma\, e^{-2\,n \,C_n \,{\delta_+}}
        \right) 
    % \\
    % \nonumber
    % \le & 
    %     \mathbb{P}\left(
    %             \sup_{n\in \mathbb{N}_+}\,
    %             \frac{e^{-  \tilde{C}_n  \, n\,{\delta_-}}}{C_n}\,
    %                 \underset{{t\in [0,t_n]}}{\operatorname{sup}}\,\, ||X_t||^2 
    %         \ge 
    %             \gamma
    %             %C_n\,\gamma\, e^{-2\,C_n \,t_n}
    %     \right) 
    \\
  %  \nonumber
    \le & 
        \mathbb{P}\left(
                \sum_{n=1}^{\infty}
                    \underset{{t\in [0,t_n]}}{\operatorname{sup}}\,\, ||X_t||^2\,
                    \frac{e^{ - \tilde{C}_n \, t_n }}{C_n}
            \ge 
                \gamma
        \right) 
    \\ 
  %  \nonumber
    \le &
        \frac{1}{\gamma}\,
            \sum_{n=1}^{\infty}\, e^{( C_n-\tilde{C}_n)\,n\,\delta_- }
    .
    \end{split}
\end{align}
Define the positive constant 
$C_{(\delta_-,\mathbb{L})} \eqdef  \biggl(\sum_{n=1}^{\infty}\, e^{(C_n-\tilde{C}_n) \,n\,\delta_-}\biggr)^{1/2}$, and note that 
$C_{(\delta_-,\mathbb{L})}$ 
only depends on the local-Lipschitz constant $\mathbb{L}$ and on the minimal time-grid spacing $\delta_-$ of the time-grid $\mathbb{T}$.
Define $\gamma  \eqdef  \frac1{\varepsilon}\, \sum_{n=1}^{\infty}\, e^{(C_n-\tilde{C}_n) n\,\delta_-}$ where $0<\eps \le 1$.  
Rearranging~\eqref{eq:GrowthRate_SDE__MarkovApplied___looser} yields 
\begin{equation}
\label{eq:GrowthRate_SDE__Concentration}
\begin{aligned}
    \mathbb{P}\left(
    (\forall n\in\mathbb{N}_+)\,
    \|X_{t_n}\|
    \le 
    \frac{C_{(\delta_-,\mathbb{L})}}{
    \varepsilon^{1/2}}\,
    C_n^{1/2}
    \, 
    e^{-C_n \,\delta_-  n/2 } % should here be a t_n?
    \right) 
    \ge &
    \mathbb{P}\left(
    \sup_{n\in \mathbb{N}_+}\, 
    \frac{e^{-\tilde{C}_n \,t_n/2}}
    {C_n^{1/2}}\,
    \underset{{t\in [0,t_n]}}{\operatorname{sup}}\,\, ||X_t||
    \le 
    \frac{C_{(\delta_-,\mathbb{L})}}{
    \varepsilon^{1/2}}
    \right) 
    \\
    \ge &   1   -  \eps
.
\end{aligned}
\end{equation}

$\bullet$ \textbf{Case $1$ - Deterministic $X_0$:} If $X_0$ is deterministic, i.e.\ $X_0=x\in \mathbb{R}^d$, then the concentration inequality in~\eqref{eq:GrowthRate_SDE__Concentration} implies that the discretized stochastic process $(X_{t_n})_{n\in \mathbb{N}}$ belongs to the compact subset $K_{\mathbb{C},C^{\star},\varepsilon}^{\operatorname{exp}} $ of the product space $(\mathbb{R}^d)^{\mathbb{N}}$ with probability at least $(1-\varepsilon)$, where $C^{\star}\eqdef 1$ and $C_0 \eqdef \|x\|$. 

$\bullet$ \textbf{Case $2$ - Sub-Gaussian $X_0$:} If $X_0$ is sub-Gaussian, then there are positive constants $c_0$ and $c_1$, both depending only on the law of $X_0$, such that
\begin{equation}
\label{eq:subGaussian__concentration}
        \mathbb{P}\left(
            \|X_0\| \le L_0
        \right)
    \ge 
       1 - c_0\, e^{c_1\, L_0^2}.
\end{equation}
By the independence of $X_0$ and $W_0$, we may combine the concentration inequalities~\eqref{eq:GrowthRate_SDE__Concentration} and~\eqref{eq:subGaussian__concentration} to conclude that the discretized stochastic process $(X_{t_n})_{n\in \mathbb{N}}$ belongs to the compact subset $K_{\mathbb{C},C^{\star},\varepsilon}^{\operatorname{exp}} $ of the product space $(\mathbb{R}^d)^{\mathbb{N}}$ with probability at least $C^{\star}\,(1-\varepsilon)$, where $C^{\star}\eqdef (1-c_0\, e^{c_1\, L_0^2})$ and $C_0\eqdef L_0$.  
\end{proof}
Next, before embarking on the proof of our main result, namely Theorem~\ref{thrm_main_DynamicCase}, we take a moment to explain the intuition behind it.  
The proof ultimately reduces to controlling the distance between the target causal map $F$ against an aptly chosen geometric hypertransformer model $\hat{F}$ defined on the same input and output spaces as $F$.  Our proof works by decomposing an upper-bound for the distance between $F$ and $\hat{F}$ into four terms, each of which is either controlled by our choice of  model $\hat{F}$ or by the regularity of the causal map $F$.

Let us briefly explain each of the four terms appearing in the proof.  
The first term, ~\eqref{PROOF_thrm_main__2_main_inequality_to_control___AC} within the proof, is given by $F$'s approximable complexity and reduces the approximation problem of an infinitely complicated causal map to a causal map of finite complexity $F^{\rho_{\epsilon},f_{\epsilon}}$ (implied by Definition~\ref{defn_compression_memory_property}).  
The second term,~\eqref{PROOF_thrm_main__2_main_inequality_to_control_Term_A_FINITE_TIME_HORIZON} within the proof, controls the error between our candidate  geometric hypertransformer $\hat{F}$ and the causal map of finite complexity $F^{\rho_{\epsilon},f_{\epsilon}}$ within a prespecified finite-time horizon $[-T,T]$.
The third term,~\eqref{PROOF_thrm_main__2_main_inequality_to_control_Term_B_Extrapolation_quantity} within the proof, constructs the extrapolation quality function $c$ in Table~\ref{tab_extrapolation_function} to control the deviation of $F^{\rho_{\epsilon},f_{\epsilon}}$ outside the time window $[-T,T]$ from the values it takes within the time window $[-T,T]$.
Similarly, the fourth and last term,~\eqref{PROOF_thrm_main__2_main_inequality_to_control_Term_C_Ergodicity_Type_Term} within the proof, controls the deviation of our candidate geometric hypertransformer model $\hat{F}$ outside, from its behaviour within, the time window $[-T,T]$.

\begin{proof}[{Proof of Theorem~\ref{thrm_main_DynamicCase}}]

\textbf{Step 1 - Decomposing Approximation Error:}\hfill\\
Since $\kkk\subseteq (\rr^d)^{\zz}$ is compact and $F$ has approximable complexity, then
there exists a function $c_{AC}:\zz\times(0,\infty)\rightarrow [1,\infty)$ such that, 
for every $\epsilon >0$, there exist $f_{\epsilon}\in C^{\alpha}(\rr\times \rr^{d m(\epsilon)},\rr^{L(\epsilon)})$
and 
$\rho_{\epsilon}\in C^{\alpha}(\rr^{L(\epsilon)},\yyy)$ as in Definition~\ref{defn_compression_memory_property}, satisfying
\begin{equation}
    \sup_{n\in \zz}
    \sup_{\xb\in\kkk}
    \,
    \frac{
d_\yyy\left(
        F(\xb)_{t_n}
    ,
        F^{\rho_{\epsilon},f_{\epsilon}}(\xb)_{t_n}
\right)
    }{
    c_{AC}(n,\epsilon)
    }
    < \frac{\epsilon}{4}
    \label{PROOF_thrm_main__0_PMP_definition}
    .
\end{equation}
\textit{With an abuse of notation, we opt to write $f,\rho,m,L$ depending on $\epsilon$ rather than $\epsilon/4$ throughout the proof to ease the reading.}

For any other causal map $\hat{F}:\xxx^{\zz}\rightarrow \yyy^\zz$ and any function $c=c_{\epsilon}:\zz\rightarrow [1,\infty)$,
by letting 
 \[
 c'(n,\epsilon) \eqdef \max\{c_{AC}(n,\epsilon/4),c(n)\},
 \]
the following estimate holds:
\begin{align}
\nonumber
    \sup_{n\in \zz}\sup_{\xb\in\kkk} 
    \,
    \frac{
d_\yyy\left(
        F(\xb)_{t_n}
    ,
        \hat{F}(\xb)_{t_n}
\right)
    }{
 c'(n,\epsilon)
    }
\leq &
\sup_{n\in \zz}\sup_{\xb\in\kkk}
 \left(   \frac{
d_\yyy\left(
        F(\xb)_{t_n}
    ,
        F^{\rho_{\epsilon},f_{\epsilon}}(\xb)_{t_n}
\right)
    }{
  c'(n,\epsilon)
    }
+
    \frac{
d_\yyy\left(
        F^{\rho_{\epsilon},f_{\epsilon}}(\xb)_{t_n}
            ,
        \hat{F}(\xb)_{t_n}
\right)
    }{
   c'(n,\epsilon)
    }
    \right)
\\
\leq &
 \frac{\epsilon}{4}
+
\sup_{n\in \zz}\sup_{\xb\in\kkk}
    \frac{
d_\yyy\left(
        F^{\rho_{\epsilon},f_{\epsilon}}(\xb)_{t_n}
            ,
        \hat{F}(\xb)_{t_n}
\right)
    }{
    c'(n,\epsilon)
    }
    \label{PROOF_thrm_main__1_triangleInequality_Setup}
    .
\end{align}
Therefore,
\begin{align}
\label{PROOF_thrm_main__2_main_inequality_to_control___AC}
\tag{FC. Approx}
    & \sup_{n\in \zz}\sup_{\xb\in\kkk} 
    \,
    \frac{
d_\yyy\left(
        F(\xb)_{t_n}
    ,
        \hat{F}(\xb)_{t_n}
\right)
    }{
   c'(n,\epsilon)
    }
\leq 
 \frac{\epsilon}{4} + \\
\label{PROOF_thrm_main__2_main_inequality_to_control_Term_A_FINITE_TIME_HORIZON}
\tag{Window}
&
\max_{|n|\leq N_T}\sup_{\xb\in\kkk}
    \frac{
d_\yyy\left(
        F^{\rho_{\epsilon},f_{\epsilon}}(\xb)_{t_n}
            ,
        \hat{F}(\xb)_{t_n}
\right)
    }{
    c'(n,\epsilon)
    }
\\
\label{PROOF_thrm_main__2_main_inequality_to_control_Term_B_Extrapolation_quantity}
\tag{Growth F.A}
& +
\sup_{n > N_T}\sup_{\xb\in\kkk}
    \frac{
d_\yyy\left(
        F^{\rho_{\epsilon},f_{\epsilon}}(\xb)_{t_n}
            ,
        \hat{F}(\xb)_{t_{N_T}}
\right)
    }{
   c'(n,\epsilon)
    }
+
    \sup_{n <- N_T}\sup_{\xb\in\kkk}
    \frac{
d_\yyy\left(
        F^{\rho_{\epsilon},f_{\epsilon}}(\xb)_{t_n}
            ,
        \hat{F}(\xb)_{t_{-N_T}}
\right)
    }{
   c'(n,\epsilon)
    }
\\
\label{PROOF_thrm_main__2_main_inequality_to_control_Term_C_Ergodicity_Type_Term}
\tag{Growth $\hat{F}$}
& +
\sup_{n> N_T}\sup_{\xb\in\kkk}
    \frac{
d_\yyy\left(
        \hat{F}(\xb)_{t_n}
            ,
        \hat{F}(\xb)_{t_{N_T}}
\right)
    }{
     c'(n,\epsilon)
    }
+
\sup_{n< -N_T}\sup_{\xb\in\kkk}
    \frac{
d_\yyy\left(
        \hat{F}(\xb)_{t_n}
            ,
        \hat{F}(\xb)_{t_{-N_T}}
\right)
    }{
     c'(n,\epsilon)
    },
\end{align}
for $N_T$ defined in \eqref{def:GHT}.
The remainder of the proof is devoted to controlling terms~\eqref{PROOF_thrm_main__2_main_inequality_to_control_Term_A_FINITE_TIME_HORIZON}, \eqref{PROOF_thrm_main__2_main_inequality_to_control_Term_B_Extrapolation_quantity} and \eqref{PROOF_thrm_main__2_main_inequality_to_control_Term_C_Ergodicity_Type_Term} for a particular systems $\hat{F}$, implemented by a recurrent probabilistic transformer model that will be specified later.
We start by noticing that, by the triangle inequality,
\begin{align}
\label{PROOF_thrm_main__2_main_inequality_to_control_Term_B_Extrapolation_quantity___plut_Term_A}
    \sup_{n > N_T}\sup_{\xb\in\kkk}
    \frac{
d_\yyy\left(
        F^{\rho_{\epsilon},f_{\epsilon}}(\xb)_{t_n}
            ,
        \hat{F}(\xb)_{t_{N_T}}
\right)
    }{
     c'(n,\epsilon)
    }
\leq &\,
\sup_{\xb\in\kkk}\,
        \frac{
    d_\yyy\left(
            F^{\rho_{\epsilon},f_{\epsilon}}(\xb)_{t_{N_T}}
                ,
            \hat{F}(\xb)_{t_{N_T}}
    \right)
        }{
         c'(n,\epsilon)
        }
\\
\label{PROOF_thrm_main__2_main_inequality_to_control_Term_B_Extrapolation_quantity___pure_term_B}
\tag{Growth F.B}
+ &
    \sup_{n > N_T}
    \sup_{\xb\in\kkk}\,
        \frac{
    d_\yyy\left(
            F^{\rho_{\epsilon},f_{\epsilon}}(\xb)_{t_n}
                ,
            F^{\rho_{\epsilon},f_{\epsilon}}(\xb)_{t_{N_T}}
    \right)
        }{
         c'(n,\epsilon)
        }
.
\end{align}
And mutatis mundanes for the case where $n<-N_T$.
Since the term on the RHS of \eqref{PROOF_thrm_main__2_main_inequality_to_control_Term_B_Extrapolation_quantity___plut_Term_A} is 
bounded by \eqref{PROOF_thrm_main__2_main_inequality_to_control_Term_A_FINITE_TIME_HORIZON}, then the control of the latter (see Step~2) together with the control of~\eqref{PROOF_thrm_main__2_main_inequality_to_control_Term_B_Extrapolation_quantity___pure_term_B}, will imply the control of~\eqref{PROOF_thrm_main__2_main_inequality_to_control_Term_B_Extrapolation_quantity} (see Step~4).  
For the particular system $\hat{F}$ specified in Step~2, we will control the terms \eqref{PROOF_thrm_main__2_main_inequality_to_control_Term_A_FINITE_TIME_HORIZON} and \eqref{PROOF_thrm_main__2_main_inequality_to_control_Term_B_Extrapolation_quantity___pure_term_B} each by $\epsilon/4$, and show that \eqref{PROOF_thrm_main__2_main_inequality_to_control_Term_C_Ergodicity_Type_Term} vanishes. This will conclude the proof.

 From this is clear that, in the statement of Theorem~\ref{thrm_main_DynamicCase}, we could separately specify the ``AC approximation error''
 (dominating \eqref{PROOF_thrm_main__2_main_inequality_to_control___AC}), the ``approximation error within the time window $[-T,T]$'' (dominating \eqref{PROOF_thrm_main__2_main_inequality_to_control_Term_A_FINITE_TIME_HORIZON}), the ``extrapolation error'' (dominating \eqref{PROOF_thrm_main__2_main_inequality_to_control_Term_B_Extrapolation_quantity}), and the ``growth error''  (dominating \eqref{PROOF_thrm_main__2_main_inequality_to_control_Term_C_Ergodicity_Type_Term}), so that the total error is bounded by their sum.

\textbf{Step 2 - Construction of Model and Control of the finite-time horizon term ~\eqref{PROOF_thrm_main__2_main_inequality_to_control_Term_A_FINITE_TIME_HORIZON}:}\hfill\\
We first control the term in ~\eqref{PROOF_thrm_main__2_main_inequality_to_control_Term_A_FINITE_TIME_HORIZON}.  
For integers $m,n \in \Z$ with $m < n$ and $\xb \in \kkk$ we use the shorthand notation $x_{t_m:t_n}$ to refer to the vector $(x_{t_m}, x_{t_{m + 1}}, \ldots, x_{t_n})$.
For each $n\in \zz$ with $|n|\leq N_T$, the continuity of the projection maps
$\operatorname{pj}_n:(\rr^d)^{\zz}\ni (x_{t_n})_{n\in \zz}\mapsto x_{t_n}\in \rr^d$ implies that, for every $\tilde{n}\le n\in\zz$, the subset $K_{t_{\tilde{n}}:t_n} \eqdef \prod_{i=\tilde{n}}^n\,\operatorname{pj}_i(\kkk)$ is non-empty and compact in $\rr^d$.  
Therefore, for  
\[
\epsilon_{A,1} \eqdef \min_{n=-N_T,\dots,N_T}\;\frac{1}{C_{K_{t_n-dm(\epsilon):t_n}}}
           \left(\frac{\epsilon}{8  L_{\alpha,\rho_{\epsilon}}}\right)^{1/\alpha},
\]
where $C_{K_{t_n-dm(\epsilon/4):t_n}}$ is defined as in~\eqref{def:ck}, and for every $n\in \zz$ with $|n|\leq N_T$,
Proposition~\ref{prop_universal_approximation_improved_rates} implies that there exist a multi-index $[d^{(n)}]$
and a neural network $\hat{f}_{\theta_n}:\rr^{d\, m(\epsilon)}\rightarrow \rr^{L(\epsilon)}$ satisfying
\begin{equation}
    \sup_{\xb\in K_{
                t_n-dm(\epsilon):t_n
            }}\,
    \left\|
    f_{\epsilon}(t_n,x_{t_n}) 
        -
    \hat{f}_{\theta_n}(x_{t_n})
    \right\|
    \leq 
        C_{K_{
            t_n-dm(\epsilon):t_n
        }}\epsilon_{A,1}
    \leq 
        \left(\frac{\epsilon}{8  L_{\alpha,\rho_{\epsilon/4}}}\right)^{1/\alpha},
    \label{PROOF_thrm_main__3_1_first_estimate_control_of_f____Finite_horizon_term}
\end{equation}
where the complexity of $\hat{f}$ is recorded in Table~\ref{tab_spacecomplexity_universalFFNN}.  
Further, we may assume that $\{\theta_n\}_{-T\leq t_n\leq T}$ are all distinct. Indeed, let us first consider the case of a singular trainable activation function as in Definition~\ref{defn_TrainableActivation_Singular}.  
With the notation of~\eqref{eq_hypernetwork_association}, by setting $\theta_n' \eqdef \left(
\theta_n,(I_{L(\epsilon)},b_n',\alpha')
\right)$, with $b_n'\in \rr^{L(\epsilon)}$ such that $
\|b_n'\|< \left(\frac{\epsilon}{8  L_{\alpha,\rho_{\epsilon/4}}}\right)^{1/\alpha}$and $b_{-N_T}',\dots,b_{N_T}'$ all distinct, and with $\alpha'_i = (0,1)$ for each $i=1,\dots,L(\epsilon)$, then the $\{\theta_n'\}_{-T\leq t_n\leq T}$ are also all distinct and the estimate in \eqref{PROOF_thrm_main__3_1_first_estimate_control_of_f____Finite_horizon_term} holds up to a factor of $2$.
In other words, we add an additional layer of depth which approximates the identity on $\rr^{L(\epsilon)}$, whilst making 
all $\theta_n$'s distinct for $|n|\leq N_T$.  
In the case of $\sigma$ as in Definition~\ref{defn_TrainableActivation_Smooth} or as in Definition~\ref{defn_TrainableActivation_ClassicalNonTrainable}, by the continuity of $\sigma$, we may pick $\theta_n'$ arbitrarily close to $\theta_n$ such that the $\theta_{-N_T},\dots,\theta_{N_T}$ are all distinct.

Since $\sigma_{(1,1)}(x)=x$ for all $x\in \rr$, then, without loss of generality, we may assume that $d_{J_{(n)}}=d_{J_{(m)}}$ for all $n,m=-N_T,\dots,N_T$, by adding identity layers in~\eqref{eq_definition_ffNNrepresentation_function}.  Similarly, by adding $0$ rows to the matrices $A^{(j)}$ and the vectors $b^{(j)}$ in~\eqref{eq_definition_ffNNrepresentation_function}, we may without loss of generality assume that $d_j^{(n)}=d_j^{(0)}$ for all $j=1,\dots,J_{(n)}$ and $n=-N_T,\dots,N_T$.  Consequently, $P([d^{(n)}])=P([d^{(0)}])$ for all $-N_T,\dots,N_T$.

Now, let $\mathcal{K}\subset\rr^{L(\epsilon)}$ be defined by
\[
\mathcal{K}
     \eqdef  
        \bigcup_{n=-N_T}^{N_T} 
\left\{
z\in \rr^{L(\epsilon)}:\, \|z-f(t_n,K_{t_n-dm(\epsilon):t_n})\|\leq \epsilon_{A,1}
\right\}
.
\]
Since each $f(t_n,\cdot)$ is continuous and each $K_{t-dm(\epsilon):t}$ is compact, then, for each $n=-N_T,\dots,N_T$, the sets $\left\{
z\in \rr^{L(\epsilon)}:\, \|z-f(t_n,K_{t-dm(\epsilon):t})\|\leq \epsilon_{A,1}
\right\}$ are compact. Moreover, as the union of finitely many compacts is again compact, then $\mathcal{K}\subseteq \rr^{L(\epsilon)}$ is compact as well.  
Therefore, we may apply Theorem~\ref{thrm_main_StaticCase} to conclude the existence of a network $\hat{\rho}:\mathcal{K}\rightarrow \mathcal{P}_1(\mathbb{R}^m)$ with representation
\[
\hat{\rho}(\cdot) 
    = 
\operatorname{attention}_{N,q}(\hat{r}_{\theta}(\cdot),Y),
\]
where $Y$ is an $N\times q\times m$-array, $\hat{r}\in \NN[\cdot]$ maps from $\rr^d$ to $\rr^N$, and estimates for the number of parameters defining both are given in Table~\ref{tab_spacecomplexity_universalmetrictransformer}, and such that $\hat{\rho}$ satisfies the estimate
\begin{equation}
    \sup_{\lambda\in \mathcal{K}}\,
    d_{\yyy}
    \left(
            \rho_{\epsilon}(\lambda)
        ,
            \hat{\rho}(\lambda)
    \right)
        \leq   
    \frac{\epsilon}{8}
    \label{PROOF_thrm_main__3_2_first_estimate_control_of_rho____Finite_horizon_term}
        .
\end{equation}

Thus, together with \eqref{PROOF_thrm_main__3_1_first_estimate_control_of_f____Finite_horizon_term}, \eqref{PROOF_thrm_main__3_2_first_estimate_control_of_rho____Finite_horizon_term} and the monotonicity of the modulus of continuity $\omega_{\rho_{\epsilon}}$, we obtain the following estimate for every $n\in \nn_+$ with $|n|\leq N_T$ and any $\xb\in K_{t_n-dm(\epsilon):t_n}$:
\begin{align}
\begin{split}
\label{eq_bound_TERM_A_BEGIN}
 &       d_{\yyy}
    \left(
            \rho_{\epsilon}\circ f_{\epsilon}
                (t_n,x_{t_n-dm(\epsilon):t_n})
        ,
            \hat{\rho}\circ \hat{f}_{\theta_{n}}
                (x_{t_n-dm(\epsilon):t_n})
    \right)\\
  &  \leq 
        d_{\yyy}
    \left(
            \rho_{\epsilon}\circ f_{\epsilon}
                (t_n,x_{t_n-dm(\epsilon):t_n})
        ,
            \rho_{\epsilon}\circ \hat{f}_{\theta_{n}}
                (x_{t_n-dm(\epsilon):t_n})
    \right)+
        d_{\yyy}
    \left(
            \rho_{\epsilon}\circ \hat{f}_{\theta_{n}}
                (x_{t_n-dm(\epsilon):t_n})
        ,
            \hat{\rho}\circ \hat{f}_{\theta_{n}}
                (x_{t_n-dm(\epsilon):t_n})
    \right)
        \\
&    \leq 
    \omega_{\rho_{\epsilon}}\left(
    \left\|
        f_{\epsilon}
                (t_n,x_{t_n-dm(\epsilon):t_n})
        -
            \hat{f}_{\theta_{n}}
                (x_{t_n-dm(\epsilon):t_n})
    \right\|
    \right)
         +
        d_{\yyy}
    \left(
            \rho_{\epsilon}\circ \hat{f}_{\theta_{n}}
                (x_{t_n-dm(\epsilon):t_n})
        ,
            \hat{\rho}\circ \hat{f}_{\theta_{n}}
                (x_{t_n-dm(\epsilon):t_n})
    \right)
    \\
    & \leq 
    \omega_{\rho_{\epsilon}}{\left(\left(\frac{\epsilon}{8  L_{\alpha,\rho_{\epsilon/4}}}\right)^{1/\alpha}
    \right)}
     +
    d_{\yyy}\left(
            \rho_{\epsilon}\circ \hat{f}_{\theta_{n}}
                (x_{t_n-dm(\epsilon):t_n})
        ,
            \hat{\rho}\circ \hat{f}_{\theta_{n}}
                (x_{t_n-dm(\epsilon):t_n})
    \right)
\\
&    \leq 
    \frac{\epsilon}{8}+\frac{\epsilon}{8}=\frac{\epsilon}{4}
    .
\end{split}
\end{align}

Next, we build the hypernetwork $h$ which implements our transformer network's recursive structure.  Let $P \eqdef  P([d^{(0)}])$ and, for each $n=-N_T,\dots,N_T$, define
\begin{equation}
    \tilde{\theta}_{n}  \eqdef 
    a
    \,
    \theta_n-b
    \label{PROOF_thrm_main__4_hypernetwork_construction_eq_data_to_memorize}
    ,
\end{equation}
where $a>0$ and $b\in \rr^{P}$ are such that $\tilde{\theta}_{n}\in [0,1]^P$ for each $n=-N_T,\dots,N_T$, and where $\{\theta_n\}_{n=-N_T}^{N_T}$ are as in~\eqref{PROOF_thrm_main__3_1_first_estimate_control_of_f____Finite_horizon_term}.
We may therefore apply \citep[Theorem 3.1 (ii)]{YunSraJadbabaie2019GoodMemoryCapacity__Memorization} to conclude that there is a feedforward neural network $\tilde{h}\in \NN[[P,M,M,P]][\operatorname{ReLU}]$ such that, for each $|n|\leq N_T$,
\[
\tilde{h}
(\tilde{\theta}_{n})
    = 
\tilde{\theta}_{{n+1}},
\]
and where $M$ is the smallest positive integer satisfying
\[
2
\left\lfloor
\frac{M}{2}
\right\rfloor
\left\lfloor
\frac{M}{4P}
\right\rfloor
    \geq 
N_T
.
\]
Define 
$h(z) \eqdef  a^{-1} (\tilde{h}(az-b) + b)$.
Then, we extend $\{\theta_n\}_{n=-N_T}^{N_T}$ to an infinite sequence $(\theta_n)_{n\in \zz}$ by%
\[
\theta_n \eqdef \theta_{-N_T} \; \mbox{ for } n< -N_T
    \quad\mbox{ and }\quad
\theta_{n}
     \eqdef 
    \theta_{N_T} \; \mbox{ for } n> N_T
.
\]
We henceforth set $\hat{F} 
     \eqdef 
F^{(\hat{\rho},h,
        \theta_{-N_T},N_T
)}$.
By construction, then, \eqref{eq_bound_TERM_A_BEGIN} and the fact that $c(n)=1$ for $|n|\leq N_T$ imply that
\[
\max_{|n|\leq N_T}\sup_{\xb\in\kkk}
    \frac{
d_\yyy\left(
        F^{\rho_{\epsilon},f_{\epsilon}}(\xb)_{t_n}
            ,
        \hat{F}(\xb)_{t_n}
\right)
    }{
     c'(n,\epsilon)
    }
    \leq 
    \max_{|n|\leq N_T}\sup_{\xb\in\kkk}
d_\yyy\left(
        F^{\rho_{\epsilon},f_{\epsilon}}(\xb)_{t_n}
            ,
        \hat{F}(\xb)_{t_n}
\right)
\leq \frac{\epsilon}{4}
.
\]

\textbf{Step 3 - Control of the RGT decay term ~\eqref{PROOF_thrm_main__2_main_inequality_to_control_Term_C_Ergodicity_Type_Term}:}\hfill\\

Observe that $H(t_n,\theta_n)=\theta_{-N_{T}}$ for all $n\leq -N_T$, 
and $H(t_n,\theta_n)=\theta_{N_{T}}$ for all $n\geq N_T$.  
Moreover, since we have defined $\hat{F}$, we may define the map $c^{\hat{F}}_{\kkk,N_T, 8/\eps }(n)$ as in~\eqref{eq:self_compression_function}.  
Therefore, as long as $c:\mathbb{Z}\rightarrow [1,\infty)$ satisfies the following property for every $|n|>N_T$:
\begin{equation}
\label{eq:proof__compressionfunction_growthcontrol}
        c(n)
    \ge 
        % \frac{8}{\epsilon}
        % \,
        c^{\hat{F}}_{\kkk,N_T, 8/\eps  }(n)
,
\end{equation}
then we can control term~\eqref{PROOF_thrm_main__2_main_inequality_to_control_Term_C_Ergodicity_Type_Term}.  This is because, writing out $c^{\hat{F}}_{\kkk,N_T, 8/\eps }$ we obtain
\[
    c^{\hat{F}}_{\kkk,N_T,8/\eps}(n)
    =
        \max\biggl\{1,
            \sup_{x\in \kkk}\,
                \frac{8}{\eps}\,
                d_{\yyy}\big(
                    \hat{F}(x)_{t_n}
                ,
                    \hat{F}(x)_{t_{N_T}}
                \big)
                I_{\{n\ge N_T\}}
                +
                 \frac{8}{\eps} \,
                d_{\yyy}\big(
                    \hat{F}(x)_{t_n}
                ,
                    \hat{F}(x)_{t_{-N_T}}
                \big)
                I_{\{n\le -N_T\}}
            \biggr\}
    .
\]
Thus, we may now control term~\eqref{PROOF_thrm_main__2_main_inequality_to_control_Term_C_Ergodicity_Type_Term} by
\allowdisplaybreaks
\begin{align}
    \nonumber
    &
    \sup_{n> N_T}\sup_{\xb\in \kkk}
        \frac{
    d_\yyy\left(
            \hat{F}(\xb)_{t_n}
                ,
            \hat{F}(\xb)_{t_{N_T}}
    \right)
        }{
         c'(n,\epsilon)
        }
    +
    \sup_{n< -N_T}\sup_{\xb\in \kkk}
        \frac{
    d_\yyy\left(
            \hat{F}(\xb)_{t_n}
                ,
            \hat{F}(\xb)_{t_{-N_T}}
    \right)
        }{
          c'(n,\epsilon)
        }
    \\
\label{PROOF_thrm_main__4__DONE__second_estimate_control___DONE}
    \le &
    % \frac{\eps}{8}\,
    \sup_{n> N_T}\sup_{\xb\in \kkk}
        \frac{
            d_\yyy\left(
                    \hat{F}(\xb)_{t_n}
                        ,
                    \hat{F}(\xb)_{t_{N_T}}
            \right)
        }{
         c^{\hat{F}}_{\kkk,N_T, 8/\eps }(n)
        }
    +
    % \frac{\eps}{8}\,
    \sup_{n< -N_T}\sup_{\xb\in \kkk}
        \frac{
            d_\yyy\left(
                    \hat{F}(\xb)_{t_n}
                        ,
                    \hat{F}(\xb)_{t_{-N_T}}
            \right)
        }{
          c^{\hat{F}}_{\kkk,N_T, 8/\eps }(n)
        }
    \\
    = &
    \nonumber
    \frac{\eps}{8}\,
    \sup_{n> N_T}\sup_{\xb\in \kkk}
        \frac{
            d_\yyy\left(
                    \hat{F}(\xb)_{t_n}
                        ,
                    \hat{F}(\xb)_{t_{N_T}}
            \right)
        }{
         \max\big\{
                1
            ,
                d_\yyy\left(
                    \hat{F}(\xb)_{t_n}
                        ,
                    \hat{F}(\xb)_{t_{N_T}}
                \right)
            \big\}
        }
    +
    \frac{\eps}{8}\,
    \sup_{n< -N_T}\sup_{\xb\in \kkk}
        \frac{
            d_\yyy\left(
                    \hat{F}(\xb)_{t_n}
                        ,
                    \hat{F}(\xb)_{t_{-N_T}}
            \right)
        }{
          \max\big\{
                    1
                ,
                    d_\yyy\left(
                        \hat{F}(\xb)_{t_n}
                        ,
                        \hat{F}(\xb)_{t_{-N_T}}
                    \right)
            \big\}
        }
    \\
    \nonumber
    \le & \frac{\epsilon}{4}
    .
\end{align}
\textbf{Step 4 - Control of the infinite-time horizon term~\eqref{PROOF_thrm_main__2_main_inequality_to_control_Term_B_Extrapolation_quantity}:}\hfill\\
It remains to control the terms in~\eqref{PROOF_thrm_main__2_main_inequality_to_control_Term_B_Extrapolation_quantity} by controlling~\eqref{PROOF_thrm_main__2_main_inequality_to_control_Term_B_Extrapolation_quantity___pure_term_B}.
This part is divided in different cases, as depending on the particular form of $\kkk$ we get a different compression rate $c$.
For notational simplicity, we may now define the compression rate $c$ by
\[
    c(n)
        \eqdef 
    \max\biggl\{
        % From Step 3
        c^{\hat{F}}_{\kkk,N_T, 8/\eps }
        % \frac{8}{\epsilon}
        % \,
        % c^{\hat{F}}_{\kkk,N_T}(n)
    ,
        % From Step 4 - Case by Case
        \tilde{c}(n)
    \biggr\}
,
\]
for $n\in \mathbb{Z}$, where $\tilde{c}$ is defined on a case-by-case basis in the following Cases ($1$-$5$).

\textit{Case} $1$:
Let $w:\mathbb{Z}\rightarrow [0,\infty)$ be a weighting function, $K\subseteq \mathbb{R}^d$, and consider the case where $\kkk$ is
\[
        \kkk
    =
        K^{w} 
    \eqdef  
        \big\{
        \xb\in {\xxx}^{\mathbb{Z}}:\, \exists y \in K \mbox{ s.t.\ }
        \|x_{t_i}-y\|\le w(|i|)
        \big\}
    .
\]
Since $F$ is an AC map, $f_{\epsilon}$ and $\rho_{\epsilon}$ are uniformly continuous with respective moduli of continuity $\omega_{f_{\epsilon}}(u)=L_{\alpha,f_{\epsilon}}|u|^{\alpha}$ and $\omega_{\rho_{\epsilon}}(u)=L_{\alpha,\rho_{\epsilon}}|u|^{\alpha}$ for some $L_{\alpha,f_{\epsilon}},L_{\alpha,\rho_{\epsilon}}\geq 0$ and some $\alpha \in (0,1]$.  Thus, for every $n,n'\in \zz$ with $n<n'$ and every $\xb\in \kkk$, we compute the estimate
\begin{equation}
\label{eq:PROOF_thrm_main__Step4_Case0_first_inequality__setup_definitionKw}
\begin{aligned}
d_{\yyy}\left(
F^{\rho_\epsilon,f_\epsilon}(\xb)_{t_n}
,
F^{\rho_\epsilon,f_\epsilon}(\xb)_{t_{n'}}
\right)
& = 
d_{\yyy}\left(
\rho_{\epsilon}\circ f_{\epsilon}(t_n,x_{t_{n-dm(\epsilon)}:t_n})
,
\rho_{\epsilon}\circ f_{\epsilon}(t_{n'},x_{t_{n'-dm(\epsilon)}:t_{n'}})
\right)
\\
& \le 
\omega_{\rho_{\epsilon}}\circ \omega_{f_{\epsilon}}\big(
|t_n - t_{n'}|
+
\|
x_{t_{n-dm(\epsilon)}:t_n} 
-
x_{t_{n'-dm(\epsilon)}:t_{n'}}
\|
\big)
\\
& \le 
    \omega_{\rho_{\epsilon}}\circ \omega_{f_{\epsilon}}\biggl(
            |n-n'|\delta_{+}
        +
            dm(\epsilon)\, 
            \max_{i=1,\dots,dm(\epsilon)}\,
            \|
                    [x_{t_{n-dm(\epsilon)}:t_n}]_i
                -
                    [x_{t_{n'-dm(\epsilon)}:t_{n'}}]_i
            \|
    \biggr)
,
\end{aligned}
\end{equation}
where $[\cdot]_i$ denotes the $i^{th}$ canonical projection onto the Cartesian product $(\mathbb{R}^d)^{m(\epsilon)}\rightarrow \mathbb{R}^d$.  
Since each $[x_{t_{n-dm(\epsilon)}:t_n}]_i$ (resp.\ $[x_{t_{n'-dm(\epsilon)}:t_{n'}}]_i$) belongs to $K^w$, there is some $y_{i,n}$ (resp.\ $y_{i,n'}$) in $K$ satisfying the bound $\|[x_{t_{n-dm(\epsilon)}:t_{n}}]_i-y_{i,n}\|\le w(|n|)$ (resp.\ $\|[x_{t_{n'-dm(\epsilon)}:t_{n'}}]_i-y_{i,n'}\|\le w(|n'|)$).  
Therefore, the right-hand side of~\eqref{eq:PROOF_thrm_main__Step4_Case0_first_inequality__setup_definitionKw} can be bounded from above as
\begin{equation}
\label{eq:PROOF_thrm_main__Step4_Case0_first_inequality__CompletedRHSBound}
\begin{aligned}
    d_{\yyy}\left(
        F^{\rho_\epsilon,f_\epsilon}(\xb)_{t_n}
            ,
        F^{\rho_\epsilon,f_\epsilon}(\xb)_{t_{n'}}
    \right)
& \le 
    \omega_{\rho_{\epsilon}}\circ \omega_{f_{\epsilon}}\biggl(
            |n-n'|\delta_{+}
        + 
            dm(\epsilon)\, 
            \max_{i=1,\dots,dm(\epsilon)}\,
            \Big(
                \|
                        [x_{t_{n-dm(\epsilon)}:t_n}]_i
                    -
                        y_{i,n}
                \|
        \\ \qquad\qquad &+
                \|
                        y_{i,n}
                    -
                        y_{i,n'}
                \|
        %\\ \qquad\qquad&
        +
                \|
                        [x_{t_{n'-dm(\epsilon)}:t_{n'}}]_i
                    -
                        y_{i,n'}
                \|
            \Big)
    \biggr)
\\
& \le 
    \omega_{\rho_{\epsilon}}\circ \omega_{f_{\epsilon}}\biggl(
            |n-n'|\delta_{+}
        + 
            dm(\epsilon)\, 
            \Big(
                w(|n|) 
            +
                \operatorname{diam}(K) 
            +
                w(|n'|)
            \Big)
    \biggr)
.
\end{aligned}
\end{equation}
Define
\[
    \tilde{c}
    :\zz \ni n 
        \mapsto
    \frac{4}{\epsilon}
        \omega_{\rho_{\epsilon}}\circ \omega_{f_{\epsilon}}\biggl(
                (|n|-N_T)\delta_{+}
            + 
               d m(\epsilon)\, 
                \Big(
                    w(|n|) 
                +
                   \operatorname{diam}(K) 
                +
                    w(N_T)
                \Big)
        \biggr)
.
\]
Then \eqref{eq:PROOF_thrm_main__Step4_Case0_first_inequality__CompletedRHSBound} implies that 
\begin{equation}\label{eq:eps4a}
        \sup_{|n|>N_T}\sup_{\xb\in \kkk}\,
        \frac{
        d_{\yyy}\left(
        F^{\rho_\epsilon,f_\epsilon}(\xb)_{t_n}
            ,
        F^{\rho_\epsilon,f_\epsilon} (\xb)_{t_{N_T}}
        \right)}{c'(n,\epsilon)}
    \leq
        \frac{\epsilon}{4}
.
\end{equation}
Thus, in this case, term~\eqref{PROOF_thrm_main__2_main_inequality_to_control_Term_C_Ergodicity_Type_Term} is controlled by $ \epsilon/4$.  

\textit{Case} $2$:
% \textit{Case 1:} 
Let $K$ be a compact subset of $\rr^d$, $C,p>0$, and consider the case where
\[
\kkk
    =
K_{C,p}^{\infty}
    =
\left\{\xb\in (\rr^d)^{\zz}:\,
x_0\in K,\,(\forall n\in \zz)\,
\|\Delta_n \xb\|^p \leq C 
    |\Delta t_n|
\right\}
.
\]
If $\xb\in \kkk$, then for every $n,n'\in \zz$ with $n< n'$ we have that
\allowdisplaybreaks
\begin{align}
\nonumber
\|x_{t_n} - x_{t_{n'}}\|
\leq &
\sum_{k=n-1}^{n'}\|\Delta_k\xb\|
\\
\nonumber
\leq &
 C^{\frac1{p}} \sum_{k=n}^{n'-1}|\Delta t_k|^{\frac1{p}}
\\
\leq &
(n'-n) C^{\frac1{p}} {\delta_+}^{\frac1{p}}<\infty,
\label{PROOF_thrm_main__Step4_Case1_first_inequality}
\end{align}
where we have applied Assumption~\ref{assumption_of_regular_gridmesh} to derive the last inequality.

Since $F$ is an AC map then, $f_{\epsilon}$ and $\rho_{\epsilon}$ are uniformly continuous, with respective moduli of continuity $\omega_{f_{\epsilon}}(u)=L_{\alpha,f_{\epsilon}}|u|^{\alpha}$ and $\omega_{\rho_{\epsilon}}(u)=L_{\alpha,\rho_{\epsilon}}|u|^{\alpha}$ for some $L_{\alpha,f_{\epsilon}},L_{\alpha,\rho_{\epsilon}}\geq 0$ and some $\alpha \in (0,1]$.
Thus, for every $n,n'\in \zz$ with $n<n'$ and every $\xb\in \kkk$, we estimate
\allowdisplaybreaks
\begin{align}
\nonumber
d_{\yyy}\left(
    F^{\rho_\epsilon,f_\epsilon}(\xb)_{t_n}
        ,
    F^{\rho_\epsilon,f_\epsilon}(\xb)_{t_{n'}}
\right)
& = 
d_{\yyy}\left(
\rho_{\epsilon}\circ f_{\epsilon}(t_n,x_{t_{n-dm(\epsilon)}:t_n})
    ,
\rho_{\epsilon}\circ f_{\epsilon}(t_{n'},x_{t_{n'-dm(\epsilon)}:t_{n'}})
\right)
\\
\nonumber
& \leq 
\omega_{\rho_{\epsilon}}\circ \omega_{f_{\epsilon}}\left(
|t_{n'}-t_n|
+
\|
x_{t_{n-dm(\epsilon)}:t_n}
    - 
x_{t_{n'-dm(\epsilon)}:t_{n'}}
\|
\right)
\\
\label{PROOF_thrm_main__Step4_Case1_first_inequality_implied_estimateinY}
& \leq 
\omega_{\rho_{\epsilon}}\circ \omega_{f_{\epsilon}}\left(
|t_{n'}-t_n|
+
\sum_{i=n-dm(\epsilon)}^n
\|
x_{t_i} 
    - 
x_{t_{i+n'-n}}
\|
\right)
\\
&
\leq 
\omega_{\rho_{\epsilon}}\circ \omega_{f_{\epsilon}}\left(
(n'-n)\delta_+
+
(dm(\epsilon) +1)(n'-n) C^{\frac1{p}} {\delta_+}^{\frac1{p}}
\right)
\label{PROOF_thrm_main__Step4_Case1_first_inequality_x}
\\
& = 
\label{PROOF_thrm_main__Step4_Case1_first_inequality_implied_estimateinY_completed_estimate}
\omega_{\rho_{\epsilon}}\circ \omega_{f_{\epsilon}}\left(
|n'-n|\delta_+
\left[
    1
+
    (dm(\epsilon)+1) C^{\frac1{p}} {\delta_+}^{\frac{1-p}{p}}
\right]
\right),
\end{align}
where we used the monotonicity of the moduli of continuity $\omega_{f_{\epsilon}}$ and $\omega_{\rho_{\epsilon}}$ in~\eqref{PROOF_thrm_main__Step4_Case1_first_inequality_implied_estimateinY}, and the estimate in \eqref{PROOF_thrm_main__Step4_Case1_first_inequality} to get \eqref{PROOF_thrm_main__Step4_Case1_first_inequality_x}.  
Now define
\[
    \tilde{c}
    % c
    :\zz \ni n 
        \mapsto
    \frac{4}{\epsilon}
    \omega_{\rho_{\epsilon}}\circ \omega_{f_{\epsilon}}\left(
    (|n|-N_T)
    \delta_+
        [
        1
    +
        (dm(\epsilon) +1) C^{\frac1{p}} {\delta_+}^{\frac{1-p}{p}}
    ]
    \right)
    .
\]
Then \eqref{PROOF_thrm_main__Step4_Case1_first_inequality_implied_estimateinY_completed_estimate} implies that 
\begin{equation}\label{eq:eps4}
\sup_{|n|>N_T}\sup_{\xb\in \kkk}\,
\frac{
d_{\yyy}\left(
F^{\rho_\epsilon,f_\epsilon}(\xb)_{t_n}
    ,
F^{\rho_\epsilon,f_\epsilon} (\xb)_{t_{N_T}}
\right)}{c'(n,\epsilon)}
\leq
 \frac{\epsilon}{4}
.
\end{equation}
Thus, in this case, term~\eqref{PROOF_thrm_main__2_main_inequality_to_control_Term_C_Ergodicity_Type_Term} is controlled by $ \epsilon/4$.  

\textit{Case} $3$:
% \textit{Case 2:} 
Let $K$ be a compact subset of $\rr^d$, $C>0$, $p\geq 1$, and $\alpha<1-p$.  Consider the case where
\[
\kkk=K_{C,p}^{\alpha}
=\left\{\xb\in (\rr^d)^{\zz}:\,
x_0\in K,\,
\sum_{n\in \zz}
\,
\frac{
\|\Delta_n \xb\|^p}
{
    |\Delta t_n| {|n|_{++}}^{\alpha}
} \leq C
\right\}
.
\]
For $\xb\in \kkk$, then, for every $n,n'\in \zz$ with $n< n'$, we have that
\allowdisplaybreaks
\begin{align}
\nonumber
    \|x_{t_n} - x_{t_{n'}}\|
\leq &
    \sum_{k=n-1}^{n'}
        \|\Delta_k\xb\|
\\
\nonumber
= &
    \sum_{k=n-1}^{n'} 
        \frac{
            \|\Delta_k\xb\|
        }{
            |\Delta t_k|^{\frac1{p}}{|k|_{++}}^{\frac{\alpha}{p}}
        }   
            \,
        |\Delta t_k|^{\frac1{p}}{|k|_{++}}^{\frac{\alpha}{p}}
\\
\nonumber
\leq &
    \left(
    \sum_{k=n-1}^{n'} 
        \left(
            \frac{
                \|\Delta_k\xb\|
            }{
                |\Delta t_k|^{\frac1{p}}{|k|_{++}}^{\frac{\alpha}{p}}
            } 
        \right)^p
    \right)^{\frac1{p}}
            \,
    \left(
    \sum_{k=n-1}^{n'} 
        \left(
            |\Delta t_k|^{\frac1{p}}{|k|_{++}}^{\frac{\alpha}{p}}
        \right)^{\frac{p}{p-1}}
    \right)^{\frac{p-1}{p}}
\\
\nonumber
= &
    \left(
    \sum_{k=n-1}^{n'} 
            \frac{
                \|\Delta_k\xb\|^p
            }{
                |\Delta t_k|{|k|_{++}}^{\alpha}
            } 
    \right)^{\frac1{p}}
            \,
    \left(
    \sum_{k=n-1}^{n'} 
            |\Delta t_k|^{\frac1{p-1}}{|k|_{++}}^{\frac{\alpha}{p-1}}
    \right)^{\frac{p-1}{p}} 
    \\
\nonumber
\leq &
    C^{\frac{1}{p}}
    \left(
    \sum_{k=n-1}^{n'} 
            |\Delta t_k|^{\frac1{p-1}}{|k|_{++}}^{\frac{\alpha}{p-1}}
    \right)^{\frac{p-1}{p}} 
\\
\label{PROOF_thrm_main__Step4_Case2_fist_inequality_a}
\leq &
    C^{\frac{1}{p}}
    {\delta_+}^{\frac{1}{p}}
    \left(
    \sum_{k=n-1}^{n'} 
            {|k|_{++}}^{\frac{\alpha}{p-1}}
    \right)^{\frac{p-1}{p}}
    .
\end{align}
Since $\frac{\alpha}{p-1}<-1$, then $\sum_{z\in \zz} {|z|_{++}}^{\frac{\alpha}{p-1}}$ is a p-series and it converges to $1+2\zeta(\frac{\alpha}{p-1})<\infty$, where $\zeta$ denotes the Riemann zeta-function.\footnote{
The Riemann zeta-function is defined by 
$\zeta:\rr\ni s\mapsto \sum_{n=1}^{\infty} n^{-s} \in [-\infty,\infty]$
. }
Therefore, the right-hand side of~\eqref{PROOF_thrm_main__Step4_Case2_fist_inequality_a} implies the estimate
\begin{equation}
    \|x_{t_n} - x_{t_{n'}}\|    
\leq 
    C^{\frac{1}{p}}
    {\delta_+}^{\frac{1}{p}}
 \left( 1+2 \zeta  \alpha/(p-1)
    \right)^{\frac{p-1}{p}}
    \label{PROOF_thrm_main__Step4_Case2_fist_inequality_b}
    .
\end{equation}

Analogously to the previous case, we have that for every $n,n'\in \zz$ and every $\xb\in \kkk$,~\eqref{PROOF_thrm_main__Step4_Case2_fist_inequality_b} implies the estimate
\begin{equation}
    \label{PROOF_thrm_main__Step4_Case2_first_inequality_implied_estimateinY_completed_estimate}
d_{\yyy}\left(
F^{\rho_\epsilon,f_\epsilon}(\xb)_{t_n}
    ,
F^{\rho_\epsilon,f_\epsilon}(\xb)_{t_{n'}}
\right)\leq \omega_{\rho_{\epsilon}}\circ \omega_{f_{\epsilon}}\left(
    |n'-n|
    \delta_+ +(dm(\epsilon)+1)C^{\frac{1}{p}}
    {\delta_+}^{\frac{1}{p}}
     \left( 1+2 \zeta  \alpha/(p-1)
    \right)^{\frac{p-1}{p}}
    \right).
\end{equation}

By defining the extrapolation function as
\[
    \tilde{c}
    :\zz \ni n
        \mapsto
     \frac{4}{\epsilon}
    \omega_{\rho_{\epsilon}}\circ \omega_{f_{\epsilon}}\left(
        (|n|-N_T)
    \delta_+ +(dm(\epsilon)+1)C^{\frac{1}{p}}
        {\delta_+}^{\frac{1}{p}}
         \left( 1+2 \zeta  \alpha/(p-1)
        \right)^{\frac{p-1}{p}}
        \right),
\]
we obtain the same estimate as in \eqref{eq:eps4}.

\textit{Case} $4$:
% \textit{Case 3:} 
Let $K$ be a compact subset of $\rr^d$, and consider the case where $\kkk=K^{\zz}$. For $\xb\in \kkk$ then, for every $n,n'\in \zz$, we have that
\begin{align}
    d_{\yyy}\left(
F^{\rho_\epsilon,f_\epsilon}(\xb)_{t_n}
    ,
F^{\rho_\epsilon,f_\epsilon}(\xb)_{t_{n'}}
\right)
\leq &
        \omega_{\rho_{\epsilon}}\circ \omega_{f_{\epsilon}}\left(
            |t_n-t_{n'}|
                +
            \|x_{t_{n-dm(\epsilon)}:t_n} - x_{t_{n'-dm(\epsilon)}:t_{n'}}\|
        \right)
    \\
\leq &
        \omega_{\rho_{\epsilon}} \circ \omega_{f_{\epsilon}}\left(
            |n-n'| \delta_+
                +
            (dm(\epsilon)+1)
            \operatorname{diam}(K)
        \right)
.
\label{eq_PROOF_MAIN_Cylender_case_3}
\end{align}
Therefore, we may define the extrapolation function by
\[
    \tilde{c}
    :\zz \ni n
        \mapsto
        \frac{4}{\epsilon}
    \omega_{\rho_{\epsilon}} 
        \circ 
    \omega_{f_{\epsilon}}\left(
            (|n|-N_T){\delta_+}
                +
             (dm(\epsilon)+1)\operatorname{diam}(K)
        \right),
\]
and obtain the same estimate as in \eqref{eq:eps4}.

\textit{Case} $5$:
% \textit{Case 4:}
Let $\kkk\subseteq (\rr^d)^{\zz}$ be an arbitrary compact set. In this case we define the extrapolation function by
\[
    \tilde{c}
    :\zz \ni n\mapsto 
    \max_{\xb\in \kkk,\, k\leq |n|}\,
    \,
    \frac{4}{\epsilon}
    \max\left\{1,
    d_{\yyy}\left(
    F^{\rho_\epsilon,f_\epsilon}(\xb)_{t_k}
        ,
    F^{\rho_\epsilon,f_\epsilon} (\xb)_{t_{n}}
    \right)
    \right\},
\]
and obtain the same estimate as in \eqref{eq:eps4}.
\end{proof}

\begin{proof}[{Proof of Corollary~\ref{cor_reservoir_analogue___time_homogeneous_case}}]
The proof is exactly as in Theorem~\ref{thrm_main_DynamicCase} without the term in time in~\eqref{eq_PROOF_MAIN_Cylender_case_3}:
\[
d_{\yyy}\left(
F^{\rho_\epsilon,f_\epsilon}(\xb)_{t_n}
    ,
F^{\rho_\epsilon,f_\epsilon}(\xb)_{t_{n'}}
\right)
    \leq 
\omega_{\rho_{\epsilon}}\circ 
\omega_{f_{\epsilon}}\left(
(dm(\epsilon)+1)
\operatorname{diam}(K)
\right).
\]
\end{proof}

\section{Acknowledgements}
\label{s_Acknowledgements}

This research was supported by the ETH Z\"{u}rich Foundation and the European Research Council (ERC) Starting Grant 852821—SWING.  The authors would like to thank Juan-Pablo Ortega for his helpful discussion surrounding the fading memory property, as well as Behnoosh Zamanlooy, Giulia Livieri, and Ivan Dokmani\'{c} for their valuable feedback.  The authors would also like to thank Valentin Debarnot for his very helpful model-visualization advice.

\bibliographystyle{plainnat}
\bibliography{2_References}
\end{document}